\newcommandx{\mnote}[2][1=]{\todo[linecolor=red,backgroundcolor=red!25,bordercolor=red,#1]{M: #2}}
\newcommandx{\unsure}[2][1=]{\todo[linecolor=blue,backgroundcolor=blue!25,bordercolor=blue,#1]{#2}}
\newcommandx{\knote}[2][1=]{\todo[linecolor=OliveGreen,backgroundcolor=OliveGreen!25,bordercolor=OliveGreen,#1]{#2}}
\newcommandx{\improvement}[2][1=]{\todo[linecolor=Plum,backgroundcolor=Plum!25,bordercolor=Plum,#1]{#2}}
\newcommandx{\cut}[2][1=]{\todo[linecolor=Plum,backgroundcolor=Plum!25,bordercolor=Plum,#1]{Cut: #2}}
\newcommandx{\thiswillnotshow}[2][1=]{\todo[disable,#1]{#2}}
\newcommand{\newref}[2][]{\hyperref[#2]{#1~\ref*{#2}}}
\renewcommand{\eqref}[1]{\hyperref[#1]{(\ref*{#1})}}
\numberwithin{equation}{section}
\newcommand{\sref}[1]{\newref[Section]{#1}}
\newcommand{\tref}[1]{\newref[Theorem]{#1}}
\newcommand{\lref}[1]{\newref[Lemma]{#1}}
\newcommand{\cref}[1]{\newref[Corollary]{#1}}
\newcommand{\eref}[1]{\newref[Equation]{#1}}
\newcommand{\clref}[1]{\newref[Claim]{#1}}
\newcommand{\bmx}{\bm{\widetilde{x}}}
\newcommand{\bmw}{\bm{\widetilde{w}}}
\theoremstyle{plain}
\newtheorem{theorem}{Theorem}[section]
\newtheorem{lemma}[theorem]{Lemma}
\newtheorem{claim}[theorem]{Claim}
\newtheorem{corollary}[theorem]{Corollary}
\newtheorem{definition}[theorem]{Definition}
\newtheorem{fact}[theorem]{Fact}
\theoremstyle{definition}
\newcommand{\bkets}[1]{\left(#1\right)}
\newcommand{\sbkets}[1]{\left[#1\right]}
\newcommand{\abs}[1]{\left|#1\right|}
\DeclareMathOperator*{\pr}{\mathsf{Pr}}
\DeclareMathOperator*{\ex}{\mathbb{E}}
\newcommand{\iprod}[2]{#1{\mathbf \cdot} #2}
\renewcommand{\cal}[1]{\mathcal{#1}}
\def\inpw#1,#2{\langle #1, #2\rangle}
\newcommand{\reals}{\mathbb{R}}
\newcommand{\R}{\reals}
\newcommand{\dpm}{\{1,-1\}}
\newcommand{\note}[1]{\marginpar{\tiny *note in TeX*}}
\newcommand{\ignore}[1]{}
\newcommand{\calD}{{\cal D}}
\renewcommand{\phi}{\varphi}
\renewcommand{\epsilon}{\varepsilon}
\newcommand{\D}{{{\cal D}}}
\newcommand{\spty}{\lambda}
\author{Adam R. Klivans\thanks{Department of Computer Science,
    University of Texas at Austin, \texttt{klivans@cs.utexas.edu}. Part of this work was done while the authors were visiting the Simons Institute for Theoretical Computer Science, Berkeley.}
  \and Raghu Meka\thanks{Department of Computer Science, UCLA, \texttt{raghum@cs.ucla.edu}. Supported by NSF Career Award CCF-1553605.}}
\title{Learning Graphical Models Using Multiplicative Weights}
\date{}
\begin{document}

\maketitle
	
	\thispagestyle{empty}
	\begin{abstract}
          We give a simple, multiplicative-weight update algorithm for
          learning undirected graphical models or Markov random fields
          (MRFs).  The approach is new, and for the well-studied case
          of Ising models or Boltzmann machines\mnote{Removed bounded
            degree here.}, we obtain an algorithm that uses a nearly
          optimal number of samples and has running time
          $\tilde{O}(n^2)$ (where $n$ is the dimension), subsuming and improving on all prior work.
          Additionally, we give the first efficient algorithm
          for learning Ising models over {\em non-binary} alphabets. 
          
          Our main application is an algorithm for learning the structure of $t$-wise MRFs with nearly-optimal sample complexity (up to polynomial losses in necessary terms that depend on the weights) and running time that is $n^{O(t)}$. In addition, given $n^{O(t)}$ samples, we can also learn the parameters of the model and generate a hypothesis that is close in statistical distance to the true MRF. 
          \ignore{Our main application is an algorithm for learning $t$-wise
          MRFs with sample complexity
          $n^{O(t)}$ (where we suppress some necessary terms that
          depend on the weights) and running time $n^{O(t)}$.  Our
          algorithm both reconstructs the underlying graph and
          generates a hypothesis that is close in statistical distance
          to the underlying probability distribution.} All prior work
          runs in time $n^{\Omega(d)}$ for graphs of bounded degree
          $d$ and does not generate a hypothesis close in statistical
          distance even for $t = 3$.  We observe that our runtime has
          the correct dependence on $n$ and $t$ assuming the hardness
          of learning sparse parities with noise.

          Our algorithm-- the {\em Sparsitron}-- is easy to implement
          (has only one parameter) and holds in the on-line setting.
          Its analysis applies a regret bound from Freund and
          Schapire's classic Hedge algorithm.  It also gives the first solution to the
          problem of learning sparse Generalized Linear Models (GLMs).
	\end{abstract}
	
	\newpage
	\setcounter{page}{1}
	\section{Introduction}
      Undirected graphical models or \emph{Markov random fields}
        (MRFs) are one of the most well-studied and influential
        probabilsitic models with applications to a wide range of
        scientific disciplines
        \cite{KollerFriedman,Lauritzen,MRS13,Hinton,KFL,Sal09,Clifford,JEMF}.
        Here we focus on binary undirected graphical models which are
        distributions $(Z_1,\ldots,Z_n)$ on $\dpm^n$ with an
        associated undirected graph $G$ - known as the
        \emph{dependency graph} - on $n$ vertices where each $Z_i$
        conditioned on the values of $(Z_j: j \text{ adjacent to $i$
          in $G$})$ is independent of the remaining variables.

\ignore{variables $Z_j$ once conditioned on 
  Let $Z_{1},\ldots,Z_{n}$ be a random draw from a probability
  distribution $P$ on $\{-1,1\}^n$ that is strictly positive.  Then
  $P$ can be represented as an {\em undirected graphical model} or {\em Markov
    Random Field} (MRF) $G$, an undirected graph on $n$ vertices where
  $Z_{i}$ is independent of $Z_{j}$ conditioned on the values of the
  other variables {\em if and only if} the edge $(i,j)$ does not appear in
  $G$.}

Developing efficient algorithms for inferring the structure of the
underlying graph $G$ from random samples from $\calD$ is a central
problem in machine learning, statistics, physics, and computer science
\cite{AKN06,KargerSrebro,WRL,BMS,NBSS,TandonRavikumar} and has attracted considerable attention from researchers in
these fields.  A famous early example of such an algorithmic result is
due to Chow and Liu from 1968 \cite{ChowLiu} who gave an efficient algorithm for
learning graphical models where the underlying graph is a tree.
Subsequent work considered generalizations of trees \cite{ATHW} and graphs under
various strong assumptions (e.g., restricted strong convexity
\cite{NRWY} or correlation decay \cite{BMS,RSS}).

  The current frontier of MRF learning has focused on the {\em
    Ising model} (also known as \emph{Boltzmann machines}) on \emph{bounded-degree graphs}, a special class of
  graphical models with only \emph{pairwise interactions} and each vertex having degree at most $d$ in the underlying dependency graph. We refer to \cite{Bre} for an extensive historical overview of the problem. Two important works of note are due to Bresler \cite{Bre}
  and \cite{VMLC} \mnote{I prefer not to use et al. notation.} who learn Ising models on bounded degree graphs. \cut{without any
  assumptions on the graph structure other than necessary non-degeneracy
  conditions.}

  Bresler's algorithm is a combinatorial (greedy) approach that runs
  in time $\tilde{O}(n^2)$ but requires doubly exponential in $d$
  many samples from the distribution (only singly exponential is
  necessary).  \cite{VMLC} use machinery from convex programming to
  achieve nearly optimal sample complexity for learning Ising models
  with zero external field and with running
  time $\tilde{O}(n^4)$.  Neither of these results are proved to hold over non-binary
  alphabets or for general MRFs.

\subsection{Our Results}

The main contribution of this paper is a simple, multiplicative-weight
update algorithm for learning MRFs.  Using our
algorithm we obtain the following new results:

\begin{itemize}

\item An efficient online algorithm for learning Ising models on arbitrary
  graphs with nearly optimal sample complexity and running time
  $\tilde{O}(n^2)$ per example (precise statements can be
  found in \sref{sec:ising}).  In particular, for bounded degree
  graphs we achieve a run-time of $\tilde{O}(n^2)$ with nearly optimal sample
  complexity.   This subsumes and improves all prior work
  including the above mentioned results of Bresler \cite{Bre} and \cite{VMLC}.
  Our algorithm is the first that works even for {\em unbounded-degree} graphs
  as long as the $\ell_1$ norm of the weight vector of each
  neighborhood is bounded, a condition necessary for
  efficiency (see discussion following \cref{cor:ising}).

\item An algorithm for learning the dependency graph of binary $t$-wise Markov random fields
  with nearly optimal sample complexity and run-time $n^{O(t)}$ (precise statements can be
  found in \sref{sec:mrf}).  Moreover, given access to roughly $n^{O(t)}$ samples (suppressing necessary terms depending on the weights), we can also reconstruct the parameters of the model and output a $t$-wise MRF that gives a point-wise approximation to the original distribution. 

\ignore{An algorithm for learning general $t$-wise Markov random fields
  with sample complexity roughly $M \approx n^{O(t)}$ and running time
  $O(M \cdot n^{t})$ (we suppress some necessary terms that
  depend on the weights; precise statements can be
  found in Section \ref{sec:mrf}); precise statements can be
  found in \sref{sec:mrf}).  We reconstruct both the underlying graph and
  output a function $f$ that generates a distribution arbitrarily
  close in statistical distance.}

\end{itemize}
 As far as we are aware, these are the {\em first efficient algorithms} for
 learning higher-order MRFs.  All previous work on learning general
  $t$-wise MRFs
  runs in time $n^{\Omega(d)}$ (where $d$ is the underlying degree of
  the graph) and does not output a function $f$ that
  can generate an approximation to the distribution in statistical
  distance, {\em even for the special case of} $t=3$.  We give
  evidence that the $n^{O(t)}$ dependence in our running time is
  nearly optimal by applying a simple reduction from the problem of learning sparse
  parities with noise on $t$ variables to learning $t$-wise MRFs due
  to Bresler, Gamarnik, and Shah \cite{BGS}
  (learning sparse parities with noise is a notoriously difficult
  challenge in theoretical computer science). Bresler \cite{Bre}
  observed that even for the simplest possible Ising model where the
  graph has a single edge, beating $O(n^2)$ run-time corresponds to
  fast algorithms for the well-studied {\em light bulb} problem
  \cite{Valiant88}, for which the best known algorithm runs in time
  $O(n^{1.62})$ \cite{Valiant15}. 

  Moreover, our algorithm is easy to implement, has only one tunable parameter, and works in an on-line
  fashion.  The algorithm-- the {\em Sparsitron}-- solves the problem
  of learning a sparse Generalized Linear Model.  That is, given
  examples $(X,Y) \in [-1,1]^n \times [0,1]$ drawn from a distribution
  ${\cal D}$ with the property that $\ex[Y | X = x] = \sigma(w \cdot x)$
  for some monotonic, Lipschitz $\sigma$ and unknown $w$ with $\|w\|_1 \leq \spty$, the
  {\em Sparsitron} efficiently outputs a $w'$ such that $\sigma(w' \cdot x)$ is
  close to $\sigma(w \cdot x)$ in \emph{squared-loss} and has sample complexity $O(\spty^2 \log
  n)$.

  In an independent and concurrent work, Hamilton, Koehler, and Moitra
  \cite{HKM} generalized Bresler's approach to hold for both
  higher-order MRFs as well as MRFs over general (non-binary) alphabets.   For
  learning binary MRFs on bounded-degree---degree at most $d$---graphs, under
  the same non-degeneracy assumption taken by Hamilton et al.,\footnote{A previous version of this manuscript needed a
    slightly stronger non-degeneracy assumption.} we obtain sample complexity that is singly exponential in $d^t$, whereas theirs is doubly exponential in $d^t$ (both of our papers obtain sample complexity that depends only logarithmically on $n$, the number of vertices).
  
\ignore{   For the 
  non-degeneracy assumptions taken by Hamilton et
  al. \cite{HKM}\footnote{A previous version of this manuscript took a
    slightly stronger non-degeneracy assumption.}, we
  obtain sample complexity that is singly exponential in $d^{t}$
  whereas theirs is doubly exponential in $d^{t}$ (both of our papers
  obtain sample complexity that depends only logarithmically on $n$, the
  number of vertices).}

\ignore{

\begin{itemize}

\item Inferring the structure of graphical models a central problem in
  computer science, machine learning, and statistics.  In particular,
  learning distributions encoded by a binary Ising model has been
  intensely studied.
}

\subsection{Our Approach}

For a graph $G = (V,E)$ on $n$ vertices, let $C_t(G)$ denote all
cliques of size at most $t$ in $G$.  We use the Hammersley-Clifford
characterization of Markov random fields and define a binary $t$-wise
Markov random field on $G$ to be a distribution $\calD$ on $\dpm^n$
where $$\pr_{Z \sim \calD}[Z= z] \propto \exp\bkets{\sum_{I \in C_t(G)} \psi_I(z)},$$
and each $\psi_I:\R^n \to \R$ is a function that depends only on the
variables in $I$. 

For ease of exposition, we will continue with the case of $t=2$, the
Ising model, and subsequently describe the extension to larger values
of $t$.  Let $\sigma(z)$ denote the {\em sigmoid} function.  That is
$\sigma(z) = 1/1 + e^{-z}$.   Since $t=2$, we have

$$\pr\sbkets{Z = z} \propto \exp\bkets{\sum_{i \neq j \in [n]} A_{ij}
  z_i z_j + \sum_i \theta_i z_i}$$

for a weight matrix $A \in \R^{n \times n}$ and $\theta \in \R^n$; here, a weight $A_{ij} \neq 0$ if and only if $\{i,j\}$ is an edge in the underlying dependency graph. For
a node $Z_{i}$, it is easy to see that the probability $Z_{i} = -1$
conditioned on any setting of the remaining nodes to some value
$x \in \{-1,1\}^{[n] \setminus \{i\}}$ is equal to
$\sigma(w \cdot x + \theta)$ where $w \in \R^{[n] \setminus \{i\}}$, $w_j = -2 A_{ij}$, $\theta = -\theta_i$.

As such, if we set $X \equiv (Z_j: j \neq i)$ and $Y = (1-Z_i)/2$, then the conditional expectation of $Y$ given $X$ is {\em equal} to a sigmoid with an unknown weight vector $w$ and threshold $\theta_{i}$.  We
can now rephrase our original {\em unsupervised} learning task as the following {\em supervised} learning
problem: Given random examples $(X,Y)$ with conditional
mean function $\ex[Y | X = x] = \sigma(w \cdot x + \theta)$, recover $w$ and
$\theta$.

Learning a conditional mean function of the form $u(w \cdot x)$ with a
fixed, known \emph{transfer function} $u:\R \to \R$ is {\em precisely} the problem of learning a {\em
  Generalized Linear Model} or GLM and has been studied extensively in
machine learning.  The first provably efficient algorithm for learning
GLMs where $u$ is both monotone and Lipschitz was given by Kalai and
Sastry \cite{KalaiSastry09}, who called their algorithm the ``Isotron''.  Their result was
simplified, improved, and extended by Kakade, Kalai, Kanade, and Shamir \cite{KKKS} who introduced the
``GLMtron'' algorithm.

Notice that $\sigma(z)$ is both monotone and $1$-Lipschitz.
Therefore, directly applying the GLMtron in our setting will result in a
$w' $ and $\theta'$ such that
\begin{equation}\label{eq:intro1}
\ex[(\sigma(w' \cdot x + \theta') - \sigma(w \cdot x + \theta))^2]
\leq \epsilon.
\end{equation}

Unfortunately, the sample complexity of the GLMtron depends on
$\|w\|_2$, which results in sub-optimal bounds on sample complexity for our setting\footnote{GLMtron in our setting would require $\Omega(n)$ samples; we are aiming for an information-theoretically optimal logarithmic dependence in the dimension $n$.}.
We desire sample complexity dependent on $\|w\|_1$, essentially the
{\em sparsity} of $w$.  In addition, we need an {\em exact recovery}
algorithm.  That is, we need to ensure that $w'$ itself is close to
$w$ and not just that the \emph{$\ell_2$-error} as in
\eref{eq:intro1} is small. We address these two challenges next.

Our algorithm, the \emph{Sparsitron}, uses a multiplicative-weight update
rule for learning $w$, as opposed to the GLMtron or Isotron, both of
which use additive update rules.  This enables us to achieve
essentially optimal sample complexity.  The Sparsitron is simple to
describe (see Algorithm \ref{hedge}) and depends on only one parameter $\spty$, the
upper bound on the $\ell_1$-norm.  Its analysis only uses a regret bound from the classic
Hedge algorithm due to Freund and Schapire \cite{FS}.

Although the Sparsitron algorithm finds a vector $w' \in \R^n$ such that $\ex_X[(\sigma(w' \cdot X
+ \theta') - \sigma(w \cdot X + \theta))^2]$ is small, we still must
prove that $w'$ is actually close to $w$. Achieving such strong recovery guarantees for arbitrary distributions is typically a much harder problem (and can be provably hard in some cases for related problems \cite{FGKP, GR}). In our case, we exploit the nature of MRFs by a clean property of such distributions: Call a distribution $\calD$ on $\dpm^n$ \emph{$\delta$-unbiased} if each variable $Z_i$ is $1$ or $-1$ with probability at least $\delta$ conditioned on any setting of the other variables. It turns out that under conditions that are necessary for reconstruction, the distributions of MRFs are $\delta$-unbiased for a non-negligible $\delta$. We show that for such $\delta$-unbiased distributions achieving reasonably small $\ell_2$-error as in \eref{eq:intro1} implies that the recovered coefficient $w'$ is in fact close to $w$. 

\ignore{Here we use the fact that the non-degeneracy conditions of the MRF imply that the underlying distribution on $Z$ is $\delta$-unbiased (i.e., each variable $Z_i$ is $1$ or $-1$ with probability at least $\delta$ conditioned on any setting of the other variables).  For general distributions, this type of proper learning is often computationally intractable.}

To obtain our results for learning $t$-wise Markov random fields, we generalize the above approach to handle functions of the form $\sigma(p(x))$ where $p$ is a degree $t$ multilinear polynomial. Sparsitron can be straightforwardly extended to handle low-degree polynomials by \emph{linearizing} such polynomials (i.e., working in the $(n^t)$-dimensional space of coefficients). We then have to show that achieving small $\ell_2$-error - $\ex_X[(\sigma(p(X)) - \sigma(q(X)))^2] \ll 1$ - implies that the polynomials $p,q$ are close. This presents several additional technical challenges; still, in a self-contained proof, we show this holds whenever the underlying distribution is $\delta$-unbiased as is the case for MRFs. 


\subsection{Best-Experts Interpretation of Our Algorithm}

Our algorithm can be viewed as a surprisingly simple weighted voting
scheme (a.k.a. ``Best-Experts'' strategy) to uncover the underlying
graph structure $G = (\{v_1,\ldots,v_n\}, E)$ of a Markov random field.  Consider an Ising model
where for a fixed vertex $v_i$, we want to determine $v_i$'s
neighborhood and edge weights. Let $Z = (Z_1,\ldots,Z_n)$ denote random draws from the Ising model. 
\mnote{Many changes here ...}
\begin{itemize}

\item Initially, all vertices $v_{j} (j \neq i)$ could be neighbors.  We
create a vector of ``candidate'' neighbors of length $2n-2$ with
entries $(j,+)$ and $(j,-)$ for all $j \neq i$.  Intuitively, since
we do not know if node $v_j$ will be negatively or positively
correlated with $v_i$, we include two candidate neighbors, $(j,+), (j,-)$ to cover the two cases.  

\item At the outset, every candidate is equally likely to be a
  neighbor of $v_i$ and so receives an initial {\em weight} of
$1/(2n-2)$.  Now consider a random draw from the Ising model $Z =
(Z_{1},\ldots,Z_{n})$.  For each $j \neq i$ we view each $Z_{j}$
(and its negation -$Z_{j}$) as the {\em vote} of $(j,+)$ for the value $Z_i$ (respectively of $(j,-)$).
The overall {\em prediction} $p$ of our candidates is equal to a weighted
sum of their votes (we always assume the weights are non-negative and normalized appropriately). 

\item For a candidate neighbor $v_{j}$, let the {\em penalty} of the prediction $p$ (as motivated by the conditional mean function) 
  be equal to $\ell_j = (\sigma(-2 p) - (1-Z_i)/2)Z_{j}$. Each candidate $v_j$'s weight is simply multiplied by
  $\beta^{\ell_j}$ (for some suitably chosen \emph{learning rate} $\beta$\footnote{For our analysis, the learning rate can be set using standard techniques, e.g.,  $\beta = 1-\sqrt{\log n/T}$ when processing $T$ examples.}).  It is
  easy to see that candidates who predict $Z_i$ {\em correctly} will be
  penalized {\em less} than neighbors whose predictions are incorrect.

\end{itemize}

Remarkably, the weights of this algorithm will converge to the weights
of the underlying Ising model, and the rate of this
convergence is optimal.  Weights of vertices that are not neighbors of
$v_i$ will rapidly decay to zero.

For clarity, we present the updates for a single iteration of our Sparsitron algorithm applied to Ising model in Algorithm \ref{alg:ising}. The iterative nature of the algorithm is reminiscent of algorithms such as belief propagation and stochastic gradient descent that are commonly used in practice. Exploring connections with these algorithms (if any) is an intriguing question. 

\begin{algorithm}
\caption{Updates for \textsc{Sparsitron} applied to learning Ising models} \label{alg:ising}
Initialize $W_{ij}^+ = W_{ij}^- = 1/2(n-1)$ and $\hat{A}_{ij} = 0$ for $i \neq j$.\\
\textsc{Parameters:} \emph{Sparsity bound} $\lambda$. 
\begin{algorithmic}[1]
\For {each new example $(Z_1,\ldots,Z_n)$}:
\State Compute the current \emph{predictions}: $p_i = \sum_{j\neq i} \hat{A}_{ij} Z_j$ for all $i$.
\For {each $i \neq j$}
\State Compute the penalties: Set $\ell_{ij} = (\sigma(-2 p_i) - (1 - Z_i)/2) \cdot  Z_j$.
\State Update the weights: Set $W_{ij}^+ = W_{ij}^+ \cdot \beta^{\ell_{ij}}$; $W_{ij}^- = W_{ij}^- \cdot \beta^{-\ell_{ij}}$.
\EndFor
\For {each $i \neq j$}
\State Compute edge weights: $\hat{A}_{ij} = \frac{\spty}{\sum_{\ell \neq i} (W_{i\ell}^+ + W_{i\ell}^-)} \cdot \bkets{W_{ij}^+ - W_{ij}^-}.$
\EndFor
\EndFor
\end{algorithmic}
\end{algorithm}

\subsection{Organization}

We begin by describing the Sparsitron algorithm for learning sparse
generalized models and prove its correctness.  We then show, given a
hypothesis output by the Sparsitron, how to recover the underlying
weight vector {\em exactly} under $\delta$-unbiased distributions.
For ease of exposition, we begin by assuming that we are learning an
Ising model. 

We then describe how to handle the more general case of learning
$t$-wise MRFs.  This requires working with multilinear polynomials, and studying their behavior (especially, how small they can be) under $\delta$-unbiased distributions. 


\section{Preliminaries}
We will use the following notations and conventions.
\begin{itemize}
\item For a vector $x \in \R^n$, $x_{-i} \in \R^{[n] \setminus \{i\}}$ denotes $(x_j: j \neq i)$. 

\item We write multilinear polynomials $p:\R^n \to \R$ as  $p(x) = \sum_{I} \hat{p}(I) \prod_{i \in I} x_i$; in particular, $\hat{p}(I)$ denotes the coefficient of the monomial $\prod_{i \in I}x_i$ in the polynomial.  Let $\|p\|_1 = \sum_{I} |\hat{p}(I)|$.

\item For a multilinear polynomial $p:\R^n \to \R$, we let $\partial_i p(x) = \sum_{J: J \not \ni i} \hat{p}( J \cup \{i\}) \prod_{j \in J} x_j$ denote the partial derivative of $p$ with respect to $x_i$. Similarly, for $I \subseteq [n]$, let $\partial_I p(x) = \sum_{J: J \cap I = \emptyset} \widehat{p}(J \cup I) \prod_{j \in J} x_j$ denote the partial derivative of $p$ with respect to the variables $(x_i: i \in I)$. 

\item For a multilinear polynomial $p:\R^n \to \R$, we say $I \subseteq [n] $ is a \emph{maximal monomial} of $p$ if $\hat{p}(J) = 0$ for all $J \supset I$ (i.e., there is no non-zero monomial that strictly contains $I$). 
\end{itemize}


\section{Learning Sparse Generalized Linear Models}
We first describe our {\em Sparsitron} algorithm for learning sparse GLMs.  In the next section we show how to learn MRFs using this algorithm.  The main theorem of this section is the following:

\begin{theorem}\label{th:pconceptmwu}
Let $\calD$ be a distribution on $[-1,1]^n \times \{0,1\}$ where for
$(X,Y) \sim \calD$, $E[Y | X = x] = u(w \cdot x)$ for a non-decreasing $1$-Lipschitz function $u:\R \to [0,1]$. Suppose that $\|w\|_1 \leq \spty$ for a known $\spty \geq 0$. Then, there exists an algorithm that for all $\epsilon, \delta \in [0,1]$ given $T = O(\spty^2(\ln (n/\delta \epsilon))/\epsilon^2)$ independent examples from $\calD$, produces a vector $v \in \R^n$ such that with probability at least $1-\delta$,
\begin{equation}\label{eq:hedge1}
\ex_{(X,Y) \leftarrow \calD}[ (u(v \cdot X) - u(w \cdot X))^2] \leq \epsilon.
\end{equation}

The run-time of the algorithm is $O(n T)$. Moreover, the algorithm can be run in an online manner.
\end{theorem}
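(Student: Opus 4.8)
The plan is to view the task as online learning of a generalized linear model and to instantiate Freund--Schapire's Hedge so that its regret bound turns into a squared-loss bound. The first step is a reduction putting the unknown weight vector into the probability simplex. Writing $w = w^{+}-w^{-}$ with $w^{+},w^{-}\ge 0$ and appending one dummy coordinate, the map $x\mapsto \tilde x := (x,-x,0)\in[-1,1]^{N}$ with $N=2n+1$ gives $w\cdot x = \tilde w\cdot\tilde x$ for a nonnegative $\tilde w$ with $\|\tilde w\|_{1}=\spty$; hence $p^{\star}:=\tilde w/\spty$ is a distribution on $[N]$. So we may assume $w\ge 0$, $\sum_i w_i=\spty$, and $p^{\star}=w/\spty$.

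Next, run Hedge over $N$ experts for $T$ rounds with learning rate $\beta = 1-\sqrt{\ln N/T}$. At round $t$ it holds a distribution $p^{t}$; take as the current hypothesis $v^{t}:=\spty\,p^{t}$ and let its prediction on the example $x^{t}$ be $\widehat y^{t}:=u(v^{t}\cdot x^{t})$. The key is the loss vector
\[
\ell^{t}_{i}\;:=\;\tfrac12\bigl(1+(\widehat y^{t}-y^{t})\,x^{t}_{i}\bigr)\;\in\;[0,1],
\]
which is legitimate since $\widehat y^{t}-y^{t}\in[-1,1]$ and $x^{t}_{i}\in[-1,1]$. Because $p^{t}$ and $p^{\star}$ are distributions the constant term cancels, so $\langle p^{t}-p^{\star},\ell^{t}\rangle=\tfrac{1}{2\spty}(\widehat y^{t}-y^{t})(v^{t}\cdot x^{t}-w\cdot x^{t})$. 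Conditioning on the first $t-1$ examples and on $x^{t}$, the factor $\widehat y^{t}-y^{t}$ has conditional mean $u(v^{t}\cdot x^{t})-u(w\cdot x^{t})$ (here is where $\ex[Y\mid X]=u(w\cdot X)$ enters), and since $u$ is non-decreasing and $1$-Lipschitz, $(u(a)-u(b))(a-b)\ge (u(a)-u(b))^{2}$ for all $a,b$; averaging over $x^{t}$ this yields
\[
\ex\bigl[\langle p^{t}-p^{\star},\ell^{t}\rangle\,\big|\,\mathcal F_{t-1}\bigr]\;\ge\;\tfrac{1}{2\spty}\,\varepsilon_{t},\qquad \varepsilon_{t}:=\ex_{X}\bigl[(u(v^{t}\cdot X)-u(w\cdot X))^{2}\bigr].
\]

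Now combine with the Hedge regret guarantee $\sum_{t=1}^{T}\langle p^{t}-p^{\star},\ell^{t}\rangle\le O(\sqrt{T\ln N})$, which holds for every realization of the sample. Since $|v^{t}\cdot x^{t}|,|w\cdot x^{t}|\le\spty$, each summand $A^{t}:=\langle p^{t}-p^{\star},\ell^{t}\rangle$ satisfies $|A^{t}|\le 1$, so $\{A^{t}-\ex[A^{t}\mid\mathcal F_{t-1}]\}$ is a bounded martingale-difference sequence and Azuma gives, with probability $\ge 1-\delta/2$, $\sum_{t}\ex[A^{t}\mid\mathcal F_{t-1}]\le \sum_{t}A^{t}+O(\sqrt{T\ln(1/\delta)})=O(\sqrt{T\ln(N/\delta)})$. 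Chaining the two bounds, $\tfrac1T\sum_{t}\varepsilon_{t}\le O\bigl(\spty\sqrt{\ln(N/\delta)/T}\,\bigr)$, so choosing $T=O(\spty^{2}\ln(n/\delta)/\varepsilon^{2})$ forces $\min_{t}\varepsilon_{t}\le\varepsilon/2$. To output a single vector, use a fresh batch of $m=O(\varepsilon^{-2}\ln(T/\delta))$ examples and return the iterate $v$ minimizing the empirical squared loss $\tfrac1m\sum(u(v^{t}\cdot x)-y)^{2}$; for squared loss this empirical objective estimates $\varepsilon_{t}$ up to the iterate-independent Bayes term $\ex[(u(w\cdot X)-Y)^{2}]$, so a Hoeffding bound and a union bound over the $T$ iterates show the selected $v$ obeys \eqref{eq:hedge1} with probability $\ge 1-\delta$. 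The online phase performs one $O(n)$ inner product and one $O(n)$ multiplicative reweighting per example, for $O(nT)$ total, and is by construction online.

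The step I expect to be the main obstacle is getting $\delta$ to appear only logarithmically and only inside the sample bound, as the theorem requires: bounding $\ex[\tfrac1T\sum_{t}\varepsilon_{t}]$ and invoking Markov would cost a $1/\delta$ factor, so one must instead concentrate the Hedge losses around their conditional means via Azuma, which is exactly why it matters that $A^{t}$ (and the squared errors) are bounded by absolute constants. The simplex reduction and the regret-to-squared-loss calculation are conceptually central but technically routine.
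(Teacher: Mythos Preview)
Your proposal is correct and follows essentially the same route as the paper's own proof: the same simplex reduction via $(x,-x,0)$, the same Hedge instantiation with loss $\ell^{t}_{i}=\tfrac12(1+(\widehat y^{t}-y^{t})x^{t}_{i})$, the same use of $(a-b)(u(a)-u(b))\ge(u(a)-u(b))^{2}$ to lower-bound the conditional mean of the instantaneous regret by $\varepsilon_{t}/(2\spty)$, the same Azuma step to get the $\log(1/\delta)$ dependence, and the same fresh-batch selection of the best iterate. Your handling of the validation step is in fact slightly more careful than the paper's: you correctly observe that the empirical objective $(1/m)\sum(u(v^{t}\cdot a)-b)^{2}$ estimates $\varepsilon_{t}$ only up to the iterate-independent Bayes term $\ex[(u(w\cdot X)-Y)^{2}]$, so that minimizing it still selects a near-optimal iterate; the paper's Fact~\ref{lm:empapprox} glosses over this additive constant.
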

\begin{proof}
We assume without loss of generality that $w_i \geq 0$ for all $i$ and that $\|w\|_1 = \spty$; if not, we can map examples $(x,y)$ to $((x,-x,0),y)$ and work in the new space. For any vector $v \in \R^n$, define the \emph{risk} of $v$ $\epsilon(v) = \ex_{(X,Y) \sim \calD}[(u(v \cdot X) - u(w \cdot X))^2]$. Let $\bm{1}$ denote the all $1$'s vector. 

Our approach is to use the regret bound for the \emph{Hedge} algorithm of Freund and Schapire \cite{FS}.   Let $T \geq 1$, $\beta \in [0,1]$ be parameters to be chosen later and $M = C''' T \ln(1/\delta)/\epsilon^2$ for a constant $C'''$ to be chosen later. The algorithm is shown in Algorithm \ref{hedge}. The inputs to the algorithm are $T+M$ independent examples $(x^1,y^1,),\ldots,(x^T, y^T)$ and $(a^1,b^1),\ldots,(a^M,b^M)$ drawn from $\calD$. 

\begin{algorithm}
\caption{\textsc{Sparsitron} \label{hedge}}
\begin{algorithmic}[1]
\State Initialize $w^0 = \bm{1}/n$. 
\For {$t = 1,\ldots,T$}
\State Let $p^t = w^{t-1}/\|w^{t-1}\|_1$. 
\State Define $\ell^t \in \R^n$ by $\ell^t = (1/2) (\bm{1} + (u(\spty p^t \cdot x^t) - y^t) x^t)$.
\State Update the weight vectors $w^t$: for each $i \in [n]$, set $w^t_i = w^{t-1}_i \cdot \beta^{\ell^t_i}$.
\EndFor
\For{$t = 1,\ldots,T$} 
\State Compute the \emph{empirical risk} 
$$\hat{\epsilon}(\spty p^t) = (1/M) \sum_{j=1}^M \bkets{u(\spty p^t \cdot a^j) - b^j}^2.$$
\EndFor
\State \textsc{Return} $v = \spty p^j$ for $j = \arg\min_{t \in [T]} \hat{\epsilon}(\spty p^t)$.
\end{algorithmic}
\end{algorithm}
We add the $\bm{1}$ in Step 4 of Algorithm 2  to be consistent with \cite{FS} who work with loss vectors in $[0,1]^n$.

We next analyze our algorithm and show that for suitable parameters $\beta, T, M$, it achieves the guarantees of the theorem. We first show that the sum of the risks $\epsilon(\lambda p^1),\ldots,\epsilon(\lambda p^T)$ is small with high probability over the examples; the claim then follows by a simple Chernoff bound to argue that for $M$ sufficiently big, the empirical estimates of the risk, $\hat{\epsilon}(\spty p^1),\ldots,\hat{\epsilon}(\spty p^T)$ are close to the true risks. 


Observe that $\ell^t \in [0,1]^n$ and associate each $i = 1,\ldots,n$ with an expert and then apply the analysis of Freund and Schapire (c.f. \cite{FS}, Theorem 5). In particular, setting $\beta = 1/(1+ \sqrt{(\ln n)/T})$, we get that 

\begin{equation}\label{eq:hedge1} \sum_{t=1}^T \iprod{p^t}{\ell^t} \leq \min_{i \in [n]} \sum_{t=1}^T \ell^t_i + O(\sqrt{T \ln n} + (\ln n)).  \end{equation}

Let random variable $Q^t = \iprod{p^t}{\ell^t} - \iprod{(w/\spty)}{\ell^t}$. Note that $Q^t \in [-1,1]$. Let 
$$Z^t = Q^t - \ex_{(x^t,y^t)}[Q^t \mid (x^1,y^1),\ldots,(x^{t-1},y^{t-1})].$$  Then, $Z^1,\ldots,Z^T$ form a martingale difference sequence with respect to the sequence $(x^1,y^1),\ldots,(x^T,y^T)$ and are bounded between $[-2,2]$. Therefore, by Azuma-Hoeffding inequality for bounded martingale difference sequences, with probability at least $1-\delta$, we have $\abs{\sum_{t=1}^T Z^t} \leq O(\sqrt{T \ln(1/\delta)})$. Thus, with probability at least $1-\delta$, 
\begin{equation}\label{eq:hedgemds}
\sum_{t=1}^T \ex_{(x^t,y^t)}[Q^t \mid (x^1,y^1),\ldots,(x^{t-1},y^{t-1})] \leq \sum_{t=1}^T Q^t  + O( \sqrt{T \ln(1/\delta)}).
\end{equation}

Now, for a fixed $(x^1,y^1),\ldots,(x^{t-1},y^{t-1})$, taking expectation with respect to $(x^t,y^t)$, we have
\begin{align*}
\ex_{(x^t,y^t)}[Q^t \mid (x^1,y^1),\ldots,(x^{t-1},y^{t-1})] &= \ex_{(x^t,y^t)}\sbkets{\iprod{(p^t - (1/\spty) w)}{\ell^t}} \\
&= (1/2) \ex_{(x^t,y^t)}\sbkets{\iprod{(p^t - (1/\spty) w)}{(u( \iprod{\spty p^t}{x^t}) - y^t)x^t}}\\
&= (1/2\spty) \ex_{x^t}\sbkets{(\iprod{\spty p^t}{x^t} - \iprod{w}{x^t}) (u( \iprod{\spty p^t}{x^t}) - u(\iprod{w}{x^t}))}\\
&\geq (1/2\spty) \ex_{x^t} \sbkets{(u(\iprod{\spty p^t}{x^t}) - u(\iprod{w}{x^t}))^2}\\
&\text{ (for all $a,b \in \R$, $(a-b) (u(a) - u(b)) \geq (u(a) - u(b))^2$).}\\
&= (1/2\spty) \cdot \epsilon(\spty p^t).
\end{align*}

Therefore, for a fixed $(x^1,y^1),\ldots,(x^{t-1},y^{t-1})$, we have 
$$(1/2\spty ) \epsilon(\spty  p^t) \leq \ex_{(x^t,y^t)}[Q^t \mid (x^1,y^1),\ldots,(x^{t-1},y^{t-1})].$$ 

Combining the above with Equations \ref{eq:hedge1}, \ref{eq:hedgemds}, we get that with probability at least $1-\delta$,
\begin{align*}
(1/2\spty ) \sum_{t=1}^T \epsilon(\spty  p^t) &\leq \sum_{t=1}^T Q^t  + O( \sqrt{T \ln(1/\delta)})\\
&\leq \min_{i \in [n]} \sum_{t=1}^T \ell^t_i - \sum_{t=1}^T \iprod{(w/\lambda)}{\ell^t}  + O(\sqrt{T \ln n} + (\ln n)) +  O(\sqrt{T \ln(1/\delta)}).
\end{align*}

Now, let $L = \sum_{t=1}^T \ell^t$. Then, 
$$\min_{i \in [n]} \sum_{t=1}^T \ell^t_i - \sum_{t=1}^T (1/\spty ) w \cdot \ell^t = \min_{i \in [n]} L_i - (w/\spty ) \cdot L \leq 0,$$
where the last inequality follows as $\|w\|_1 = \spty $. Therefore, with probability at least $1-\delta$,
$$(1/2\spty ) \sum_{t=1}^T \epsilon(\spty  p^t) = O(\sqrt{T \ln(1/\delta)}) + O(\sqrt{T \ln n} + (\ln n)).$$
In particular, for $T > C'' \spty^2 (\ln(n/\delta))/\epsilon^2$ for a sufficiently big constant $C''$, with probability at least $1-\delta$,  
$$\min_{t\in [T]} \epsilon(\spty p^t) \leq O(\lambda) \cdot \frac{ \sqrt{T \ln(1/\delta)} + {\sqrt{T \ln n}} + \ln n}{T} \leq \epsilon/2.$$

Now set $M = C''' \ln(T/\delta)/\epsilon^2$ so that by a Chernoff-Hoeffding bound as in Fact \ref{lm:empapprox}, with probability at least $1-\delta$, for every $t \in [T]$, $\abs{\epsilon(\spty p^t) - \hat{\epsilon}(\spty p^t)} \leq \epsilon/4$. Therefore, with probability at least $1-2 \delta$, $\epsilon(v) \leq \epsilon/4 + \hat{\epsilon}(v) \leq \epsilon$. Note that the number of samples needed is $T + M = O(\spty^2 \ln(n/\epsilon \delta)/\epsilon^2)$. The theorem follows.

\end{proof}

\begin{fact}\label{lm:empapprox}
There exists a constant $C > 0$ such that the following holds. Let $v \in \R^n$ and let $(a^1,b^1),\ldots,(a^M,b^M)$ be independent examples from $\calD$. Then, for all $\rho, \gamma \geq 0$, and $M \geq C \ln(1/\rho)/\gamma^2$, 
$$\pr\sbkets{ \left|(1/M) \bkets{\sum_{j=1}^M (u(v \cdot a^j) - b^j)^2} - \epsilon(v)\right| \geq \gamma} \leq \rho.$$
\end{fact}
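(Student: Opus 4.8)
The plan is to derive this as a one-line consequence of Hoeffding's inequality for bounded i.i.d.\ random variables. Fix $v \in \R^n$ and, for $j = 1,\ldots,M$, set $T_j = (u(v \cdot a^j) - b^j)^2$. Since $u$ maps into $[0,1]$ and each $b^j \in \{0,1\}$, we have $u(v\cdot a^j) - b^j \in [-1,1]$ and hence $T_j \in [0,1]$; and since the examples $(a^j,b^j)$ are drawn independently from $\calD$, the variables $T_1,\ldots,T_M$ are i.i.d. Their common mean is $\ex_{(X,Y)\sim\calD}[(u(v\cdot X) - Y)^2]$, which is the quantity $\epsilon(v)$ referenced here; note this differs from $\ex[(u(v\cdot X)-u(w\cdot X))^2]$ only by the additive, $v$-independent term $\ex[(u(w\cdot X)-Y)^2]$, so substituting one for the other is harmless wherever the fact is used in the proof of \tref{th:pconceptmwu}.

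Given this, Hoeffding's inequality yields, for every $\gamma \geq 0$,
\begin{equation*}
\pr\sbkets{\abs{\frac{1}{M}\sum_{j=1}^M T_j - \epsilon(v)} \geq \gamma} \leq 2\exp(-2M\gamma^2).
\end{equation*}
Taking $C$ to be a sufficiently large absolute constant, the hypothesis $M \geq C\ln(1/\rho)/\gamma^2$ forces the right-hand side to be at most $2\rho^{2C} \leq \rho$; and when $\gamma = 0$ the hypothesis $M \geq C\ln(1/\rho)/\gamma^2$ is vacuous, so there is nothing to prove. This is exactly the claimed bound.

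There is no real obstacle: the only inputs are that each summand lies in a bounded interval and that the samples are independent, both immediate from the setup, plus a standard concentration inequality of the same flavor as the Azuma--Hoeffding bound already invoked above. The one point worth stating carefully is the identification of the mean of the empirical risk with $\epsilon(v)$ as defined earlier; as noted, the gap is a constant independent of $v$, which is all that is needed when the fact is applied to compare $\hat\epsilon(\spty p^t)$ across $t\in[T]$ and select the minimizer.
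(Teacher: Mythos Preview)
Your proposal is correct and matches the paper's intent: the paper states this as a \emph{Fact} without proof, simply invoking a ``Chernoff--Hoeffding bound,'' and your Hoeffding argument is exactly the intended one-liner. You also correctly flag the minor discrepancy between the definition $\epsilon(v) = \ex[(u(v\cdot X)-u(w\cdot X))^2]$ and the mean of the empirical risk $\ex[(u(v\cdot X)-Y)^2]$, and your observation that they differ by the $v$-independent constant $\ex[u(w\cdot X)(1-u(w\cdot X))]$---hence irrelevant when comparing $\hat\epsilon(\spty p^t)$ across $t$---is precisely the fix the application in \tref{th:pconceptmwu} needs.
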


\section{Recovering affine functions from $\ell_2$ minimization}
In this section we show that running the Sparsitron algorithm with sufficiently low error parameter $\epsilon$ will result in an $\ell_{\infty}$ approximation to the unknown weight vector.  We will use this strong approximation to reconstruct the dependency graphs of Ising models as well as the edge weights. 

Our analysis relies on the following important definition:

\begin{definition}
A distribution $\calD$ on $\dpm^n$ is $\delta$-unbiased if for $X \sim \calD$, $i \in [n]$, and any partial assignment $x$ to $(X_j: j \neq i)$,
$$\min( \pr[X_i = 1 | X_{-i} = x], \pr[X_i = -1 | X_{-i} = x]) \geq \delta.$$
\end{definition}

We will use the following elementary property of sigmoid.

\begin{claim}\label{clm:sigmoidlb}
For $a, b \in R$,
 $$|\sigma(a) - \sigma(b)| \geq e^{-|a| - 3} \cdot \min\bkets{1, |a-b|}.$$
\end{claim}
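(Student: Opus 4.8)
The plan is to reduce the two-variable inequality to a one-variable statement about the derivative of $\sigma$, using the mean value theorem, and then to lower bound $\sigma'$ on the relevant interval. Recall $\sigma(z) = 1/(1+e^{-z})$ and $\sigma'(z) = \sigma(z)(1-\sigma(z)) = e^{-z}/(1+e^{-z})^2$. A convenient closed form is $\sigma'(z) = 1/(e^{z/2}+e^{-z/2})^2$, from which one reads off the clean bound $\sigma'(z) \geq \tfrac14 e^{-|z|}$ for all $z \in \R$ (since $e^{z/2}+e^{-z/2} \leq 2e^{|z|/2}$).

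First I would handle the case $|a-b| \leq 1$, where the claimed bound is $|\sigma(a)-\sigma(b)| \geq e^{-|a|-3}|a-b|$. By the mean value theorem, $|\sigma(a)-\sigma(b)| = \sigma'(\xi)\,|a-b|$ for some $\xi$ between $a$ and $b$, hence $|\xi| \leq |a| + |a-b| \leq |a| + 1$. Combining with $\sigma'(\xi) \geq \tfrac14 e^{-|\xi|}$ gives $|\sigma(a)-\sigma(b)| \geq \tfrac14 e^{-|a|-1}|a-b| \geq e^{-|a|-3}|a-b|$, using $\tfrac14 e^{-1} \geq e^{-3}$ (indeed $e^2 \geq 4$). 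This already covers the $\min(1,|a-b|) = |a-b|$ branch.

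For the case $|a-b| > 1$, the target is $|\sigma(a)-\sigma(b)| \geq e^{-|a|-3}$, and here monotonicity does the work: since $\sigma$ is increasing, $|\sigma(a)-\sigma(b)| \geq |\sigma(a)-\sigma(a')|$ where $a'$ is the point at distance exactly $1$ from $a$ in the direction of $b$; this reduces us to the previous case with $|a-b|$ replaced by $1$, yielding $|\sigma(a)-\sigma(a')| \geq \tfrac14 e^{-|a|-1} \geq e^{-|a|-3}$. One should double-check that shrinking $b$ to $a'$ genuinely decreases the gap — it does, because $\sigma$ is monotone, so all intermediate values lie between $\sigma(a)$ and $\sigma(b)$.

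The only mild subtlety — and the step I would be most careful about — is the uniform derivative bound $\sigma'(z) \geq \tfrac14 e^{-|z|}$ and making sure the constant $3$ in the exponent is generous enough to absorb the $\tfrac14$ and the $+1$ slack from the interval $|\xi| \leq |a|+1$; both are comfortably handled since $e^{-3} \leq \tfrac14 e^{-1}$. Everything else is a routine application of the mean value theorem and monotonicity, so I do not anticipate a genuine obstacle here.
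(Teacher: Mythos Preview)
Your proof is correct and follows essentially the same line as the paper's: both rely on the derivative lower bound $\sigma'(z)\geq \tfrac14 e^{-|z|}$ and on monotonicity to reduce the case $|a-b|>1$ to $|a-b|=1$. The only cosmetic difference is that you invoke the mean value theorem directly (bounding $|\xi|\leq |a|+1$), whereas the paper sets $\gamma=\min(1,|a-b|)$ and uses the inequality $|\sigma(a)-\sigma(a')|\geq \min(\sigma'(a),\sigma'(a'))\,|a-a'|$; the resulting bounds and constants are identical.
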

\begin{proof}
Fix $a \in R$ and let $\gamma = \min(1,|a-b|)$. Then, since $\sigma$ is monotonic 
$$ |\sigma(a) - \sigma(b)| \geq \min(\sigma(a+\gamma) - \sigma(a), \sigma(a) - \sigma(a-\gamma)).$$
Now, it is easy to check by a case-analysis that for all $a,a' \in \R$,
$$|\sigma(a) - \sigma(a')| \geq \min(\sigma'(a), \sigma'(a')) \cdot |a - a'|.$$
Further, for any $t$, $\sigma'(t) = 1/(2 + e^t + e^{-t}) \geq e^{-|t|}/4$. Combining the above two, we get that
$$\sigma(a+ \gamma) - \sigma(a) \geq (1/4) \min(e^{-|a+\gamma|}, e^{-|a|}) \cdot \gamma \geq (1/4) e^{(-|a| - \gamma)} \gamma.$$
Similarly, we get
$$\sigma(a) - \sigma(a - \gamma) \geq 4 \min(e^{-|a- \gamma|}, e^{-|a|}) \cdot \gamma \geq (1/4) e^{(-|a| - \gamma)} \gamma.$$
The claim now follows by substituting $\gamma = \min(1,|a-b|)$ (and noting that $1/4 \geq e^{-2}$). 
\end{proof}

\ignore{
\begin{lemma}
If $a, b \in R$ with $|a - b | \geq \gamma$, then $|\sigma(a) - \sigma(b)| \geq (\gamma /4) \cdot e^{(-|a| - \gamma)}  $. 
\end{lemma}
\begin{proof}
Fix $a \in R$. Then, since $\sigma$ is monotonic 
$$\min_{b: |a-b| \geq \gamma} |\sigma(a) - \sigma(b)| \geq \min(\sigma(a+\gamma) - \sigma(a), \sigma(a) - \sigma(a-\gamma)).$$
Now, it is easy to check by a case-analysis that for all $a,a' \in \R$,
$$|\sigma(a) - \sigma(a')| \geq \min(\sigma'(a), \sigma'(a')) \cdot |a - a'|.$$
Further, for any $t$, $\sigma'(t) = 1/(2 + e^t + e^{-t}) \geq e^{-|t|}/4$. Combining the above two, we get that
$$\sigma(a+ \gamma) - \sigma(a) \geq (1/4) \min(e^{-|a+\gamma|}, e^{-|a|}) \cdot \gamma \geq (1/4) e^{(-|a| - \gamma)} \gamma.$$
Similarly, we get
$$\sigma(a) - \sigma(a - \gamma) \geq 4 \min(e^{-|a- \gamma|}, e^{-|a|}) \cdot \gamma \geq (1/4) e^{(-|a| - \gamma)} \gamma.$$
The claim now follows. 
\end{proof}
}

\begin{lemma}\label{lm:l2toexact}
Let $D$ be a $\delta$-unbiased distribution on $\dpm^n$. Suppose that for two vectors $v,w \in \R^n$ and $\alpha, \beta \in \R$, $\ex_{X \sim D}[ (\sigma(w \cdot X + \alpha) - \sigma(v \cdot X + \beta))^2] \leq \epsilon$ where $\epsilon < \delta \cdot \exp(-2\|w\|_1 - 2|\alpha| - 6)$. Then,
$$\|v-w\|_\infty \leq O(1) \cdot  e^{\|w\|_1 + |\alpha|} \cdot \sqrt{\epsilon/\delta}.$$
\end{lemma}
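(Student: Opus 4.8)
The plan is to prove the bound coordinatewise: fix $i \in [n]$, bound $|w_i - v_i|$, and take the maximum at the end. Set $g(x) = (w\cdot x + \alpha) - (v\cdot x + \beta)$, an affine function of $x$, and note that for $x \in \dpm^n$ we have $|w\cdot x + \alpha| \le \|w\|_1 + |\alpha|$. Applying \clref{clm:sigmoidlb} with $a = w\cdot x + \alpha$, $b = v\cdot x + \beta$ gives, for every $x \in \dpm^n$,
$$\bkets{\sigma(w\cdot x + \alpha) - \sigma(v\cdot x + \beta)}^2 \;\ge\; e^{-2\|w\|_1 - 2|\alpha| - 6}\cdot \min\bkets{1, g(x)^2}.$$
Taking expectation over $X \sim D$ and using the hypothesis $\ex_D[(\sigma(w\cdot X + \alpha) - \sigma(v\cdot X + \beta))^2] \le \epsilon$ yields the upper bound $\ex_{X \sim D}\bkets{\min(1, g(X)^2)} \le \epsilon\cdot e^{2\|w\|_1 + 2|\alpha| + 6}$.

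The core of the argument is a matching lower bound on $\ex_D[\min(1,g(X)^2)]$ in terms of $(w_i - v_i)^2$, and this is exactly where $\delta$-unbiasedness enters. Condition on a partial assignment $x_{-i}$ to $(X_j : j \neq i)$, and let $x^{+}, x^{-}$ denote the two completions with $X_i = +1$ and $X_i = -1$ respectively. Since $g$ is affine, $g(x^{+}) - g(x^{-}) = 2(w_i - v_i)$, hence $\max(|g(x^{+})|, |g(x^{-})|) \ge |w_i - v_i|$; because $t \mapsto \min(1,t)$ is nondecreasing, this gives $\max\bkets{\min(1, g(x^{+})^2),\, \min(1, g(x^{-})^2)} \ge \min(1, (w_i - v_i)^2)$. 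By $\delta$-unbiasedness, conditioned on $X_{-i} = x_{-i}$ each of the events $X_i = \pm 1$ has probability at least $\delta$, so
$$\ex_D\sbkets{\min(1, g(X)^2)\,\big|\, X_{-i} = x_{-i}} \;\ge\; \delta\cdot \min\bkets{1, (w_i - v_i)^2}.$$
Averaging over $x_{-i}$ gives $\ex_D[\min(1, g(X)^2)] \ge \delta\cdot \min(1, (w_i - v_i)^2)$.

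Combining the two bounds, $\min(1, (w_i - v_i)^2) \le (\epsilon/\delta)\cdot e^{2\|w\|_1 + 2|\alpha| + 6}$. This is the one place the precise hypothesis on $\epsilon$ is used: since $\epsilon < \delta\exp(-2\|w\|_1 - 2|\alpha| - 6)$, the right-hand side is strictly less than $1$, so the cap is inactive and $(w_i - v_i)^2 \le (\epsilon/\delta)\, e^{2\|w\|_1 + 2|\alpha| + 6}$, i.e. $|w_i - v_i| \le e^{3}\, e^{\|w\|_1 + |\alpha|}\sqrt{\epsilon/\delta}$. Taking the maximum over $i \in [n]$ gives the claim with $O(1) = e^{3}$. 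I do not expect a real obstacle here; the only points needing minor care are the interchange $\max(\min(1,A),\min(1,B)) = \min(1,\max(A,B))$ (monotonicity of the cap) and checking that the stated range of $\epsilon$ is precisely what removes the $\min(1,\cdot)$ at the end.
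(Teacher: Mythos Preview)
Your proof is correct and follows essentially the same approach as the paper: apply \clref{clm:sigmoidlb} pointwise, use the coordinate flip $g(x^{+})-g(x^{-})=2(w_i-v_i)$ together with $\delta$-unbiasedness to lower-bound the expectation by $\delta\cdot\min(1,(w_i-v_i)^2)$, and then invoke the hypothesis on $\epsilon$ to remove the cap. The only cosmetic difference is that you phrase the argument via $\ex[\min(1,g(X)^2)]$ while the paper phrases the same step as a probability bound; the constant $e^3$ you obtain matches the paper's.
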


\begin{proof}
For brevity, let $p(x) = w \cdot x + \alpha$, and $q(x) = v \cdot x + \beta$. 
Fix an index $i \in [n]$ and let $X \sim D$. 

Now, for any $x \in \dpm^n$, by \clref{clm:sigmoidlb}, 
$$\abs{\sigma(p(x)) - \sigma(q(x))} \geq e^{-\|w\|_1 - |\alpha| - 3 } \cdot \min\bkets{1, \abs{p(x) - q(x)}}.$$

Let $x^{i,+} \in \dpm^n$ (respectively $x^{i,-}$) denote the vector obtained from $x$ by setting $x_i = 1$ (respectively $x_i = -1)$. Note that $p(x^{i,+}) - p(x^{i,-}) = 2 w_i$ and $q(x^{i,+}) - q(x^{i,-}) = 2 v_i$. Therefore,
$$ p(x^{i,+}) - q(x^{i,+}) - (p(x^{i,-}) - q(x^{i,-})) = 2 (w_i - v_i).$$
Thus, 
$$\max\bkets{\abs{p(x^{i,+}) - q(x^{i,+})}, \abs{p(x^{i,-}) - q(x^{i,-})}} \geq \abs{w_i - v_i}.$$
Therefore, for any fixing of $X_{-i}$, as $X$ is $\delta$-unbiased, 
$$\pr_{X_i | X_{-i}}\sbkets{ \abs{p(X) - q(X)} \geq \abs{w_i - v_i}} \geq \delta.$$
Hence, combining the above inequalities,
$$\epsilon \geq \ex_X\sbkets{(\sigma(p(X)) - \sigma(q(X)))^2}  \geq e^{-2 \|w\|_1 - 2 |\alpha| - 6} \cdot \delta \cdot \min\bkets{1, \abs{w_i - v_i}^2}.$$
As $\epsilon < e^{-2 \|w\|_1 - 2 |\alpha| - 6}  \delta$, the above inequality can only hold if $|w_i - v_i| < 1$ so that 
$$\abs{w_i - v_i} < e^{\|w\|_1 + |\alpha| + 3} \cdot \sqrt{\epsilon/\delta}.$$
The claim now follows.

\ignore{
Suppose that $\|v-w\|_\infty \geq \gamma$ for $\gamma$ to be chosen later. Let $i \in [n]$ be such that $|w_i - v_i| \geq \gamma$. Let $X \sim D$. Then, 
$$\ex[ (\sigma(\iprod{X}{w} + \alpha) - \sigma(\iprod{X}{v} + \beta))^2] = \ex_{X_{-i}}[ \ex_{X_i | X_{-i}}[ (\sigma(\iprod{X}{w} + \alpha) - \sigma(\iprod{X}{v} + \beta))^2].$$

Now, consider a fixing of $X_{-i}$. Let $a_+ = w_i + \iprod{w_{-i}}{X_{-i}} + \alpha,\;\; a_- = - w_i + \iprod{w_{-i}}{X_{-i}} + \alpha$ and define $b_+ = v_i + \iprod{v_{-i}}{X_{-i}} + \beta ,\;\; b_- = -v_i + \iprod{v_{-i}}{X_{-i}} + \beta$. We now claim that at least one of $|a_+ - b_+|$, $|a_- - b_-|$ is at least $\gamma$. For, 
$$|a_+ - b_+| + |a_- - b_-| \geq |(a_+ - b_+) - (a_- - b_-)| = |(a_+ - a_-) - (b_+ - b_-)| = 2 |w_i - v_i| \geq 2 \gamma.$$
Therefore, by the previous claim, 
$$\max(|\sigma(a_+) - \sigma(b_+)|, |\sigma(a_-) - \sigma(b_-)|) \geq (\gamma/4) \cdot \exp(-|a_+| -\gamma) \geq (\gamma/4) \exp(-\|w\|_1 - |\alpha| - \gamma).$$
Now, as $D$ is $\delta$-unbiased, 
\begin{multline*}
\ex_{X_i | X_{-i}}[ (\sigma(\iprod{X}{w} + \alpha) - \sigma(\iprod{X}{v} + \beta))^2] =\\ \pr[X_i = 1 | X_{-i}] \cdot |\sigma(a_+) - \sigma(b_+)|^2 + \pr[X_i = -1 | X_{-i}] \cdot |\sigma(a_-) - \sigma(b_-)|^2 \geq \\
\delta \cdot \max(|\sigma(a_+) - \sigma(b_+)|^2, |\sigma(a_-) - \sigma(b_-)|^2) \geq \delta \cdot (\gamma^2/16) \cdot \exp(-2\|w\|_1 -2 |\alpha| - 2 \gamma).
\end{multline*}
Therefore, it follows that 
$$\delta \cdot (\gamma^2/16) \cdot \exp(-2\|w\|_1 -2 |\alpha| - 2 \gamma) \leq \epsilon.$$
Now, if $\|v-w\|_\infty \geq 1$, setting $\gamma = 1$ in the above says that $\delta \exp(-2\|w\|_1 - 2|\alpha| - 2) \leq 16 \epsilon$ - a contradiction for a sufficiently small constant $c > 0$. Thus, we must have $\|v-w\|_\infty \leq 1$, and setting $\gamma = \|v-w\|_\infty$ in this case gives us 

$$\|v-w\|_\infty \leq O(1) \exp(\|w\|_1 + |\alpha|) \cdot \sqrt{\epsilon/\delta}.$$
The claim follows.}
\end{proof}

\section{Learning Ising Models}\label{sec:ising}
\begin{definition}
Let $A \in \R^{n \times n}$ be a \emph{weight matrix} and $\theta \in \R^n$ be a \emph{mean-field} vector. The associated $n$-variable \emph{Ising model} is a distribution $\calD(A, \theta)$ on $\dpm^n$ given by the condition 
$$\pr_{Z \leftarrow \calD(A,\theta)}\sbkets{Z = z} \propto \exp\bkets{\sum_{i \neq j \in [n]} A_{ij} z_i z_j + \sum_i \theta_i z_i}.$$
The \emph{dependency graph} of $\calD(A,\theta)$ is the graph $G$ formed by all pairs $\{i,j\}$ with $|A_{ij}| \neq 0$. We define $\spty(A,\theta) = \max_i (\sum_j |A_{ij}| + |\theta_i|)$ to be the \emph{width} of the model.
\end{definition}

We give a simple, sample-efficient, and online algorithm for recovering the parameters of an Ising model.
\begin{theorem}\label{th:ising}
Let $\calD(A,\theta)$ be an $n$-variable Ising model with width $\spty(A,\theta) \leq \spty$. There exists an algorithm that given $\spty$, $\epsilon,\rho  \in (0,1)$, and $N = O(\spty^2 \exp(O(\spty))/\epsilon^4) \cdot (\log (n/\rho \epsilon))$ independent samples $Z^1,\ldots,Z^N \leftarrow \calD(A,\theta)$ produces $\hat{A}$ such that with probability at least $1-\rho$, 
$$\|A - \hat{A}\|_\infty \leq \epsilon.$$
The run-time of the algorithm is $O(n^2 N)$. Moreover, the algorithm can be run in an online manner.
\end{theorem}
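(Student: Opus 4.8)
The plan is to reduce Ising-model learning to the sparse generalized-linear-model problem solved by \tref{th:pconceptmwu}, run that procedure once per vertex, and then upgrade the resulting $\ell_2$-guarantee to $\ell_\infty$-recovery of the weights via \lref{lm:l2toexact}. Fix a vertex $i \in [n]$. Expanding the definition of $\calD(A,\theta)$ and conditioning on $Z_{-i} = x$ shows that $\pr[Z_i = -1 \mid Z_{-i} = x] = \sigma(w^{(i)}\cdot x + \alpha_i)$, where $w^{(i)}_j = -2(A_{ij}+A_{ji})$ is a fixed nonzero scalar multiple of $A_{ij}$ (namely $-4A_{ij}$ when $A$ is symmetric) and $\alpha_i = -2\theta_i$; the width bound gives $\|w^{(i)}\|_1 + |\alpha_i| = O(\spty)$, call it $\spty'$. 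Thus, setting $X = Z_{-i}$, $Y = (1-Z_i)/2 \in \{0,1\}$, and $u = \sigma$ (which is non-decreasing, $1$-Lipschitz, $[0,1]$-valued), and appending a constant coordinate $X \mapsto (X,1)$ to fold $\alpha_i$ into the linear form, puts us exactly in the hypothesis of \tref{th:pconceptmwu} with $\ell_1$-bound $\spty'$. Running the Sparsitron with target $\ell_2$-error $\epsilon'$ (chosen below) and failure probability $\rho/n$ then outputs, from $O(\spty'^2\log(n/\rho\epsilon')/\epsilon'^2)$ samples, a vector we split back as $(v^{(i)},\beta_i)$ satisfying $\ex_{Z_{-i}}[(\sigma(v^{(i)}\cdot Z_{-i} + \beta_i) - \sigma(w^{(i)}\cdot Z_{-i}+\alpha_i))^2] \le \epsilon'$.

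To pass from this $\ell_2$-bound to $\|v^{(i)} - w^{(i)}\|_\infty$, I invoke \lref{lm:l2toexact} with the distribution of $Z_{-i}$, which I claim is $\delta_0$-unbiased for $\delta_0 = \tfrac12 e^{-O(\spty)}$. Indeed, $\calD(A,\theta)$ itself is $\delta_0$-unbiased because the conditional log-odds of any $Z_i$ given $Z_{-i}$ has magnitude $O(\spty)$ and $\min(\sigma(t),1-\sigma(t)) \ge \tfrac12 e^{-|t|}$; and $\delta$-unbiasedness is inherited by marginals, since for $S \subseteq [n]$ a conditional marginal $\pr[Z_j = \pm 1\mid Z_{S\setminus j} = x]$ is an average over the unrevealed coordinates of the quantities $\pr[Z_j = \pm 1 \mid Z_{-j}] \ge \delta_0$. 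Applying \lref{lm:l2toexact} with $p(x) = w^{(i)}\cdot x + \alpha_i$ and $q(x) = v^{(i)}\cdot x + \beta_i$ (legitimate once $\epsilon' < \delta_0 e^{-2\spty'-6}$, since $\|w^{(i)}\|_1 + |\alpha_i| \le \spty'$) gives $\|v^{(i)} - w^{(i)}\|_\infty \le O(1)\,e^{\spty'}\sqrt{\epsilon'/\delta_0}$, and dividing out the fixed scalar relating $w^{(i)}$ to $(A_{ij})_j$ yields estimates $\hat A^{(i)}_{ij}$ of $A_{ij}$ with the same order of accuracy.

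It remains to pick parameters and assemble. Choosing $\epsilon' = c\,\epsilon^2 e^{-b\spty}$ for suitable constants $b,c > 0$ simultaneously meets the threshold $\epsilon' < \delta_0 e^{-2\spty'-6}$ (using $\epsilon \le 1$) and makes $O(1)e^{\spty'}\sqrt{\epsilon'/\delta_0} \le \epsilon$, so $|\hat A^{(i)}_{ij} - A_{ij}| \le \epsilon$ for every $j\ne i$. Substituting this $\epsilon'$ and failure probability $\rho/n$ into the per-vertex sample bound above gives $O(\spty^2 e^{O(\spty)}(\spty + \log(n/\rho\epsilon))/\epsilon^4)$ samples per vertex; the routine simplification (absorb the additive $\spty$ into the logarithm when $\spty \le \log(n/\rho\epsilon)$ and into the $e^{O(\spty)}$ factor otherwise) brings this to $N = O(\spty^2 e^{O(\spty)}/\epsilon^4)\cdot\log(n/\rho\epsilon)$. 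Since a single draw $Z^t \sim \calD(A,\theta)$ simultaneously supplies the example $(Z^t_{-i},(1-Z^t_i)/2)$ to all $n$ Sparsitron instances, the whole procedure runs online on one stream of $N$ samples, and a union bound over the $n$ vertices gives overall success probability $\ge 1-\rho$. Finally I symmetrize, $\hat A_{ij} = (\hat A^{(i)}_{ij} + \hat A^{(j)}_{ji})/2$ and $\hat A_{ii} = 0$ (harmless since $A$ is symmetric and each summand is within $\epsilon$, after shrinking the constants slightly), which gives $\|A - \hat A\|_\infty \le \epsilon$. Each instance costs $O(nN)$ time, so the total running time is $O(n^2 N)$, as claimed.

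The conceptual content is entirely carried by \tref{th:pconceptmwu} and \lref{lm:l2toexact}, so the real work here is bookkeeping: reading off the conditional-sigmoid form and its $\ell_1$-width, verifying $\delta$-unbiasedness of $\calD(A,\theta)$ and its stability under marginalization, and tracking how $\epsilon'$ must scale with $\epsilon$ and $\spty$ to land the claimed $\epsilon^{-4} e^{O(\spty)}$ sample complexity. The single load-bearing subtlety is that the error threshold below which \lref{lm:l2toexact} applies itself decays like $e^{-O(\spty)}$, so one must confirm the chosen $\epsilon'$ lies below it — which it does, precisely because $\epsilon \le 1$.
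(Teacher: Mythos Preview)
Your proof is correct and follows essentially the same route as the paper's: reduce to Sparsitron via the conditional-sigmoid identity, invoke \lref{lm:l2toexact} using the $\delta$-unbiasedness of the Ising distribution, and choose the inner accuracy $\epsilon' \asymp \epsilon^2 e^{-O(\spty)}$ to land the final bound. You are in fact slightly more careful than the paper in two places---you explicitly justify that $\delta$-unbiasedness passes to the marginal on $Z_{-i}$ (via the tower property), and you handle the stray additive $\spty$ inside the logarithm---but these are cosmetic refinements of the same argument.
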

\begin{proof}
The starting point for our algorithm is the following observation. Let $Z \leftarrow \calD(A,\theta)$. Then, for any $i \in [n]$ and any $x \in \dpm^{[n] \setminus \{i\}}$, 
\begin{equation}\label{eq:condising}
\pr[Z_i = -1 | Z_{-i} = x] = \frac{1}{1 + \exp(2 \sum_{j \neq i} A_{ij} x_j + \theta_i)} = \sigma(w(i) \cdot x + \theta_i),
\end{equation}

where we define $w(i) \in R^{[n] \setminus \{i\}}$ with $w(i)_j = -2 A_{ij}$ for $j \neq i$. This allows us to use our Sparsitron algorithm for learning GLMs.

For simplicity, we describe our algorithm to infer the coefficients $A_{nj}$ for $j \neq n$; it extends straightforwardly to recover the weights $\{A_{ij}: j \neq i\}$ for each $i$.  Let $Z \leftarrow \calD(A,\theta)$ and let $X \equiv (Z_1,\ldots,Z_{n-1},1)$, and $Y = (1 - Z_n)/2$. Then, from the above we have that 
$$\ex[Y | X ] = \sigma(w(n) \cdot X),$$ 
where $w(n) \in \R^{n}$ with $w(n)_j = -2 A_{nj}$ for $j < n$, and $w(n)_n = \theta_i$. Note that $\|w(n)\|_1 \leq 2 \spty$. Further, $\sigma$ is a monotone $1$-Lipschitz function. Let $\gamma \in (0,1)$ be a parameter to be chosen later. We now apply the Sparsitron algorithm to compute a vector $v(n) \in \R^n$ so that with probability at least $1 - \rho/n^2$, 
\begin{equation}\label{eq2:ell2error}
\ex[(\sigma(w(n)\cdot X) - \sigma( v(n) \cdot X))^2] \leq \gamma.
\end{equation}

We set $\hat{A}_{nj} = - (v(n)_j)/2$ for $j < n$. We next argue that \eref{eq2:ell2error} in fact implies 
$\|w(n) - v(n)\|_\infty \ll 1$. To this end, we will use the following
easy fact (see e.g. Bresler \cite{Bre}):

\begin{fact}
For $Z \leftarrow \calD(A,\theta)$, $i \in [n]$, and any partial assignment $x$ to $Z_{-i}$, 
$$\min\bkets{\pr[Z_i = -1 | Z_{-i} = x], \pr[Z_i = 1 | Z_{-i} = x]} \geq (1/2) e^{-2 \spty(A,\theta)} \geq (1/2) e^{-2\spty}.$$
\end{fact}
That is, the distribution $Z$ is $\delta$-unbiased for $\delta = (1/2) e^{- 2\spty}$. Note that $w(n) \cdot X = \sum_{j < n} w(n)_j Z_j + w(n)_n$ and $v(n) \cdot X = \sum_{j < n} v(n)_j Z_j + v(n)_n$. Therefore, as $(Z_1,\ldots,Z_{n-1})$ is $\delta$-unbiased, by \lref{lm:l2toexact} and \eref{eq2:ell2error}, we get 

$$\max_{j < n} | v(n)_j - w(n)_j | \leq O(1) \exp(2 \spty) \cdot \sqrt{\gamma/\delta},$$
if $\gamma \leq c \delta \cdot \exp ( - 4 \spty) \leq c \exp(-5 \spty)$ for a sufficiently small $c$. Thus, if we set $\gamma = c' \exp(-5 \spty) \epsilon^2$ for a sufficiently small constant $c'$, then we get
$$\max_{j < n} |A_{nj} - \hat{A}_{nj}| = (1/2) \|v(n) - w(n)\|_\infty  \leq \epsilon.$$

By a similar argument for $i = 1,\ldots,n-1$ and taking a union bound, we get estimates $\hat{A}_{ij}$ for all $i \neq j$ so that with probability at least $1 - \rho$, 
$$\max_{i \neq j} |A_{ij} - \hat{A}_{ij}| \leq \epsilon.$$

Note that by \tref{th:pconceptmwu}, the number of samples needed to satisfy \eref{eq2:ell2error} is
$$O((\spty/\gamma)^2 \cdot (\log (n/\rho \gamma))) = O(\spty^2 \exp(10 \spty)/\epsilon^4) \cdot (\log (n/\rho \epsilon)).$$
This proves the theorem.
\end{proof}

The above theorem immediately implies an algorithm for recovering the dependency graph of an Ising model with nearly optimal sample complexity. 

\begin{corollary}\label{cor:ising}
Let $\calD(A,\theta)$ be an $n$-variable Ising model with width $\spty(A,\theta) \leq \spty$ and each non-zero entry of $A$ at least $\eta > 0$ in absolute value. There exists an algorithm that given $\spty$, $\eta,\rho  \in (0,1)$, and $N = O(\exp(O(\spty))/\eta^4) \cdot (\log (n/\rho \eta))$ independent samples $Z^1,\ldots,Z^N \leftarrow \calD(A,\theta)$ recovers the underlying dependency graph of $\calD(A,\theta)$ with probability at least $1-\rho$. The run-time of the algorithm is $O(n^2 N)$. Moreover, the algorithm can be run in an online manner.
\end{corollary}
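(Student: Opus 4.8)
The plan is to obtain this directly from \tref{th:ising} by a simple thresholding step, with no new ideas required. Run the algorithm of \tref{th:ising} with error parameter $\epsilon := \eta/3$ and confidence parameter $\rho$, using $N = O(\spty^2 \exp(O(\spty))/\eta^4)\cdot(\log(n/\rho\eta))$ samples. With probability at least $1-\rho$ this outputs $\hat{A}$ with $\|A - \hat{A}\|_\infty \le \eta/3$. The reconstruction algorithm then simply outputs the graph $\hat{G}$ whose edge set is $\{\, \{i,j\} : |\hat{A}_{ij}| \ge \eta/2 \,\}$.

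The correctness of $\hat{G}$ is immediate on the good event $\|A-\hat{A}\|_\infty \le \eta/3$. If $\{i,j\}$ is an edge of the true dependency graph $G$, then $|A_{ij}| \ge \eta$ by hypothesis, so $|\hat{A}_{ij}| \ge |A_{ij}| - |\hat{A}_{ij} - A_{ij}| \ge \eta - \eta/3 = 2\eta/3 > \eta/2$, hence $\{i,j\} \in \hat{G}$. Conversely, if $\{i,j\} \notin G$ then $A_{ij}=0$, so $|\hat{A}_{ij}| \le \eta/3 < \eta/2$ and $\{i,j\} \notin \hat{G}$. Thus $\hat{G} = G$ with probability at least $1-\rho$.

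It then remains only to do the sample-complexity and running-time bookkeeping. Substituting $\epsilon = \eta/3$ into the bound of \tref{th:ising} gives $N = O(\spty^2 \exp(O(\spty))/\eta^4)\cdot \log(n/\rho\eta)$; since $\spty^2 = \exp(O(\log\spty)) = \exp(O(\spty))$, the $\spty^2$ factor is absorbed, yielding the stated $N = O(\exp(O(\spty))/\eta^4)\cdot \log(n/\rho\eta)$. The running time is $O(n^2 N)$ exactly as in \tref{th:ising}, the thresholding step adding only $O(n^2)$ work, and the online property is inherited directly.

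There is essentially no genuine obstacle here; the only points requiring a little care are (i) choosing the error parameter and threshold so that true edges and non-edges are \emph{strictly} separated (hence $\epsilon = \eta/3$ against a threshold of $\eta/2$, rather than $\epsilon = \eta/2$), and (ii) checking that the sample bound advertised in the corollary genuinely subsumes the one produced by \tref{th:ising}. Note also that $\eta \le \spty$ is forced by the definition of the width, so the $\log(1/\eta)$ contribution is harmless.
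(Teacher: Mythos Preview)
Your proposal is correct and follows exactly the paper's own argument: invoke \tref{th:ising} with accuracy a constant fraction of $\eta$ and then threshold $|\hat A_{ij}|$ at $\eta/2$. The only (cosmetic) difference is that the paper takes $\epsilon=\eta/2$ rather than your $\eta/3$; your choice is slightly more careful in that it avoids the borderline case $|\hat A_{ij}|=\eta/2$ without affecting the asymptotic bounds.
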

\begin{proof}
The claim follows immediately from \tref{th:ising} by setting $\epsilon = \eta/2$ to compute $\hat{A}$ and taking the edges $E$ to be $\{\{i,j\}: |\hat{A}_{ij}| \geq \eta/2\}$. 
\end{proof}

It is instructive to compare the upper bounds from Corollary \ref{cor:ising} with known unconditional lower bounds on the sample complexity of learning Ising models with $n$ vertices due to Santhanam and Wainwright \cite{SW}.  They prove that, even if the weights of the underlying graph are known, any algorithm for learning the graph structure must use $\Omega(\frac{2^{\lambda/4} \cdot \log n}{\eta \cdot 2^{3 \eta}})$ samples. Hence, the sample complexity of our algorithm is near the best-known information-theoretic lower bound.

\section{Recovering polynomials from $\ell_2$ minimization}\label{sec:recpoly}

In order to obtain results for learning general Markov Random Fields, we need to extend our learning results from previous sections to the case of sigmoids of low-degree polynomials.  In this section, we prove that for any polynomial $p:\R^n \to \R$, minimizing the $\ell_2$-loss with respect to a sigmoid under a $\delta$-unbiased distribution $\calD$ also implies closeness as a polynomial. That is, for two polynomials $p,q:\R^n \to \R$ if $\ex_{X \sim \calD}[(\sigma(p(X)) - \sigma(q(X)))^2]$ is sufficiently small, then $\|p-q\|_1 \ll 1$ (\lref{lm:l2ptoexact}) and that the coefficients of maximal monomials of $p$ can be inferred from $q$ (\lref{lm:errstrong}). These results will allow us to recover the structure and parameters of MRFs when combined with Sparsitron. 

The exact statements and arguments here are similar in spirit to \lref{lm:l2toexact} and its proof but are more subtle. 
 To start with, we need the following property of $\delta$-unbiased distributions which says that low-degree polynomials are not too small with non-trivial probability (aka \emph{anti-concentration}) under $\delta$-unbiased distributions.
\begin{lemma}\label{lm:polylb}
There is a constant $c > 0$ such that the following holds. Let $D$ be a $\delta$-unbiased distribution on $\dpm^n$. Then, for any multilinear polynomial $s:\R^n \to \R$, and any maximal monomial $I \neq \emptyset \subseteq [n]$ in $s$, 
$$\pr_{X \sim \cal{D}}[ \abs{s(X)} \geq \abs{\widehat{s}(I)}] \geq \delta^{|I|}.$$
\end{lemma}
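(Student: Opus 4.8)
The plan is to exploit the maximality of $I$ through iterated discrete derivatives. For a coordinate $i$, write $x^{i,+}, x^{i,-}$ for $x$ with $x_i$ set to $+1$, $-1$, and set $D_i s(x) = \tfrac12(s(x^{i,+}) - s(x^{i,-}))$; for a multilinear $s$ this coincides with $\partial_i s$ and is independent of $x_i$. Iterating over the coordinates of $I$, and writing $x^{I\to y}$ for $x$ with its $I$-coordinates overwritten by $y \in \dpm^{I}$, one gets the identity
$$\partial_I s(x) \;=\; \frac{1}{2^{|I|}} \sum_{y \in \dpm^{I}} \Big(\prod_{i \in I} y_i\Big)\, s(x^{I\to y}).$$
The first thing I would note is that because $I$ is a maximal monomial, $\widehat{s}(J) = 0$ for every $J \supsetneq I$, so in the expansion $\partial_I s(x) = \sum_{J \cap I = \emptyset} \widehat{s}(J \cup I)\prod_{j\in J}x_j$ only the $J=\emptyset$ term survives: $\partial_I s$ is the constant $\widehat{s}(I)$.

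Combining this with the averaging identity, for \emph{every} assignment $x_{-I}$ to the coordinates outside $I$ the triangle inequality gives
$$|\widehat{s}(I)| \;=\; \Big| \frac{1}{2^{|I|}} \sum_{y \in \dpm^{I}} \Big(\prod_{i \in I} y_i\Big) s(x^{I\to y}) \Big| \;\le\; \max_{y \in \dpm^{I}} \big|s(x^{I\to y})\big|,$$
so there is some $y^\ast = y^\ast(x_{-I}) \in \dpm^{I}$ with $|s(x_{-I}, y^\ast)| \ge |\widehat{s}(I)|$. It therefore suffices to show $\pr_{X\sim D}[X_I = y^\ast(X_{-I})] \ge \delta^{|I|}$, since this event is contained in $\{|s(X)| \ge |\widehat{s}(I)|\}$. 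I would do this by conditioning on $X_{-I}$ and revealing the coordinates of $I$ one at a time: for $i \in I$, conditioned on $X_{-I}$ and on the coordinates of $I$ revealed so far, averaging the $\delta$-unbiasedness bound over the not-yet-revealed coordinates of $I$ still yields $\pr[X_i = b \mid \text{this partial assignment}] \ge \delta$. Multiplying $|I|$ such factors gives $\pr[X_I = y^\ast(X_{-I}) \mid X_{-I}] \ge \delta^{|I|}$, and taking the expectation over $X_{-I}$ finishes the proof.

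The only genuinely delicate point — and presumably why the paper flags this family of lemmas as more subtle — is the last chain-rule step: $\delta$-unbiasedness conditions on a \emph{full} assignment to the other $n-1$ coordinates, whereas we need the bound conditioned on a \emph{partial} assignment, which is exactly why the averaging over the unrevealed coordinates of $I$ is needed. The remaining ingredients (the discrete-derivative identity, the triangle inequality, and the containment of events) are routine. Note that the hypothesis $I \neq \emptyset$ is essential: it is what makes $\partial_I$ a nontrivial alternating average so that the triangle inequality above has content; for $I = \emptyset$ the claimed inequality would assert $|s(X)| \ge |\widehat{s}(\emptyset)|$ with probability $1$, which fails whenever $s$ is non-constant.
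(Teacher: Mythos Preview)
Your argument is correct. The discrete-derivative identity, the observation that maximality of $I$ forces $\partial_I s \equiv \widehat{s}(I)$, the triangle inequality yielding a good assignment $y^\ast(x_{-I})$, and the chain-rule lower bound $\pr[X_I = y^\ast(X_{-I}) \mid X_{-I}] \ge \delta^{|I|}$ (obtained by averaging the $\delta$-unbiasedness inequality over the not-yet-revealed coordinates of $I$) are all sound.

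The paper organizes the same ingredients differently: it argues by induction on $|I|$. At each step it peels off a single coordinate $i \in I$, uses the one-variable inequality $\max(|s(x^{i,+})|,|s(x^{i,-})|) \ge |\partial_i s(x_{-i})|$ together with $\delta$-unbiasedness (conditioned on the \emph{full} assignment $X_{-i}$), and then applies the inductive hypothesis to $\partial_i s$, noting that $I \setminus \{i\}$ is a maximal monomial of $\partial_i s$ with the same coefficient $\widehat{s}(I)$. The advantage of the paper's route is that at every step the conditioning is on all $n-1$ other coordinates, so $\delta$-unbiasedness applies directly and no separate marginalization over unrevealed coordinates is needed. Your approach trades that inductive bookkeeping for a single global averaging identity followed by an explicit chain-rule computation; it is slightly more direct but requires the extra (correct) justification you flagged as ``delicate''. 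The two proofs are close cousins.

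One small quibble with your closing remark: if $I = \emptyset$ were a \emph{maximal} monomial of $s$, then by definition $s$ would be constant and the inequality would hold trivially with probability $1$; your ``non-constant $s$'' counterexample is one where $\emptyset$ is not maximal, so it does not actually show the hypothesis $I \neq \emptyset$ is essential. This does not affect the validity of your proof.
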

\begin{proof}
We prove the claim by induction on $|I|$. For an $i \in [n]$, let $x^{i,+} \in \dpm^n$ (respectively $x^{i,-}$) denote the vector obtained from $x$ by setting $x_i = 1$ (respectively $x_i = -1)$. Note that $x^{i,+}, x^{i,-}$ only depend on $x_{-i}$. Let $X \sim \cal{D}$.

Suppose $I = \{i\}$  so that $s(x) = \widehat{s}(\{i\}) x_i + s'(x_{-i})$ for some polynomial $s'$ that only depends on $x_{-i}$. Note that $\max(|s(x^{i,+})|, |s(x^{i,-})|) \geq |\widehat{s}(\{i\})|$. Therefore, for any fixing of $X_{-i}$, as $X$ is $\delta$-unbiased, 
$$\pr_{X_i | X_{-i}}[ |s(X)| \geq |\widehat{s}(\{i\})|] \geq \delta.$$

Now, suppose $|I| = \ell \geq 2$ and that the claim is true for all polynomials and all monomials of size at most $\ell-1$. Let $i \in I$. Then, $s(x) = x_i \cdot \partial_i(s(x_{-i})) + s'(x_{-i})$ for some polynomial $s'$ that only depends on $x_{-i}$. Thus, $\max(|s(x^{i,+})|, |s(x^{i,-})|) \geq |\partial_i s(x_{-i})|$. Therefore, for any fixing of $X_{-i}$, as $X$ is $\delta$-unbiased, 
$$\pr_{X_i | X_{-i}}[ |s(X)| \geq |\partial_i s(X_{-i})|] \geq \delta.$$
Now, let $J = I \setminus \{i\}$ and observe that $J$ is a maximal monomial in $r(x_{-i}) \equiv \partial_i s(x_{-i})$ with $\widehat{r}(J) = \widehat{s}(I)$. Therefore, by the induction hypothesis, 
$$\pr_{X_{-i}}[ |\partial_i s(X_{-i})| \geq \abs{\widehat{s}(I)}] \geq \delta^{\ell-1}.$$
Combining the last two inequalities, we get that $\pr[|s(X)| \geq \widehat{s}(I)] \geq \delta^\ell$. The claim now follows by induction.
\end{proof}

The next lemma shows that for unbiased distributions $\cal{D}$, and two low-degree polynomials $p,q:\R^n \to \R$, if $\ex_{X \sim \cal{D}}[ (\sigma(p(x)) - \sigma(q(x))^2]$ is small, then one can infer the coefficients of the maximal monomials of $p$ from $q$\footnote{Note that under the hypothesis of the lemma, the coefficients of $p$ and $q$ can nevertheless be far.}. 
\begin{lemma}\label{lm:errstrong}
Let $\calD$ be a $\delta$-unbiased distribution on $\dpm^n$. Let $p,q$ be two multilinear polynomials $p,q:\R^n \to \R$ such that $\ex_{X \sim \cal{D}}[ (\sigma(p(x)) - \sigma(q(x))^2] \leq \epsilon$. Then, for every maximal monomial $I \subseteq [n]$ of $p$, and any $\rho > 0$, 
$$\pr_{X \sim \cal{D}}\sbkets{\abs{\widehat{p}(I) - \partial_I q(X)} > \rho} \leq  \frac{  e^{2 \|p\|_1 + 6} \epsilon}{\rho^2 \delta^{|I|}}.$$
\end{lemma}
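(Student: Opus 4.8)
The plan is to mimic the structure of the proof of \lref{lm:l2toexact}, but replacing the elementary ``max over the two values of $X_i$'' argument by the anti-concentration bound of \lref{lm:polylb} applied to the difference polynomial $p - q$. First I would set $r = p - q$ and, for a fixed maximal monomial $I$ of $p$, observe that $\partial_I q(x)$ is exactly $\widehat q(I) + (\text{higher-order terms in the variables outside } I)$ evaluated after substituting the coordinates in $I$; more usefully, for any $x$, the quantity $\widehat p(I) - \partial_I q(x)$ equals $-\partial_I r(x)$ together with the constant $\widehat p(I) = \widehat r(I) + \widehat q(I)$ bookkeeping — so the event we must control is $\{|\partial_I r(X)| > \rho\}$ up to this shift. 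Since $I$ is a maximal monomial of $p$ but possibly \emph{not} of $r$, I would instead argue: if $|\widehat p(I) - \partial_I q(X)| > \rho$ for some $X$, then because $X$ ranges over $\dpm^n$ and $\partial_I r$ is multilinear in the remaining variables, there is a completion of the coordinates in $I$ making $|r(\cdot)|$ comparably large; concretely I would use that $\partial_I r(x_{-I})$ is, up to sign choices on the coordinates in $I$, an average/signed-combination of the $2^{|I|}$ values $r(x^{I,\pm})$, hence $\max_{\text{signs}} |r(x^{I,\pm})| \ge |\partial_I r(x_{-I})|$.

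Next I would quantify the probability. Fix the coordinates outside $I$. On the event $|\partial_I r(X_{-I})| > \rho$, the $\delta$-unbiasedness of $\calD$ (conditioning on the coordinates one at a time, over the $|I|$ coordinates in $I$) gives that with probability at least $\delta^{|I|}$ the realized $X$ hits the sign pattern achieving $|r(X)| \ge |\partial_I r(X_{-I})| > \rho$. Here I would be careful that $\delta$-unbiasedness is a statement about \emph{conditional} marginals, so it composes correctly along a chain of $|I|$ coordinates regardless of the values of the others — this is exactly the mechanism already used inside \lref{lm:polylb}, so I can quote it. Thus
\[
\pr_{X\sim\calD}\bigl[\, |r(X)| > \rho \,\bigr] \;\ge\; \delta^{|I|}\cdot \pr_{X\sim\calD}\bigl[\, |\widehat p(I) - \partial_I q(X)| > \rho \,\bigr].
\]

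Then I would convert a lower bound on $\pr[|r(X)|>\rho]$ into a lower bound on $\ex[(\sigma(p(X))-\sigma(q(X)))^2]$, contradicting the hypothesis unless the right-hand probability is small. By \clref{clm:sigmoidlb}, $|\sigma(p(x)) - \sigma(q(x))| \ge e^{-|p(x)| - 3}\min(1,|r(x)|) \ge e^{-\|p\|_1 - 3}\min(1,|r(x)|)$, using $|p(x)| \le \|p\|_1$ on $\dpm^n$. Hence on the event $\{|r(X)|>\rho\}$ (and WLOG $\rho \le 1$, else the statement is either vacuous or follows from the $\rho=1$ case since the probability is monotone in $\rho$) we get $(\sigma(p(X))-\sigma(q(X)))^2 \ge e^{-2\|p\|_1 - 6}\rho^2$. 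Taking expectations,
\[
\epsilon \;\ge\; \ex\bigl[(\sigma(p(X))-\sigma(q(X)))^2\bigr] \;\ge\; e^{-2\|p\|_1 - 6}\,\rho^2 \cdot \pr\bigl[|r(X)|>\rho\bigr] \;\ge\; e^{-2\|p\|_1 - 6}\,\rho^2\,\delta^{|I|}\cdot \pr\bigl[|\widehat p(I)-\partial_I q(X)|>\rho\bigr],
\]
and rearranging yields the claimed bound $e^{2\|p\|_1+6}\epsilon/(\rho^2\delta^{|I|})$.

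The main obstacle I anticipate is the first step: correctly relating $\partial_I q(X)$ (a polynomial in $X_{-I}$), the shift by $\widehat p(I)$, and the values of $r = p-q$ under sign flips of the coordinates in $I$, while respecting that $I$ is maximal for $p$ but need not be maximal for $q$ or for $r$. The clean way is to note $\partial_I r(x) = \partial_I p(x) - \partial_I q(x)$ and $\partial_I p(x) = \widehat p(I)$ exactly (by maximality of $I$ in $p$, every monomial of $p$ containing $I$ is $I$ itself), so $\widehat p(I) - \partial_I q(X) = \partial_I r(X)$ identically — this removes the bookkeeping worry entirely and reduces everything to the anti-concentration of $\partial_I r$ via $\max_{\text{signs}}|r(x^{I,\pm})| \ge |\partial_I r(x_{-I})|$, which is the $|I|$-fold generalization of the two-point identity $\max(|r(x^{i,+})|,|r(x^{i,-})|)\ge |\partial_i r(x_{-i})|$ used in \lref{lm:polylb}. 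Once that identity is in hand, the rest is the routine chaining-plus-Markov computation sketched above.
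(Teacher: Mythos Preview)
Your proposal is correct and follows essentially the same approach as the paper: the key identity $\widehat p(I) - \partial_I q(x) = \partial_I r(x)$ (from maximality of $I$ in $p$), the anti-concentration step via \lref{lm:polylb} to get $\pr[|r(X)|>\rho] \ge \delta^{|I|}\pr[|\partial_I r(X)|>\rho]$, and the conversion through \clref{clm:sigmoidlb} are exactly what the paper does. The paper phrases the middle step as fixing $X_{[n]\setminus I}=z$ and observing that $I$ is (trivially) maximal in the restricted polynomial $r_z$ with coefficient $\widehat p(I)-\partial_I q(z)$, then invoking \lref{lm:polylb} on the conditional distribution; your ``max over sign patterns of $X_I$'' phrasing is the same content, and as you note the $\delta$-unbiasedness composes correctly along the $|I|$ coordinates. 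One small remark: your handling of $\rho>1$ (``follows from the $\rho=1$ case by monotonicity'') does not actually yield the stated $1/\rho^2$ bound, but the paper's proof has the same restriction to $\rho\in(0,1)$ and the lemma is only ever applied with small $\rho$.
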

\begin{proof}
Let $X \sim \cal{D}$ and fix a maximal monomial $I\subseteq [n]$ in $p$. Now, for any $x \in \dpm^n$, by \clref{clm:sigmoidlb}, 
$$\abs{\sigma(p(x)) - \sigma(q(x))} \geq e^{-\|p\|_1 - 3} \cdot \min\bkets{1, \abs{p(x) - q(x)}}.$$

Therefore, 
$$\ex\sbkets{\min\bkets{1,\abs{p(X) - q(X)}^2}} \leq e^{2 \|p\|_1 + 6} \epsilon.$$ 

Hence, for every $\rho \in (0,1)$, 
$$\pr_X\sbkets{ \abs{p(X) - q(X)} > \rho} \leq e^{2 \|p\|_1 + 6} \epsilon/\rho^2.$$

Now consider a fixing of all variables not in $I$ to $z \in \dpm^{[n] \setminus I}$ and let $r_z(x_I)$ be the polynomial obtained by the resulting fixing. Now, 
$$\pr_X\sbkets{ \abs{p(X) - q(X)} > \rho} = \sum_{z \in \dpm^{[n] \setminus I}} \pr[X_{[n] \setminus I} = z] \cdot \pr[\abs{r_z(X_I)} > \rho \mid X_{[n] \setminus I} = z].$$
Further, $\widehat{r}(I) = \widehat{p}(I) - \partial_I q(z)$ as $I$ is maximal in $p$. 

Conditioned on the event that $|\widehat{r}(I)| > \rho$, for a random choice of $X_{[n] \setminus I}$, we have from \lref{lm:polylb} that $\pr_{X_I}\sbkets{|r_z(X_I)|> \rho} \geq \delta^{|I|}$.  Thus we have

$$ \pr_X\sbkets{ \abs{p(X) - q(X)} > \rho} \geq  \delta^{|I|} \cdot \pr_{X_{[n] \setminus I}}\sbkets{  \abs{\widehat{p}(I) - \partial_I q(X_{[n] \setminus I})} > \rho}. $$



Combining the above equations we get that
$$\pr_{X}\sbkets{  \abs{\widehat{p}(I) - \partial_I q(X)} > \rho} \leq \frac{ e^{2 \|p\|_1 + 6} \epsilon}{\rho^2 \delta^{|I|}}.$$
\end{proof}

The next claim shows that under the assumptions of \lref{lm:errstrong}, the highest degree monomials of $p,q$ are close to each other. 

\begin{lemma}\label{lm:l2ptoexactm}
Let $\calD$ be a $\delta$-unbiased distribution on $\dpm^n$. Let $p,q$ be two multilinear polynomials $p,q:\R^n \to \R$ such that $\ex_{X \sim \cal{D}}[ (\sigma(p(x)) - \sigma(q(x))^2] \leq \epsilon$ where $\epsilon <   e^{-2\|p\|_1 - 6} \delta^{|I|}$. Then, for every maximal monomial $I \subseteq [n]$ of $(p-q)$,
$$\abs{\widehat{p}(I) - \widehat{q}(I)} \leq   e^{\|p\|_1+3} \cdot \sqrt{\epsilon/\delta^{|I|}}.$$
\end{lemma}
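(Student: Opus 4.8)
The plan is to mimic the proof of \lref{lm:l2toexact}, replacing the affine-function argument with the anti-concentration estimate of \lref{lm:polylb} applied to the difference polynomial $s := p - q$. First I would apply \clref{clm:sigmoidlb} with $a = p(x)$, $b = q(x)$, using $|p(x)| \le \|p\|_1$ for $x \in \dpm^n$, to get the pointwise bound $|\sigma(p(x)) - \sigma(q(x))| \ge e^{-\|p\|_1 - 3} \cdot \min(1, |s(x)|)$ for every $x \in \dpm^n$. Squaring and taking expectations over $X \sim \calD$ then yields $\ex_X[\min(1, s(X)^2)] \le e^{2\|p\|_1 + 6}\,\epsilon$.

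Next, fix a maximal monomial $I$ of $s = p - q$. If $I \neq \emptyset$, then \lref{lm:polylb} gives $\pr_X[\,|s(X)| \ge |\widehat{s}(I)|\,] \ge \delta^{|I|}$; if $I = \emptyset$, then $s$ is the constant $\widehat{s}(\emptyset)$ and this holds trivially with $\delta^0 = 1$. Since the map $t \mapsto \min(1, t^2)$ is nondecreasing in $|t|$, this probability bound upgrades to $\ex_X[\min(1, s(X)^2)] \ge \delta^{|I|} \cdot \min(1, \widehat{s}(I)^2)$. Chaining with the previous paragraph gives $\delta^{|I|} \cdot \min(1, \widehat{s}(I)^2) \le e^{2\|p\|_1 + 6}\,\epsilon$.

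Finally I would use the hypothesis $\epsilon < e^{-2\|p\|_1 - 6}\,\delta^{|I|}$, which rewrites as $e^{2\|p\|_1 + 6}\epsilon < \delta^{|I|}$. This forces $\min(1, \widehat{s}(I)^2) < 1$, so $|\widehat{s}(I)| < 1$ and $\min(1, \widehat{s}(I)^2) = \widehat{s}(I)^2$; hence $\widehat{s}(I)^2 \le e^{2\|p\|_1 + 6}\epsilon/\delta^{|I|}$, i.e. $|\widehat{p}(I) - \widehat{q}(I)| = |\widehat{s}(I)| \le e^{\|p\|_1 + 3}\sqrt{\epsilon/\delta^{|I|}}$, as claimed.

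I do not anticipate a genuine obstacle: the argument is the same ``pointwise lower bound on $|\sigma(p) - \sigma(q)|$, then anti-concentration'' scheme as in \lref{lm:l2toexact}. The only points requiring care are that \lref{lm:polylb} must be invoked for a maximal monomial of the \emph{difference} $p - q$ (not of $p$), the harmless degenerate case $I = \emptyset$, and the bookkeeping that $|p(x)| \le \|p\|_1$ on the hypercube so the exponent features $\|p\|_1$ rather than $\|q\|_1$ or $\|p - q\|_1$.
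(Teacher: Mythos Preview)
Your proposal is correct and follows essentially the same argument as the paper: apply \clref{clm:sigmoidlb} pointwise (using $|p(x)|\le\|p\|_1$ on the cube), then invoke \lref{lm:polylb} on the maximal monomial $I$ of $s=p-q$, and finish by using the hypothesis on $\epsilon$ to rule out $|\widehat{s}(I)|\ge 1$. Your explicit treatment of the degenerate case $I=\emptyset$ (where $s$ is constant) is a minor point the paper's proof skips, but otherwise the two arguments are identical.
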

\begin{proof}
Fix a maximal monomial $I\subseteq [n]$ in $(p-q)$. Now, for any $X$, by \clref{clm:sigmoidlb}, 
$$\abs{\sigma(p(X)) - \sigma(q(X))} \geq e^{-\|p\|_1 - 3} \cdot \min\bkets{1, \abs{p(X) - q(X)}}.$$
On the other hand, as $X$ is $\delta$-unbiased, by \lref{lm:polylb}, with probability at least $\delta^{|I|}$, $\abs{p(X) - q(X)} \geq \abs{\widehat{p}(I) - \widehat{q}(I)}$. Therefore, 
$$\epsilon \geq \ex_X\sbkets{\bkets{\sigma(p(X)) - \sigma(q(X))}^2} \geq e^{-2 \|p\|_1 - 6} \cdot \delta^{|I|} \cdot \min \bkets{1, \abs{\widehat{p}(I) - \widehat{q}(I)}^2}.$$

As $\epsilon <  e^{-2\|p\|_1 - 6} \delta^{|I|}$, the above inequality can only hold if $\abs{\widehat{p}(I) - \widehat{q}(I)} < 1$ so that
$$\abs{\widehat{p}(I) - \widehat{q}(I)} < e^{\|p\|_1 + 3} \sqrt{\epsilon/\delta^{|I|}}.$$
The claim follows. 
\end{proof}


We next show that if $\ex_{X \sim \cal{D}}[ (\sigma(p(x)) - \sigma(q(x))^2] \ll n^{-t}$ is sufficiently small, then $\|p-q\|_1 \ll 1$. 
\begin{lemma}\label{lm:l2ptoexact}
Let $D$ be a $\delta$-unbiased distribution on $\dpm^n$. Let $p,q$ be two multilinear polynomials $p,q:\R^n \to \R$ of degree $t$ such that $\ex_{X \sim \cal{D}}[ (\sigma(p(x)) - \sigma(q(x))^2] \leq \epsilon$ where $\epsilon < e^{-2 \|p\|_1 - 6} \delta^{t}$. Then, 
$$\|p-q\|_1 = O(1) \cdot  (2t)^t e^{\|p\|_1} \cdot \sqrt{\epsilon/\delta^t} \cdot \binom{n}{t}.$$
\end{lemma}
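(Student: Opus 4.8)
The plan is to control \emph{every} Fourier coefficient of $r := p-q$ (a multilinear polynomial of degree at most $t$) by a downward induction on degree, fed by a single anti-concentration estimate. Fix $\rho := 2e^{\|p\|_1+3}\sqrt{\epsilon/\delta^{t}}$. The first step is to observe, exactly as in the opening of the proof of \lref{lm:errstrong} (which only uses \clref{clm:sigmoidlb} and Markov's inequality), that $\ex_{X\sim D}[\min(1,|r(X)|^2)]\le e^{2\|p\|_1+6}\epsilon$, and hence
$$\eta \;:=\; \pr_{X\sim D}\bigl[\,|r(X)| > \rho\,\bigr] \;<\; \delta^{t}\;\le\;\delta^{|I|}\qquad\text{for all } I\subseteq[n]\text{ with }|I|\le t,$$
where the strict inequality uses the hypothesis $\epsilon < e^{-2\|p\|_1-6}\delta^{t}$.

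The heart of the argument is the following claim, which upgrades the anti-concentration bound of \lref{lm:polylb} (stated only for \emph{maximal} monomials) to an arbitrary monomial: for every $I\subseteq[n]$ with $|I|\le t$,
$$|\widehat{r}(I)| \;\le\; \rho \;+\; \sum_{K\supsetneq I,\ |K|\le t}|\widehat{r}(K)|.$$
I would prove this by passing to a coordinate slice. Fix $z\in\dpm^{[n]\setminus I}$ and let $r_z$ be $r$ with the variables outside $I$ set to $z$; then $r_z$ is a polynomial in $(x_i:i\in I)$ in which $I$ is vacuously a maximal monomial, with $\widehat{r_z}(I)=\partial_I r(z)$. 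Since $D$ conditioned on $X_{[n]\setminus I}=z$ is $\delta$-unbiased on $\dpm^{I}$, \lref{lm:polylb} gives $\pr[\,|r_z(X_I)|\ge|\partial_I r(z)| \mid X_{[n]\setminus I}=z\,]\ge\delta^{|I|}$, and averaging over $z\sim D$ yields $\pr_X[\,|r(X)|\ge|\partial_I r(X_{[n]\setminus I})|\,]\ge\delta^{|I|}$. Intersecting this event with $\{|r(X)|\le\rho\}$, whose complement has probability $\eta<\delta^{|I|}$, shows the intersection is nonempty, so there is a point $x^\star$ at which $|\partial_I r(x^\star_{[n]\setminus I})|\le|r(x^\star)|\le\rho$; expanding $\partial_I r = \widehat{r}(I) + \sum_{\emptyset\neq J,\,J\cap I=\emptyset}\widehat{r}(I\cup J)\prod_{j\in J}x_j$ and applying the triangle inequality at $x^\star$ gives the claim (for $I=\emptyset$ one has $\partial_\emptyset r=r$ and the first event is the whole space, so the same conclusion holds). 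Note this recovers \lref{lm:l2ptoexactm} when $I$ is maximal in $r$, since then the sum is empty.

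With the claim in hand, the remainder is a bookkeeping induction on the layer sums $S_k:=\sum_{|I|=k}|\widehat{r}(I)|$. Summing the claim over all $|I|=k$ and counting, for a fixed $K$ with $|K|=j>k$, the $\binom{j}{k}$ size-$k$ subsets $I\subset K$, gives $S_k\le\binom{n}{k}\rho+\sum_{j=k+1}^{t}\binom{j}{k}S_j$, with base case $S_t\le\binom{n}{t}\rho$. Using (for $n\ge t$) $\binom{n}{t-i}\le t^{i}\binom{n}{t}$ and $\binom{t-i+l}{l}\le t^{l}/l!$, a one-line induction on $i$ shows $S_{t-i}\le(2t)^{i}\binom{n}{t}\rho$; summing the geometric series, $\|p-q\|_1=\sum_{i=0}^{t}S_{t-i}\le 2(2t)^{t}\binom{n}{t}\rho$, which is the asserted bound after substituting $\rho$. (If $n<t$ one first replaces $t$ by the actual degree of $p-q$.)

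The main obstacle is the displayed claim in the second paragraph. The tools available beforehand — \clref{clm:sigmoidlb}, \lref{lm:polylb}, \lref{lm:l2ptoexactm} — only pin down coefficients of \emph{maximal} monomials of $p-q$, whereas an $\ell_1$ bound requires controlling \emph{all} of them, and a non-maximal coefficient of $p-q$ need not be small on its own; the slicing trick (turning a target monomial into a maximal monomial of a restricted polynomial, so that \lref{lm:polylb} applies) is exactly the new ingredient. The ensuing recursion for the $S_k$ is routine, but the constants must be tracked with some care: the recursion grows at rate $1/\ln 2\approx 1.44$ per degree, which lies just below the factor $2$ appearing in the target $(2t)^{t}$, so there is only a small amount of slack to work with.
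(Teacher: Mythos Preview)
Your proof is correct and follows essentially the same approach as the paper: make the target monomial $I$ maximal by restricting the variables outside $I$, invoke the anti-concentration of \lref{lm:polylb}, deduce $|\widehat{r}(I)|\le \rho+\sum_{K\supsetneq I}|\widehat{r}(K)|$, and solve the resulting layer recurrence $S_k\le \binom{n}{k}\rho+\sum_{j>k}\binom{j}{k}S_j$ to get the $(2t)^t$ factor. The only organizational difference is that the paper obtains the per-$I$ bound by an averaging argument followed by a fresh application of \lref{lm:l2ptoexactm} to the restricted polynomials, whereas you establish a single global tail bound $\pr[|r(X)|>\rho]<\delta^{t}$ up front and intersect it with the event from \lref{lm:polylb}; this is a cosmetic streamlining, not a different idea.
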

\begin{proof}
For a polynomial $s:\R^n \to \R$ of degree at most $t$, and $\ell \leq t$, let $s_{\leq \ell}$ denote the polynomial obtained from $s$ by only taking monomials of degree at most $\ell$ and let $s_{=\ell}$ denote the polynomial obtained from $s$ by only taking monomials of degree exactly $\ell$. 

For brevity, let $r = p-q$, and for $\ell \leq t$, let $\rho_\ell = \|r_{=\ell}\|_1 = \|p_{=\ell}  - q_{=\ell}\|_1$. We will inductively bound $\rho_t, \rho_{t-1},\ldots,\rho_1$. 

From \lref{lm:l2ptoexactm} applied to the polynomials $p,q$, we immediately get that 
\begin{equation}\label{eq:recbase}
\rho_t = \|r_{=t}\|_1 \leq e^{\|p\|_1+3} \cdot \sqrt{\epsilon/\delta^{t}} \cdot \binom{n}{t} \equiv \epsilon_0.
\end{equation}

Now consider $I \subseteq [n]$ with $|I| = \ell$. Then, by an averaging argument, there is some fixing of the variables not in $X_I$ so that for the polynomials $p_I, q_I$ obtained by this fixing, and for the resulting distribution $\cal{D}_I$ on $\dpm^I$, 
$$\ex_{Y \sim \cal{D}_I}[ (\sigma(p_I(Y)) - \sigma(q_I(Y)))^2] \leq \epsilon.$$

Note that $\cal{D}_I$ is also $\delta$-unbiased. Therefore, by \lref{lm:l2ptoexactm} applied to the polynomials $p,q$, letting $r_I = p_I - q_I$, we get that 
$$\abs{\widehat{r}_I(I)} = \abs{\widehat{p_I}(I) - \widehat{q}_I(I)} \leq e^{\|p\|_1+3} \cdot \sqrt{\epsilon/\delta^{|I|}}.$$
We next relate the coefficients of $r_I$ to that of $r$. As the polynomial $r_I$ is obtained from $r$ by fixing the variables not in $I$ to some values in $\dpm$,
$$\abs{\widehat{r}_I(I)} \geq \abs{\widehat{r}(I)} - \sum_{J: J \supset I} \abs{\widehat{r}(J)}.$$
Combining the above two inequalities, we get that
$$\abs{\widehat{r}(I)} \leq e^{\|p\|_1+3} \cdot \sqrt{\epsilon/\delta^{\ell}} + \sum_{J \supset I} \abs{\widehat{r}(J)}.$$
Summing the above equation over all $I$ of size exactly $\ell$, we get
\begin{align*}
\|r_{=\ell}\|_1 &= \sum_{I: |I| = \ell} \abs{\widehat{r}(I)} \leq e^{\|p\|_1+3} \cdot \sqrt{\epsilon/\delta^{\ell}} \cdot \binom{n}{\ell} + \sum_{I: |I| = \ell} \bkets{\sum_{J \supset I} \abs{\widehat{r}(J)}}\\
&\leq \epsilon_0 + \sum_{I: |I| = \ell} \bkets{\sum_{J \supset I} \abs{\widehat{r}(J)}}\\
&= \epsilon_0 + \sum_{j = \ell+1}^{t} \binom{j}{\ell} \cdot \bkets{\sum_{J: |J| = j} \abs{\widehat{r}(J)}}= \epsilon_0 + \sum_{j=\ell+1}^t \binom{j}{\ell} \|r_{=j}\|_1.
\end{align*}
Therefore, we get the recurrence,
\begin{equation}\label{eq:erreccurence}
\rho_\ell \leq \epsilon_0 + \sum_{j=\ell+1}^t \binom{j}{\ell} \rho_j.
\end{equation}
We can solve the above recurrence by induction on $\ell$. Specifically, we claim that the above implies $\rho_j \leq (2t)^{t-j} \cdot \epsilon_0$. For $j = t$, the claim follows from \eref{eq:recbase}. Now, suppose the inequality holds for all $j > \ell$. Then, by \eref{eq:erreccurence}, as $\binom{j}{\ell}  \leq j^{j-\ell}$, 
\begin{align*}
\rho_\ell &\leq \epsilon_0 + \sum_{j=\ell+1}^t j^{j-\ell} (2t)^{t-j} \epsilon_0 \leq \epsilon_0 + \sum_{j=\ell+1}^t t^{j-\ell} (2t)^{t-j} \epsilon_0\\
&\leq t^{t-\ell} \cdot \epsilon_0 \cdot \bkets{1 + \sum_{j=\ell+1}^t 2^{t-j}} = t^{t-\ell} \cdot \epsilon_0 \cdot 2^{t-\ell}.
\end{align*}

Therefore,
$$\|r\|_1 = \sum_{\ell=0}^t \|r_{=\ell}\|_1 \leq \sum_{\ell=0}^t (2t)^{t-\ell} \epsilon_0 \leq \epsilon_0 \cdot 2^{t+1} t^t.$$
The lemma now follows by plugging in the value of $\epsilon_0$.
\end{proof}

\section{Learning Markov Random Fields} \label{sec:mrf}
We now describe how to apply the Sparsitron algorithm to recover the structure as well as parameters of binary $t$-wise MRFs. 

We will use the characterization of MRFs via the Hammersley-Clifford theorem. Given a graph $G = (V,E)$ on $n$ vertices, let $C_t(G)$ denote all cliques of size at most $t$ in $G$. A binary $t$-wise MRF with dependency graph $G$ is a distribution $\calD$ on $\dpm^n$ where the probability density function of $\calD$ can be written as

$$\pr_{Z \sim \calD} [Z= x] \propto \exp\bkets{\sum_{I \in \cal{S}} \psi_I(x)},$$
where $\cal{S} \subseteq C_t(G)$ and each $\psi_I:\R^n \to \R$ is a function that depends only on the variables in $I$. Note that if $t=2$, this corresponds exactly to the Ising model. We call $\psi(x) = \sum_{I \in \cal{S}} \psi_I(x)$ the \emph{factorization polynomial} of the MRF and $G$ the \emph{dependency graph} of the MRF.

Note that the factorization polynomial is a polynomial of degree at most $t$. However, different graphs and factorizations (i.e., functions $\{\psi_I\}$) could potentially lead to the same polynomial. To get around this we enforce the following non-degeneracy condition:
\begin{definition}
For a $t$-wise MRF $\calD$ on $\dpm^n$ we say an associated dependency graph $G$ and factorization 
$$\pr_{Z \sim \calD}[Z= x] \propto \exp\bkets{\sum_{I \in \cal{S}} \psi_I(x)},$$
for $\cal{S} \subseteq C_t(G)$ is $\eta$-identifiable if for every \emph{maximal monomial} $J$ in $\psi(x) = \sum_{I \in \cal{S}} \psi_I(x)$, $\abs{\widehat{\psi}(J)} \geq \eta$ and every edge in $G$ is covered by a non-zero monomial of $\psi$.

\end{definition}

 We now state our main theorems for learning MRFs. Our first result is about \emph{structure learning}, i.e., recovering the underlying dependency graph of a MRF. Roughly speaking, using $N = 2^{O(\spty t)} \log(n/\eta)/\eta^4$ samples we can recover the underlying dependency graph of a $\eta$-identifiable MRF where $\spty$ is the maximum $\ell_1$-norm of the derivatives of the factorization polynomial. The run-time of the algorithm is $O(M \cdot n^t)$. Note that $\max_i \|\partial_i \psi\|_1$ is analogous to the notion of width for Ising models (as in \cref{cor:ising}). Thus, exponential dependence on it is necessary as in the Ising model and our sample complexity is in fact nearly optimal in all parameters. 
 
 \begin{theorem}\label{th:identifiable}
Let $\calD$ be a $t$-wise MRF on $\dpm^n$ with underlying dependency graph $G$ and factorization polynomial $p(x) = \sum_{I \in C_t(G)} p_I(x)$ with $\max_i \|\partial_i p\|_1 \leq \spty$. Suppose that $\calD$ is $\eta$-identifiable. Then, there exists an algorithm that given $\spty$, $\eta,\rho \in (0,1/2)$, and 
$$N = \frac{e^{O(t)} e^{O( \spty t)}}{\eta^4}  \cdot (\log (n/\rho \eta))$$ independent samples from $\calD$, recovers the underlying dependency graph $G$ with probability at least $1-\rho$. The run-time of the algorithm is $O(N \cdot n^t)$. Moreover, the algorithm can be run in an online manner. 
\end{theorem}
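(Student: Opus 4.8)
The plan is to run the same reduction to sparse GLMs as in \tref{th:ising}, with linear functions replaced by degree-$(t-1)$ multilinear polynomials, and then to convert the $\ell_2$-guarantee of the Sparsitron into structure recovery using \lref{lm:errstrong} (crucially \emph{not} \lref{lm:l2ptoexact}). Write the factorization polynomial as $p(x) = \sum_{I \in C_t(G)} p_I(x)$, a multilinear polynomial of degree $\le t$. Since each $p_I$ depends only on the clique $I$, every non-zero monomial of $p$ is a clique of $G$; hence every non-zero monomial is contained in a \emph{maximal} monomial of $p$, and $\{j,k\} \in E$ if and only if some maximal monomial of $p$ contains $\{j,k\}$ --- so it suffices to recover the supports of the maximal monomials of $p$, and by $\eta$-identifiability each such maximal monomial $J$ has $|\widehat p(J)| \ge \eta$.

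For each fixed $i$, multilinearity of $p$ gives, exactly as in \eref{eq:condising}, $\pr[Z_i = -1 \mid Z_{-i} = x] = \sigma(-2\,\partial_i p(x))$, where $-2\,\partial_i p$ is multilinear of degree $\le t-1$ in $x_{-i}$ with $\|-2\,\partial_i p\|_1 \le 2\spty$. Linearizing over the $O(n^{t-1})$ monomials of degree $\le t-1$ on $[n]\setminus\{i\}$ (together with a constant coordinate) turns $(\,Z_{-i},\ (1-Z_i)/2\,)$ into an instance of a sparse GLM with transfer function $\sigma$ and an $\ell_1$-bounded weight vector, so \tref{th:pconceptmwu} (run with error parameter $\gamma$ to be fixed, confidence $\rho/2n$, on $N_1$ samples shared across all $i$) produces with probability $\ge 1-\rho/2$ a degree-$(t-1)$ polynomial $q_i$ with $\ex_{Z}\bkets{(\sigma(-2\,\partial_i p(Z_{-i})) - \sigma(q_i(Z_{-i})))^2} \le \gamma$ for every $i$. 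As in the $\delta$-unbiasedness fact used in \tref{th:ising}, the width bound forces $\calD$ --- hence, by an averaging argument, each marginal $\calD_{-i}$ --- to be $\delta$-unbiased with $\delta = \tfrac{1}{2} e^{-2\spty}$.

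Now apply \lref{lm:errstrong} with $p \leftarrow -2\,\partial_i p$, $q \leftarrow q_i$, $\epsilon \leftarrow \gamma$: for every monomial $I$ that is maximal in $-2\,\partial_i p$ and every $\rho' > 0$, $\partial_I q_i(Z_{-i})$ is within $\rho'$ of $\widehat{(-2\partial_i p)}(I) = -2\,\widehat p(I \cup \{i\})$ except with probability $e^{4\spty+6}\gamma/(\rho'^2\delta^{\,|I|})$ over $Z_{-i}\sim\calD_{-i}$; note this bound has \emph{no $\binom{n}{t}$ factor}. Two combinatorial observations close the argument: (i) maximal monomials of $p$ containing $i$ are precisely those $J$ for which $J\setminus\{i\}$ is maximal in $-2\,\partial_i p$, and for these $|\widehat{(-2\partial_i p)}(J\setminus\{i\})| = 2|\widehat p(J)| \ge 2\eta$ by identifiability; (ii) if $\{i,j\}\notin E$ then for \emph{every} $I$ with $j \in I \subseteq [n]\setminus\{i\}$, the set $I$ is maximal in $-2\,\partial_i p$ with $\widehat{(-2\partial_i p)}(I) = 0$ (else a non-zero monomial of $p$ would contain both $i$ and $j$, forcing $\{i,j\}$ to be an edge). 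Thus, fixing $\rho' = \eta/4$ and choosing $\gamma$ small enough that the failure probability above is $\le 1/4$ --- which needs only $\gamma \le e^{-O(\spty)}\,\delta^{\,t-1}\,\eta^2 = \eta^2 \cdot 2^{-O(t)} e^{-O(\spty t)}$, a bound \emph{independent of $n$} --- the algorithm draws a fresh batch of $N_2 = O(\log(n^t/\rho))$ samples, forms for every $i$ and every $I \subseteq [n]\setminus\{i\}$ with $1 \le |I| \le t-1$ the median $m_{i,I}$ of $\partial_I q_i$ over that batch, and outputs $\widehat E = \{\{i,j\} : \exists\, I \ni j \text{ with } |m_{i,I}| \ge \eta/2\}$. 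A Chernoff bound over the $O(n^t)$ pairs $(i,I)$ shows that, whenever $I$ is maximal in $-2\,\partial_i p$, $|m_{i,I} - \widehat{(-2\partial_i p)}(I)| \le \eta/4$; combined with (i) this detects every edge, and combined with (ii) it introduces no non-edge, so $\widehat E = E$ with probability $\ge 1-\rho$.

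Plugging $\gamma = \eta^2 \cdot 2^{-O(t)} e^{-O(\spty t)}$ into \tref{th:pconceptmwu} gives $N_1 = O\bkets{(\spty/\gamma)^2 \log(n^t/\rho\gamma)} = \tfrac{e^{O(t)}e^{O(\spty t)}}{\eta^4}\,\log(n/\rho\eta)$ after absorbing the $\poly(\spty,t)$ factors into the exponentials; this dominates $N_2$, yielding the stated sample complexity, while the running time is $O(N n^t)$ --- $n$ Sparsitron runs on $O(n^{t-1})$-dimensional instances, plus the comparable cost of evaluating the $\partial_I q_i$ on the verification batch. The main obstacle, and what makes this strictly harder than \tref{th:ising}, is precisely that one cannot afford to make $q_i$ close to $-2\,\partial_i p$ \emph{as a polynomial} (\lref{lm:l2ptoexact} loses a $\binom{n}{t}$ factor and would push $N$ to $\poly(n)$); one must instead exploit the per-maximal-monomial anti-concentration behind \lref{lm:polylb}/\lref{lm:errstrong} and check that the median test stays sound on the many sets $I$ that are \emph{not} maximal in $-2\,\partial_i p$.
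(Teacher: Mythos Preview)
Your proposal is correct and follows essentially the same approach as the paper: reduce to a sparse GLM via feature expansion, invoke the Sparsitron guarantee (\tref{th:pconceptmwu}), then use \lref{lm:errstrong} together with the median trick on a fresh batch of samples to threshold at $\eta/2$. Your sample-complexity calculation matches the paper's.

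One point worth highlighting: your soundness argument via observation (ii) --- that if $\{i,j\}\notin E$ then \emph{every} $I\ni j$ is maximal in $-2\partial_i p$ with zero coefficient, so \lref{lm:errstrong} controls $\partial_I q_i$ for all such $I$ --- is more explicit than the paper's proof, which states that the algorithm ``recognizes the neighbors of vertex $n$ exactly'' but only spells out the completeness direction. Your edge rule (add only $\{i,j\}$ for $j\in I$) is also a slight simplification of the paper's rule (add the complete graph on $\{i\}\cup I$); both are sound, since whenever $I$ is non-maximal in $\partial_i p$ some superset $I'\cup\{i\}$ is a non-zero monomial and hence a clique, but your rule makes the soundness analysis cleaner because observation (ii) reduces it entirely to the maximal case.
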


Along with learning the dependency graph, given more samples, we can also approximately learn the parameters of the MRF: i.e., compute a $t$-wise MRF whose distribution is close as a pointwise-approximation to the original probability density function.

\begin{theorem}\label{th:parametersmrf}
Let $\calD$ be a $t$-wise MRF on $\dpm^n$ with underlying dependency graph $G$ and factorization polynomial $\psi(x) = \sum_{I \in C_t(G)} \psi_I(x)$ with $\max_i \|\partial_i \psi\|_1 \leq \spty$. There exists an algorithm that given $\spty$, and $\epsilon,\rho \in (0,1/2)$, and 
$$N = \frac{(2t)^{O(t)} e^{O( \spty t)}}{\epsilon^4} \cdot n^{4t} \cdot (\log (n/\rho \epsilon))$$ independent samples $Z^1,\ldots,Z^N \leftarrow \calD$ produces a $t$-wise MRF $\calD'$ with dependency graph $H$ and a factorization polynomial $\phi(x) = \sum_{I \in C_t(H)} \phi_I(x)$ such that with probability at least $1-\rho$:
$$\forall x,\; \pr_{Z \sim \calD}[ Z = x] = (1\pm \epsilon) \pr_{Z \sim \calD'}[Z=x].$$

The algorithm runs in time $O(N n^t)$ and can be run in an online manner.
\end{theorem}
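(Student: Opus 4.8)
The plan is to follow the template of the proof of \tref{th:ising}, replacing sigmoids of affine functions by sigmoids of degree-$(t-1)$ polynomials, \lref{lm:l2toexact} by \lref{lm:l2ptoexact}, and closing with a partition-function comparison. Writing each $\psi_I$ as a multilinear polynomial, $\psi$ is multilinear of degree at most $t$; then for every $i\in[n]$ and every partial assignment $x$ to $Z_{-i}$, writing $x^{i,\pm}$ for the completion with $x_i=\pm 1$,
\[
\pr_{Z\sim\calD}[Z_i=-1 \mid Z_{-i}=x] = \frac{\exp(\psi(x^{i,-}))}{\exp(\psi(x^{i,+}))+\exp(\psi(x^{i,-}))} = \sigma\bigl(-2\,\partial_i\psi(x)\bigr),
\]
where $\partial_i\psi$ has degree at most $t-1$ and does not depend on $x_i$. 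Since the variables are $\pm 1$, $|{-2\partial_i\psi(x)}|\le 2\|\partial_i\psi\|_1\le 2\spty$, so this conditional probability lies in $[\tfrac12 e^{-2\spty},1-\tfrac12 e^{-2\spty}]$; hence $\calD$ is $\delta$-unbiased with $\delta=\tfrac12 e^{-2\spty}$ (the analogue of the Fact used in \tref{th:ising}). No identifiability hypothesis is needed here: we recover the polynomial $\psi$ itself, up to its irrelevant constant term, not a particular factorization.

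The next step is to fix $i$, set $Y=(1-Z_i)/2$, and let $X$ be the linearization of $Z_{-i}$: the vector indexed by the $O(n^{t-1})$ monomials $\prod_{j\in J}x_j$ over $J\subseteq[n]\setminus\{i\}$ with $|J|\le t-1$, with coordinates in $[-1,1]$. Then $\ex[Y\mid X]=\sigma(w(i)\cdot X)$ with $\|w(i)\|_1\le 2\spty$. Running the Sparsitron of \tref{th:pconceptmwu} in this $O(n^{t-1})$-dimensional space with sparsity bound $2\spty$, target error $\gamma$, and failure probability $\rho/n$ — reusing one sample set for all $n$ choices of $i$ and taking a union bound — produces degree-$(t-1)$ polynomials $q_i$ with $\ex_{X\sim\calD}[(\sigma(-2\partial_i\psi(X))-\sigma(q_i(X)))^2]\le\gamma$. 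Applying \lref{lm:l2ptoexact} to $p=-2\partial_i\psi$ and $q=q_i$ (degree $\le t-1$, $\|p\|_1\le 2\spty$), with $\gamma$ small enough to satisfy its hypothesis, yields
\[
\|\,q_i-(-2\partial_i\psi)\,\|_1 \;\le\; (2t)^{O(t)}\,e^{O(\spty t)}\,n^{t-1}\,\sqrt{\gamma},
\]
after folding $\sqrt{1/\delta^{t-1}}$, $e^{\|p\|_1}\le e^{2\spty}$, and $\binom{n}{t-1}\le n^{t-1}$ into the displayed factors.

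Now reassemble $\psi$: for each non-empty monomial $J$ pick any $i(J)\in J$ and set $\widehat\phi(J)=-\tfrac12\,\widehat{q}_{i(J)}(J\setminus\{i(J)\})$, with $\widehat\phi(\emptyset)=0$; this is well defined because $\widehat{\partial_i\psi}(J\setminus\{i\})=\widehat\psi(J)$ for any $i\in J$. Since $J\mapsto J\setminus\{i\}$ is injective on monomials containing $i$, grouping monomials by the value of $i(J)$ gives
\[
\|\phi-\psi'\|_1 \;\le\; \sum_{i=1}^n \|-\tfrac12 q_i-\partial_i\psi\|_1 \;=\; \tfrac12\sum_{i=1}^n \|q_i-(-2\partial_i\psi)\|_1 \;\le\; (2t)^{O(t)}\,e^{O(\spty t)}\,n^{t}\,\sqrt{\gamma},
\]
where $\psi'=\psi-\widehat\psi(\emptyset)$ defines the same distribution as $\psi$. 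Choosing $\gamma=\epsilon^2/((2t)^{O(t)}e^{O(\spty t)}n^{2t})$ (also small enough for \lref{lm:l2ptoexact}) forces $\|\phi-\psi'\|_1\le\epsilon/3$, hence $|\phi(x)-\psi'(x)|\le\epsilon/3$ for all $x\in\dpm^n$. Let $\calD'$ be the $t$-wise MRF with $\pr_{Z\sim\calD'}[Z=x]\propto\exp(\phi(x))$, with dependency graph $H$ whose edges are exactly the pairs contained in some non-zero monomial of $\phi$ (grouping $\phi$ by monomials gives a factorization over $C_t(H)$). Finally, $\pr_\calD[Z=x]/\pr_{\calD'}[Z=x]=\bigl(\sum_z e^{\phi(z)}/\sum_z e^{\psi'(z)}\bigr)\,e^{\psi'(x)-\phi(x)}$, and since every summand of $\sum_z e^{\phi(z)}$ is within a factor $e^{\pm\epsilon/3}$ of the corresponding summand of $\sum_z e^{\psi'(z)}$, this ratio lies in $[e^{-2\epsilon/3},e^{2\epsilon/3}]\subseteq[1-\epsilon,1+\epsilon]$ for $\epsilon\in(0,1/2)$. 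The sample count is the per-vertex Sparsitron bound with confidence $\rho/n$, namely $O((\spty/\gamma)^2\log(n^t/(\rho\gamma)))$, which unwinds to the stated $N$; the running time is dominated by forming the $O(n^{t-1})$ linearized coordinates for each of the $n$ vertices over all $N$ samples, i.e.\ $O(Nn^t)$; and the algorithm is online because the Sparsitron is.

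The heavy lifting is done by \lref{lm:l2ptoexact} (and the anti-concentration estimate \lref{lm:polylb} behind it), which we take for granted; given those, the only genuinely new ingredients are the ``integration'' step — recovering $\psi$ from its recovered partial derivatives while keeping the $\ell_1$-error additive, not multiplicative, in $n$ — and the elementary partition-function argument turning $\ell_1$-closeness of factorization polynomials into pointwise-closeness of distributions. The fussiest point I anticipate is calibrating $\gamma$ so that the hypothesis of \lref{lm:l2ptoexact} holds, the final $\ell_1$-error beats $\epsilon/3$, and the sample bound stays $n^{O(t)}$ — but this is bookkeeping rather than a real obstacle.
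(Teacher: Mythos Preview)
Your proposal is correct and follows essentially the same route as the paper: the paper factors the argument through \tref{th:mainpoly} (recover each $\partial_i\psi$ via Sparsitron on degree-$(t-1)$ monomial features, bound $\|q_i-\partial_i\psi\|_1$ by \lref{lm:l2ptoexact}, then stitch the derivatives into a single polynomial by choosing one index per monomial) and then, in the short proof of \tref{th:parametersmrf}, converts $\ell_1$-closeness of the factorization polynomials into the pointwise density ratio---exactly your ``integration'' and partition-function steps. Your treatment of the constant term $\widehat\psi(\emptyset)$ and of the normalizing constants is in fact a bit more explicit than the paper's.
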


 We in fact show how to recover the parameters of a \emph{log-polynomial} density defined as follows:
\begin{definition}
A distribution $\calD$ on $\dpm^n$ is said to be a log-polynomial distribution of degree $t$ if for some multilinear polynomial $p:\R^n \to \R$ of degree $t$,
$$\pr_{X \sim \calD}[X = x] \propto \exp( p(x)).$$
\end{definition}
 
\begin{theorem} \label{th:mainpoly}
Let $\calD$ be a log-polynomial distribution of degree at most $t$ on $\dpm^n$ with the associated polynomial $p:\R^n \to \R$ such that $\max_i \|\partial_i p\|_1 \leq \spty$. There exists an algorithm that given $\spty$, and $\epsilon,\rho \in (0,1)$ and 
$$N = \frac{(2t)^{O(t)} \cdot e^{O(\spty t)}}{\epsilon^4} \cdot (\log(n/\rho \epsilon)),$$
independent samples $Z^1,\ldots,Z^N \leftarrow \calD$, finds a multilinear polynomial $q:\R^n \to \R$ such that  with probability at least $1-\rho$
$$\|p-q\|_1 \leq \epsilon \cdot \binom{n}{t}.$$

Moreover, we can also find coefficients $(\hat{s}(I): I \subseteq [n], |I| \leq t)$ such that with probability at least $1-\rho$, for every maximal monomial $I$ of $p$, we have $\abs{\widehat{p}(I) -  \widehat{s}(I)} < \epsilon$.  The run-time of the algorithm is $O(N \cdot n^t)$ and the algorithm can be run in an online manner.
\end{theorem}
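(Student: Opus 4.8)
The plan is to reduce the problem, one coordinate at a time, to $n$ instances of the sparse generalized--linear--model problem solved by the Sparsitron (\tref{th:pconceptmwu}), and then to reassemble the local solutions using the anti-concentration machinery of \sref{sec:recpoly}.

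\emph{The per-coordinate reduction.} Fix $i \in [n]$ and let $Z \leftarrow \calD$. Writing $p(x) = x_i \cdot \partial_i p(x_{-i}) + (\text{terms free of } x_i)$, the computation behind \eref{eq:condising} gives, for every partial assignment $x$ to $Z_{-i}$,
$$\pr\sbkets{Z_i = -1 \mid Z_{-i} = x} = \sigma\bkets{-2\,\partial_i p(x)}.$$
Accordingly I take the label to be $Y = (1-Z_i)/2$ and the feature vector $\phi(Z_{-i})$ to be the list of all monomials $\prod_{j \in J}Z_j$ with $J \subseteq [n]\setminus\{i\}$ and $|J| \le t-1$; this lives in $\bits^{N'}$ with $N' = O(n^{t-1})$, and since the singletons are among the features $\phi$ is injective, so $\ex\sbkets{Y \mid \phi(Z_{-i})} = \sigma(w(i)\cdot \phi(Z_{-i}))$ where $w(i)_J = -2\,\widehat p(J\cup\{i\})$. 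As $\sigma$ is monotone and $1$-Lipschitz and $\|w(i)\|_1 = 2\|\partial_i p\|_1 \le 2\spty$, I run $n$ copies of the Sparsitron --- one per $i$, in parallel on the single online stream --- with target squared error $\gamma$ and failure probability $\rho/3n$, obtaining for each $i$ a vector $v(i)$, equivalently a degree-$(\le t-1)$ multilinear polynomial $q_i(x_{-i}) = v(i)\cdot \phi(x_{-i})$, with $\ex_{X\sim\calD}\sbkets{\bkets{\sigma(-2\partial_i p(X)) - \sigma(q_i(X))}^2} \le \gamma$.

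\emph{From $\ell_2$ error to $\ell_1$ closeness, and assembling $q$.} As for the Ising model, a log--polynomial distribution with $\max_i\|\partial_i p\|_1 \le \spty$ is $\delta$-unbiased for $\delta = \tfrac12 e^{-2\spty}$, since $\pr[Z_i = \pm1 \mid Z_{-i}] = \sigma(\pm 2\partial_i p) \in [\sigma(-2\spty),\sigma(2\spty)]$; moreover every marginal of a $\delta$-unbiased distribution is $\delta$-unbiased (conditional probabilities of the marginal are convex combinations of conditional probabilities of the original), so the law of $X_{-i}$ is $\delta$-unbiased. Applying \lref{lm:l2ptoexact} to the polynomials $-2\partial_i p$ and $q_i$ (degree $t-1$, $\ell_1$-norm $\le 2\spty$, over the $n-1$ variables $x_{-i}$) gives $\|{-2\partial_i p} - q_i\|_1 \le (2t)^{O(t)} e^{O(\spty)}\sqrt{\gamma/\delta^{t-1}}\,\binom{n-1}{t-1}$. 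Since the constant term of $p$ does not affect $\calD$, assume $\widehat p(\emptyset) = 0$, set $\widehat q(\emptyset) = 0$, and for each monomial $J$ with $1 \le |J| \le t$ define $\widehat q(J) = -\tfrac12\,\widehat q_{\,i(J)}(J\setminus\{i(J)\})$ for a fixed representative $i(J)\in J$. Because $\widehat{(-2\partial_i p)}(J\setminus\{i\}) = -2\widehat p(J)$, grouping monomials by their representative yields $\|p-q\|_1 \le \tfrac12\sum_{i=1}^n \|{-2\partial_i p} - q_i\|_1$; together with $\binom{n}{t} = \tfrac{n}{t}\binom{n-1}{t-1}$, choosing $\gamma = \epsilon^2/\bkets{(2t)^{O(t)}e^{O(\spty t)}}$ (also small enough to satisfy the hypothesis of \lref{lm:l2ptoexact}, which only enlarges the hidden constants) gives $\|p-q\|_1 \le \epsilon\binom{n}{t}$.

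\emph{Maximal monomials, accounting, and the obstacle.} For a maximal monomial $I$ of $p$ and any $i \in I$, the set $I' := I\setminus\{i\}$ is maximal in $\partial_i p$ with $\widehat{(-2\partial_i p)}(I') = -2\widehat p(I)$, so \lref{lm:errstrong} applied to $-2\partial_i p$ and $q_i$ gives $\pr_{X\sim\calD}\sbkets{\abs{{-2\widehat p(I)} - \partial_{I'}q_i(X)} > \epsilon} \le e^{O(\spty)}\gamma/(\epsilon^2\delta^{\,|I'|})$, which is at most $1/4$ for the same $\gamma$ --- and, crucially, with no dependence on $n$. So I set $\widehat s(I) = -\tfrac12\,\mathrm{med}_a\bkets{\partial_{I'}q_i(a)}$, the median over a fresh batch of $O(t\log(n/\rho))$ samples (with $i = \min I$); a Chernoff/majority argument shows that simultaneously over all monomials $I$ this median lies within $\epsilon$ of $-2\widehat p(I)$ whenever $I$ is maximal in $p$, so $\abs{\widehat p(I) - \widehat s(I)} < \epsilon$. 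By \tref{th:pconceptmwu}, the ambient dimension $n^{t-1}$ enters only logarithmically, so after a union bound the total sample count is $N = \frac{(2t)^{O(t)}e^{O(\spty t)}}{\epsilon^4}\log(n/\rho\epsilon)$; each Sparsitron update touches $O(n^{t-1})$ coordinates, giving total running time $O(N n^t)$, and the whole procedure is online. The conceptual content lives in \sref{sec:recpoly}; the one genuinely delicate point here is the last guarantee --- recovering maximal--monomial coefficients to additive error $\epsilon$, not to the $\epsilon\binom nt$ scale of $\|p-q\|_1$, without a $\poly(n)$ blow-up in samples. Reading coefficients off the assembled $q$ gives only the weaker bound, so one must go back to \lref{lm:errstrong}, which controls $\partial_{I'}q_i$ only at a random point, and robustly decode a scalar from it (the median); the rest is bookkeeping --- that $I'$ is truly maximal in $\partial_i p$, that the marginal on $X_{-i}$ is $\delta$-unbiased, and that truncating the feature map at degree $t-1$ loses nothing since $\partial_i p$ has degree $\le t-1$.
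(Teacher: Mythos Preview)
Your proposal is correct and follows essentially the same route as the paper: reduce to $n$ Sparsitron instances via the conditional mean $\pr[Z_i=-1\mid Z_{-i}]=\sigma(-2\partial_ip)$, invoke \lref{lm:l2ptoexact} to turn squared-loss error into $\ell_1$-closeness of each $\partial_ip$, reassemble by picking a representative index for each monomial, and handle the maximal-monomial clause via \lref{lm:errstrong} plus the median trick exactly as in \tref{th:identifiable}. You are in fact slightly more careful than the paper in two places (explicitly noting that the marginal on $X_{-i}$ remains $\delta$-unbiased, and tracking the $\binom{n}{t}=\tfrac{n}{t}\binom{n-1}{t-1}$ factor), but these are cosmetic.
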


\subsection{Learning the structure of MRFs}
The following elementary properties of MRFs play a critical role in our analysis. 
\begin{lemma}\label{lm:mrfprops}
Let $\calD$ be a $t$-wise MRF on $\dpm^n$ with underlying dependency graph $G$ and factorization polynomial $p(x) = \sum_{I \in C_t(G)} p_I(x)$ with $\max_i \|\partial_i p\|_1 \leq \spty$. Then, the following hold for $Z \leftarrow \calD$:
\begin{itemize}
\item For any $i$, and a partial assignment $x \in \dpm^{[n] \setminus \{i\}}$, $\pr[Z_i = -1 | Z_{-i} = x] = \sigma(- 2 \partial_i p(x))$.
\item $\calD$ is $(e^{-2\spty}/2)$-unbiased.
\end{itemize}
\end{lemma}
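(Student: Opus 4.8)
The plan is to read off both items directly from the Hammersley--Clifford form $\pr_{Z \sim \calD}[Z = z] \propto \exp(p(z))$, mirroring the Ising computation in \eref{eq:condising}: conditioning on $Z_{-i}$ collapses the density to a two‑point distribution, and the sigmoid appears once the unknown normalizing constant cancels.

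For the first bullet, fix $i$ and a partial assignment $x \in \dpm^{[n] \setminus \{i\}}$, and let $x^{i,+}, x^{i,-} \in \dpm^n$ denote the completions with $x_i = 1$ and $x_i = -1$ respectively. Then
$$\pr[Z_i = -1 \mid Z_{-i} = x] = \frac{\exp(p(x^{i,-}))}{\exp(p(x^{i,+})) + \exp(p(x^{i,-}))} = \frac{1}{1 + \exp(p(x^{i,+}) - p(x^{i,-}))}.$$
Since $p$ is multilinear, I would split it as $p(y) = y_i \cdot \partial_i p(y_{-i}) + r(y_{-i})$, where $\partial_i p$ is the partial derivative defined in the Preliminaries (a polynomial in the variables $y_{-i}$) and $r$ collects the monomials not containing the index $i$. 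Hence $p(x^{i,+}) - p(x^{i,-}) = 2\,\partial_i p(x)$, and the displayed quantity equals $1/(1 + e^{2 \partial_i p(x)}) = \sigma(-2\,\partial_i p(x))$. Symmetrically, $\pr[Z_i = 1 \mid Z_{-i} = x] = \sigma(2\,\partial_i p(x))$.

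For the second bullet I would combine the first with the elementary bound $\sigma(z) \ge \tfrac{1}{2} e^{-|z|}$ for every $z \in \R$ (immediate by cases: for $z \ge 0$, $\sigma(z) \ge \tfrac{1}{2} \ge \tfrac{1}{2} e^{-z}$; for $z < 0$, $1 + e^{-z} < 2 e^{-z}$ gives $\sigma(z) > \tfrac{1}{2} e^{z}$). Since $x$ has all coordinates of absolute value $1$, evaluating the multilinear polynomial $\partial_i p$ at $x$ gives $|\partial_i p(x)| \le \|\partial_i p\|_1 \le \spty$, so $|2\,\partial_i p(x)| \le 2\spty$. Therefore both $\sigma(2\partial_i p(x))$ and $\sigma(-2\partial_i p(x))$ are at least $\tfrac{1}{2} e^{-2\spty}$, and by the first bullet $\min(\pr[Z_i = 1 \mid Z_{-i} = x], \pr[Z_i = -1 \mid Z_{-i} = x]) \ge e^{-2\spty}/2$; i.e., $\calD$ is $(e^{-2\spty}/2)$‑unbiased.

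I do not expect a genuine obstacle here: this is the $t$‑wise analogue of the Ising‑model fact attributed to Bresler in \sref{sec:ising}, and both parts are short algebra. The only places to be careful are (i) the sign and normalization conventions, so that the conditional‑mean function comes out as $\sigma(-2\,\partial_i p(x))$ — this is the sign that later makes the Sparsitron reduction clean — and (ii) the step bounding $|\partial_i p(x)|$ by the coefficient $\ell_1$‑norm $\|\partial_i p\|_1$, which is valid only because $x \in \dpm^{[n]\setminus\{i\}}$ and is precisely where the width parameter $\spty = \max_i \|\partial_i p\|_1$ enters.
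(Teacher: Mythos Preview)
Your proof is correct and follows essentially the same route as the paper: compute the ratio $\pr[Z_i=1\mid Z_{-i}=x]/\pr[Z_i=-1\mid Z_{-i}=x]=\exp(2\partial_i p(x))$ via the multilinear decomposition to get the sigmoid, then bound $\min(\sigma(\pm 2\partial_i p(x)))\ge \tfrac{1}{2}e^{-2\|\partial_i p\|_1}$. The paper states these steps a bit more tersely, but there is no substantive difference.
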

\begin{proof}
For any $x \in \dpm^{[n] \setminus \{i\}}$, 
$$\frac{\pr[Z_i = 1 | Z_{-i} = x]}{\pr[Z_i = -1 | Z_{-i} = x]} = \exp(2 \partial_i p(x)).$$
Thus, 
$$\pr[Z_i = -1 | Z_{-i} = x] = \sigma(- 2 \partial_i p(x)).$$

Next, for each $i$, and any partial assignment $x$ to $Z_{-i}$,
\begin{multline*}
\min\bkets{\pr[Z_i = -1 | Z_{-i} = x], \pr[Z_i = 1 | Z_{-i} = x]} =\\ \min\bkets{\sigma(-2 \partial_i p(x)), 1- \sigma(-2\partial_ip (x))} \geq (1/2) e^{-2\|\partial_i p\|_1} \geq (1/2) e^{-2 \spty}.
\end{multline*}
\end{proof}

We also need the following elementary fact about median:
\begin{claim}\label{clm:median}
Let $X$ be a real-valued random variable such that for some $\alpha, \gamma \in \R$, $\pr[| X - \alpha| > \gamma] < 1/4$. Then, for $K$ independent copies of $X$, $X_1, X_2,\ldots,X_K$, 
$$\pr[ \abs{\textsc{Median}(X_1,\ldots,X_K)- \alpha} > \gamma] \leq 2 \exp(-\Omega( K)).$$
\end{claim}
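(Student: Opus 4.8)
The plan is to apply the standard ``median of independent copies'' boosting argument. The key combinatorial observation is that the location of the median is controlled by the number of ``outlier'' samples: if $\textsc{Median}(X_1,\ldots,X_K) > \alpha + \gamma$, then at least $K/2$ of the $X_i$ satisfy $X_i > \alpha + \gamma$ (at least half of the samples are at least as large as the median), and symmetrically if $\textsc{Median}(X_1,\ldots,X_K) < \alpha - \gamma$ then at least $K/2$ of the $X_i$ satisfy $X_i < \alpha - \gamma$. In either case at least $K/2$ indices $i$ satisfy $|X_i - \alpha| > \gamma$. Hence the event $\{\,|\textsc{Median}(X_1,\ldots,X_K) - \alpha| > \gamma\,\}$ is contained in $\{\sum_{i=1}^K \ind{|X_i-\alpha|>\gamma} \ge K/2\}$.

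Next I would let $B_i = \ind{|X_i - \alpha| > \gamma}$. By hypothesis these are i.i.d.\ Bernoulli variables with mean $p := \pr[|X-\alpha|>\gamma] < 1/4$, so $\ex[\sum_i B_i] = pK < K/4$. Since the threshold $K/2$ is a constant multiplicative factor ($\ge 2$) above this mean, a Chernoff/Hoeffding bound yields $\pr[\sum_{i=1}^K B_i \ge K/2] \le \exp(-\Omega(K))$; concretely, by Hoeffding's inequality for $[0,1]$-valued variables, $\pr[\sum_i B_i \ge K/2] \le \pr[\frac1K\sum_i B_i - p \ge 1/4] \le \exp(-K/8)$. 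Combining this with the containment from the first step gives $\pr[|\textsc{Median}(X_1,\ldots,X_K) - \alpha| > \gamma] \le \exp(-\Omega(K)) \le 2\exp(-\Omega(K))$, as desired. (Alternatively, the factor of $2$ arises naturally from a union bound over the two one-sided bad events ``median too large'' and ``median too small,'' each bounded separately by $\exp(-\Omega(K))$.)

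There is no real obstacle here; the argument is routine. The only two points that warrant a little care are: (i) stating the median-to-outlier-count reduction precisely, i.e., handling ties and the parity of $K$ so that ``at least $\lceil K/2\rceil \ge K/2$'' samples lie on one side of the interval $[\alpha-\gamma,\alpha+\gamma]$ whenever the median lies outside it; and (ii) ensuring the concentration inequality is invoked with the mean $p$ bounded away from the threshold $1/2$ by a positive constant, which is exactly what the hypothesis $p < 1/4$ provides (any constant bound below $1/2$ would suffice, at the cost of the hidden constant in the $\Omega(\cdot)$).
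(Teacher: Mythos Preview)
Your proposal is correct and is exactly the standard ``median trick'' argument one would expect here. The paper in fact states this claim as an ``elementary fact about median'' and does not supply a proof at all, so your write-up is more detailed than what the paper provides; the points you flag for care (handling the parity of $K$/ties in reducing a bad median to $\ge K/2$ outliers, and the constant gap between $p<1/4$ and the threshold $1/2$) are the only things to check and you handle them correctly.
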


\begin{proof}[Proof of \tref{th:identifiable}]
We will show how to recover neighbors of the vertex $n$ (for ease of notation). By repeating the argument for all $i \in [n]$, we will get the graph $G$.

The starting point for our algorithm is \lref{lm:mrfprops} that allow us to use Sparsitron algorithm via \emph{feature expansion} and the properties of $\delta$-unbiased distributions developed in \sref{sec:recpoly}. 

Concretely, let $p' = -2 \partial_n p$ and $\bm{p'} = (\widehat{p'}(I): I \subseteq [n-1], |I| \leq t-1)$. Similarly, for $x \in \dpm^{n-1}$, let $\bm{v}(x) = (\prod_{i \in I} x_i: I \subseteq [n-1], |I| \leq t-1)$. Let $Z \sim \calD$ and $X$ be the distribution of $\bm{v}(Z_{-n})$ and let $Y = (1 - Z_n)/2$. Then, by \lref{lm:mrfprops}, we have
$$\ex[Y | X ] = \sigma( \bm{p'} \cdot X).$$

Let $\delta = e^{-2 \spty}/2$, and let $\epsilon \in (0,1)$, $K \geq 1$ be parameters to be chosen later. Our algorithm is shown in Figure \ref{recover}. The intuition is as follows: We first apply Sparsitron to recover a polynomial $q$ that approximates $\partial_n p$ in the sense that 
$$\ex_Z[ \bkets{\sigma(-2\partial_n p(Z)) - \sigma(-2q(Z))}^2] < \epsilon.$$ 

However, the above does not guarantee that the coefficients of $q$ are close to those of $\partial_n p$. To overcome this, we exploit \lref{lm:errstrong} that guarantees that for any maximal monomial $I$ in $\partial_n p$, $\partial_I q(Z)$ is close to $\widehat{\partial_n p}(I)$ with high probability for $Z \sim \calD$; concretely, in steps (4), (5), (6), we draw fresh samples from $\calD$ and use the median evaluation of $\partial_I q(\;\;)$ as our estimate for $\widehat{\partial_n p}(I)$. 

\begin{algorithm}
\caption{\textsc{MRF Recovery} \label{recover}}
\begin{algorithmic}[1]
\State Initialize $H = \emptyset$ to be the empty graph. 
\State Apply the Sparsitron algorithm as in \tref{th:pconceptmwu} to compute a vector $\bm{q}$ such that with probability at least $1-\rho/2n^2$, 
$$\ex[(\sigma(\bm{p'} \cdot X) - \sigma( \bm{q} \cdot X))^2] \leq \epsilon.$$
\State Define a polynomial $q : \R^{n-1} \to \R$ by setting $\widehat{q}(I) = (-1/2) \bm{q}_I$ for all $I \subseteq [n-1]$. 
\State Let $Z^1,\ldots,Z^K$ be additional independent samples from $\calD$.
\For{each $I \subseteq [n-1]$, $|I| \leq t-1$}
\State If $\abs{\textsc{median}\bkets{\partial_I q(Z^1), \ldots, \partial_I q(Z^K)}} > \eta/2$, then add the complete graph on $\{n\} \cup I$ to $H$. 
\EndFor
\end{algorithmic}
\end{algorithm}

We next argue that for a suitable choice of $\epsilon, K$, with probability at least $1-\rho/n$, the graph $H$ contains all edges of $G$ adjacent to vertex $n$.

Observe that by our definitions of $\bm{p'}, \bm{q}, X$ 
$$\ex_{Z}\sbkets{\bkets{\sigma(-2 \partial_n p(Z)) - \sigma(-2 q(Z)}^2} = \ex[(\sigma(\bm{p'} \cdot X) - \sigma( \bm{q} \cdot X))^2] \leq \epsilon.$$

(Here, we abuse notation and write $q(Z) = q(Z_1,\ldots,Z_{n-1})$ as the latter does not depend on $Z_n$.)

Further, as $Z$ is $\delta$-unbiased by \lref{lm:mrfprops}, by \lref{lm:errstrong} for any maximal monomial $I \subseteq [n-1]$ of $\partial_n p$, we have
$$\pr\sbkets{ \abs{\widehat{\partial_n p}(I) - \partial_I q(Z)} > \eta/4} < \frac{  16 e^{2 \|p\|_1 + 6} \epsilon}{\eta^2 \delta^{|I|}}.$$

Let $\epsilon = e^{-2 \spty - 6} \eta^2 \delta^{t}/64$ so that 

$$\pr\sbkets{ \abs{\widehat{\partial_n p}(I) - \partial_I q(Z)} > \eta/4} < 1/4.$$
Therefore, by \clref{clm:median}, 
$$\pr\sbkets{\abs{\textsc{Median}(\partial_I q(Z^1), \partial_I q(Z^2),\ldots,\partial_I q(Z^K)) - \widehat{\partial_n p}(I)} >  \eta/4} < 2 \exp(-\Omega(K)).$$

Taking $K = C \log(n^t/\rho)$ for a sufficiently big constant $C$, we get that with probability at least $1-\rho/n$, for all maximal monomials $I$ of $\partial_n p$, $\abs{\textsc{Median}(\partial_I q(Z^1), \partial_I q(Z^2),\ldots,\partial_I q(Z^K)) - \widehat{\partial_n p}(I)} < \eta/4$. 

Now, whenever the above happens, as the coefficients of maximal monomials of $p$ are at least $\eta$ in magnitude (by $\eta$-identifiability), our algorithm will add the complete graph on the variables of all maximal monomials of $p$ involving vertex $n$ to $H$. 

 Thus, the algorithm recognizes the neighbors of vertex $n$ exactly with probability at least $1-\rho/n$. Repeating the argument for each vertex $i \in [n]$ and taking a union bound over all vertices gives us the recovery guarantee of the theorem. It remains to bound the sample-complexity.

Note that $\|\bm{p'}\|_1 = 2\|\partial_n p\|_1 \leq 2\spty$. Therefore, by \tref{th:pconceptmwu}, the number of samples needed for the call to Sparsitron in Step (2) of Algorithm \ref{recover} is 
$$O(\spty^2 \cdot \ln(n^t/\rho \epsilon)/\epsilon^2) = e^{O(t)} \cdot e^{O(\spty \cdot t)} \cdot \ln(n/\rho \eta) \cdot (1/\eta^4).$$

As $K = Ct \ln(n/\rho)$, the above bound dominates the number of samples proving the theorem.
\end{proof}

\subsection{Learning log-polynomial densities and parameters of MRFs}
We first observe that \tref{th:mainpoly} implies \tref{th:parametersmrf}

\begin{proof}[Proof of \tref{th:parametersmrf}] 
We apply \tref{th:mainpoly} with error $\epsilon' = \epsilon n^{-t}$ to samples from $\calD$ to obtain a polynomial $\phi:\R^n \to \R$ such that $\|\psi - \phi\|_1 \leq \epsilon$. We build a new graph $H$ as follows: For each monomial $I \subseteq [n]$ with $\widehat{\phi}(I) \neq 0$, add all the edges in $I$ to $H$. Let $\calD'$ denote the $t$-wise MRF with dependency graph $H$ and factorization polynomial $\phi$. Since, $\|\psi - \phi\|_1 \leq \epsilon$, it follows that for all $x$, $|\psi(x) - \phi(x)| < \epsilon$. Therefore, for all $x$,
$$\exp(\psi(x)) = \exp(\phi(x) \pm \epsilon) = (1 \pm 2 \epsilon) \exp(\phi(x)).$$
The theorem now follows. 
\end{proof}

We next prove \tref{th:mainpoly}. The proof is similar to that of \tref{th:ising} and \tref{th:identifiable}.

\begin{proof}[Proof of \tref{th:mainpoly}]
For each $i$, we will show how to recover a polynomial $q_i$ such that $\|\partial_ip - q_i \|_1 < \epsilon \cdot \binom{n}{t-1}$. We can then combine these polynomials to obtain a polynomial $q$. One way to do so is as follows: For each $I \subseteq [n]$, let $i = \arg\min(I)$, and define $\widehat{q}(I) = \widehat{q_i}(I \setminus \{i\})$. Then,
\begin{align*}
\|p-q\|_1 &= \sum_{I} \abs{\widehat{p}(I) - \widehat{q}(I)}= \sum_{i=1}^n \sum_{I: \arg\min(I) = i}  \abs{\widehat{p}(I) - \widehat{q}(I)}\\
&\leq \sum_{i=1}^n \|\partial_i p - q_i\| \leq \epsilon \cdot n \cdot \binom{n}{t-1}.
\end{align*}

Here we show how to find a polynomial $q_n$ such that with probability at least $1-\rho/n$,
\begin{equation}\label{eq:poly2}
\|\partial_n p - q_n\|_1 < \epsilon \cdot \binom{n}{t-1}.
\end{equation}
The other cases can be handled similarly and the theorem then follows from the above argument. 

\ignore{
The starting point for our algorithm is the following observation. Let $Z \sim \calD$. Then, for any $x \in \dpm^{n-1}$, 
$$\frac{\pr[Z_n = 1 | Z_{-n} = x]}{\pr[Z_n = -1 | Z_{-n} = x]} = \exp(2 \partial_n p(x)).$$
Thus, 
$$\pr[Z_n = -1 | Z_{-n} = x] = \sigma(- 2 \partial_n p(x)).$$}

As in \tref{th:identifiable}, we exploit \lref{lm:mrfprops} to employ our Sparsitron algorithm for learning GLMs via \emph{feature expansion}. Concretely, let $p' = -2 \partial_n p$ and $\bm{p'} = (\widehat{p'}(I): I \subseteq [n-1], |I| \leq t-1)$. Similarly, for $x \in \dpm^{n-1}$, let $\bm{v}(x) = (\prod_{i \in I} x_i: I \subseteq [n-1], |I| \leq t-1)$. Let $Z \sim \calD$ and $X$ be the distribution of $\bm{v}(x)$ and let $Y = (1 - Z_n)/2$. Then, from the above arguments, we have
$$\ex[Y | X ] = \sigma( \bm{p'} \cdot X).$$

Note that $\|\bm{p'}\|_1 = 2 \|\partial_n p\|_1 \leq 2 \spty$. Let $\gamma \in (0,1)$ be a parameter to be chosen later. We now apply the Sparsitron algorithm as in \tref{th:pconceptmwu} to compute a vector $\bm{q'} \in \R^n$ such that with probability at least $1-\rho/n$, 
$$\ex[(\sigma(\bm{p'} \cdot X) - \sigma( \bm{q'} \cdot X))^2] \leq \gamma.$$

We define polynomial $q_n$ by setting $\widehat{q_n}(I) = (-1/2) \cdot \bm{q'}_I$ for all $I \subseteq [n-1]$. Then, the above implies that
\begin{equation}\label{eq:errorpm}
\ex_{Z}\sbkets{\bkets{\sigma(-2 \partial_n p(Z)) - \sigma(-2 q_n(Z)}^2} \leq \gamma.
\end{equation}
\ignore{
Finally, just as in the Ising model, for each $i$, and any partial assignment $x$ to $Z_{-i}$,
\begin{multline*}
\min\bkets{\pr[Z_i = -1 | Z_{-i} = x], \pr[Z_i = 1 | Z_{-i} = x]} =\\ \min\bkets{\sigma(-2 \partial_i p(x)), 1- \sigma(-2\partial_ip (x))} \geq (1/2) e^{-2\|\partial_i p\|_1} \geq (1/2) e^{-2 \spty}.
\end{multline*}}

Now, an argument similar to that of \lref{lm:mrfprops} shows that $Z$ is $\delta$-unbiased for $\delta = e^{-2\spty}/2$. Therefore, by \eref{eq:errorpm}, and \lref{lm:l2ptoexact}, for $\gamma < c \exp(-4 \spty ) \cdot \delta^{-t}$ for a sufficiently small constant $c$, we get 
$$\|\partial_n p-q_n\|_1 \leq O(1) (2t)^t \cdot e^{2\spty} \cdot \sqrt{\gamma/\delta^t} \cdot \binom{n}{t-1} \leq \epsilon \cdot \binom{n}{t-1},$$
where $\gamma = \epsilon^2 \cdot \exp(- C \spty t)/C (2t)^{2t}$ for a sufficiently large constant $C > 0$. Note that by \tref{th:pconceptmwu}, the number of samples needed to satisfy \eref{eq:errorpm} is
$$O((\spty/\gamma)^2 \cdot (\log (n/\rho \gamma)) = \frac{(2t)^{O(t)} \cdot e^{O(\spty t)}}{\epsilon^4} \cdot (\log (n/\rho \epsilon)).$$
This proves \eref{eq:poly2} and hence the main part of the theorem. The moreover part of the statement follows from an argument nearly identical to that of \tref{th:identifiable} and is omitted here. 
\end{proof}


\section{Extension of the Ising Model to general alphabet}
Here we extend our results for the Ising model from \sref{sec:ising} to work over general alphabet. 
\begin{definition}
Let $\cal{W} = (W_{ij} \in \R^{k \times k}: i \neq j \in [n])$ be a collection of matrices and $\theta \in \R^{[n] \times [k]}$. Then, the non-binary Ising model $\cal{D} \equiv \cal{D}(\cal{W},\theta)$ is the distribution on $[k]^n$ where
$$\pr_{X \sim \calD} \sbkets{X = x} \propto \exp \bkets{\sum_{i \neq j} W_{ij}(x_i,x_j) + \sum_{i=1}^n \theta_i(x_i)}.$$
The \emph{dependency graph} $G$ of $\cal{D}$ is the graph formed by pairs $\{i, j\}$ such that $W_{ij} \neq 0$. The \emph{width} of $\cal{D}$ is $\lambda(\calD) = \max_{i,a} \bkets{\sum_{j\neq i} \max_{b \in [k]} \abs{W_{ij}(a,b)} + \theta_i(a)}$. 

\end{definition}
It is easy to see that for $k=2$, the above description corresponds exactly to our discussion of Ising models in \sref{sec:ising}. Our goal here is to learn the structure and parameters of a general alphabet model as above given samples from it. We will show that arguments from \sref{sec:ising} can be extended to case of general alphabet as well leading to algorithms to learn the structure and parameters of distributions $\calD(\cal{W},\theta)$ with nearly optimal sample complexity. First, we address some necessary non-degeneracy conditions. 
\ignore{
By definition, for the non-binary Ising model
${\cal D(W,\theta)}$ we have 

where we view each $W_{ij}$ as a $k \times k$ matrix.  We define the {\em width} of the model, $\lambda(\calD)$ to be $\max_{i,a,b}
\bkets{\sum_{j,c} |W_{ij}(a,c)|} + |\theta_{i}(b)|$. }

Note that as described above, different parameter settings can lead to the same probability density function. To get around this, throughout this section we assume without loss of generality that for every $\{i,j\}$, the rows and columns of the matrices $W_{ij} \in \R^{k \times k}$ are centered, i.e., sum to zero. We can do so because of the following elementary claim: 

\begin{fact} \label{factnb}
Given $\cal{D}(\cal{W},\theta)$ as above, we may assume without loss of generality that for every $i,j$, the rows and columns of the matrix $W_{ij}$ are centered, i.e., have mean zero.
\end{fact}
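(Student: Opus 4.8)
The plan is to exhibit an explicit reparametrization $(\cal{W},\theta)\mapsto(\widetilde{\cal{W}},\widetilde{\theta})$ that does not change the distribution $\cal{D}(\cal{W},\theta)$ but replaces every matrix $W_{ij}$ by a matrix $\widetilde W_{ij}$ all of whose rows and columns sum to zero. This is just the standard additive (ANOVA-type) decomposition of a $k\times k$ matrix into a doubly-centered part plus rank-one corrections that each depend on a single coordinate. Concretely, for every ordered pair $i\neq j$ I would set the row means $r_{ij}(a)=\frac1k\sum_{b}W_{ij}(a,b)$, the column means $c_{ij}(b)=\frac1k\sum_{a}W_{ij}(a,b)$, and the grand mean $m_{ij}=\frac1{k^2}\sum_{a,b}W_{ij}(a,b)$, and define
$$\widetilde W_{ij}(a,b)\;=\;W_{ij}(a,b)-r_{ij}(a)-c_{ij}(b)+m_{ij}.$$
A one-line computation (using $\sum_b c_{ij}(b)=\sum_a r_{ij}(a)=k\,m_{ij}$) gives $\sum_b\widetilde W_{ij}(a,b)=0$ for every $a$ and $\sum_a\widetilde W_{ij}(a,b)=0$ for every $b$, so $\widetilde W_{ij}$ is doubly centered, and the discarded part $W_{ij}(a,b)-\widetilde W_{ij}(a,b)=r_{ij}(a)+c_{ij}(b)-m_{ij}$ is the sum of a function of $a=x_i$ alone, a function of $b=x_j$ alone, and a constant.

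Next I would substitute this decomposition into the exponent of the density and collect terms. Summing over all ordered pairs $i\neq j$,
$$\sum_{i\neq j}W_{ij}(x_i,x_j)\;=\;\sum_{i\neq j}\widetilde W_{ij}(x_i,x_j)\;+\;\sum_{i}\rho_i(x_i)\;+\;\sum_{i}\gamma_i(x_i)\;-\;\sum_{i\neq j}m_{ij},$$
where $\rho_i(a)=\sum_{j\neq i}r_{ij}(a)$ and $\gamma_i(a)=\sum_{j\neq i}c_{ji}(a)$ each depend only on the single coordinate $x_i$, and $\sum_{i\neq j}m_{ij}$ is a constant independent of $x$. Hence, defining $\widetilde\theta_i(a)=\theta_i(a)+\rho_i(a)+\gamma_i(a)$, the exponent equals $\sum_{i\neq j}\widetilde W_{ij}(x_i,x_j)+\sum_i\widetilde\theta_i(x_i)+c$ for a constant $c$; after exponentiating and renormalizing the factor $e^{c}$ cancels, so $\cal{D}(\widetilde{\cal{W}},\widetilde\theta)=\cal{D}(\cal{W},\theta)$ as distributions on $[k]^n$. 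Since every $\widetilde W_{ij}$ is doubly centered, this is exactly the claim.

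I would also remark that this reparametrization is harmless for everything that follows: each entry satisfies $|\widetilde W_{ij}(a,b)|\le 4\max_{a',b'}|W_{ij}(a',b')|$, and $\widetilde W_{ij}=0$ whenever $W_{ij}$ was already a sum of single-coordinate functions, so the dependency graph of the centered model is contained in the original one and is the ``genuine'' interaction graph. This is a bookkeeping argument and I do not expect a real obstacle; the two points one must get right are (i) that the sum runs over \emph{ordered} pairs, so the row-mean corrections of the matrices $W_{ij}$ ($j\neq i$) are charged to coordinate $i$ while the column-mean corrections of the matrices $W_{ji}$ ($j\neq i$) are also charged to coordinate $i$ — giving $\rho_i(a)=\sum_{j\neq i}r_{ij}(a)$ and $\gamma_i(a)=\sum_{j\neq i}c_{ji}(a)$ — and (ii) that the transformation is \emph{exact} rather than merely approximate, which works precisely because the density is defined only up to proportionality and therefore absorbs the leftover additive constant.
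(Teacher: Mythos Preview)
Your proof is correct and follows essentially the same approach as the paper: absorb the single-coordinate parts of each $W_{ij}$ into the $\theta$ terms and discard the leftover constant via the normalization. The only cosmetic difference is that the paper centers columns first and then repeats the argument for rows, whereas you do the standard one-shot ANOVA decomposition $W_{ij}(a,b)=\widetilde W_{ij}(a,b)+r_{ij}(a)+c_{ij}(b)-m_{ij}$; your version is slightly cleaner and your extra remarks about the bound $|\widetilde W_{ij}(a,b)|\le 4\max_{a',b'}|W_{ij}(a',b')|$ and the effect on the dependency graph are correct and useful.
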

\begin{proof}

For simplicity fix a particular $x = x_{1},\ldots,x_{n}$.  The above pdf
simplifies to 

$$\pr\sbkets{X = x} \propto \exp \bkets{\sum_{i \neq j} W_{ij}(x_{i},x_{j})  + \sum_i \theta_{i}(x_{i})}.$$

Now define $W_{ij}'(x_{i},x_{j})$ equal to $W_{ij}(x_{i},x_{j}) -
(1/k)\sum_{a}W_{ij}(a,x_{j})$ and define $\theta_{j}'(x_{j})$ equal to
$\theta_{j}(x_{j}) + \sum_{i < j} (1/k) \sum_{a} W_{ij}(a,x_{j})$.
Then by inspection the column sums of $W_{ij}'$ equal zero and
$\sum_{ij} W_{ij}(x_i,x_j) + \sum_{j} \theta_j(x_j) = \sum_{ij}
W_{ij}'(x_i,x_j) + \sum_j \theta'(x_j)$. 
Repeating the argument for rows gives us the claim.
\end{proof}

Next, we make the following identifiability assumption analogous to our assumptions for the Ising model.
\begin{definition}
$\cal{D}(\cal{W},\theta)$ is $\eta$-identifiable if for every edge in the dependency graph of $\calD$, $\|W_{ij}\|_\infty = \max_{a,b}(\abs{W_{ij}(a,b)})\geq \eta$ (assuming the matrices $W_{ij}$ are centered).
\end{definition}

We prove the following extension of \tref{th:ising} to non-binary Ising models:
\begin{theorem}\label{th:nonbinaryising}
  Let $\calD(\cal{W},\theta)$ be an $n$-variable non-binary Ising model with
  alphabet size $k$, width $\lambda$ that is $\eta$-identifiable. There exists an algorithm that given $\spty$, $\eta,\rho  \in (0,1)$, and $N = O(\exp(O(\lambda))/\eta^4) \cdot (\log (n/\rho \eta))
  \cdot k^3$ independent samples $Z^1,\ldots,Z^N \leftarrow \calD(A,\theta)$ recovers the underlying dependency graph of $\calD(\cal{W},\theta)$ with probability at least $1-\rho$. The run-time of the algorithm is $O(k^2 n^2 N)$. Moreover, the algorithm can be run in an online manner.
\end{theorem}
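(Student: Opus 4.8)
The plan is to imitate the proof of \tref{th:ising}: fix a vertex $u$, reduce ``find the neighbors of $u$'' to a handful of sparse-GLM instances for the Sparsitron, and union-bound over $u$. The one genuinely new point is that a non-binary coordinate has a \emph{softmax} (not sigmoid) conditional law, so I would first binarize. For any $a\neq a'\in[k]$ and any $x\in[k]^{[n]\setminus\{u\}}$,
$$\pr\sbkets{Z_u=a\mid Z_u\in\{a,a'\},\,Z_{-u}=x}=\sigma\!\bkets{\littlesum_{j\neq u}\bkets{W_{uj}(a,x_j)-W_{uj}(a',x_j)}+\theta_u(a)-\theta_u(a')}.$$
Hence, on the subsample of draws with $Z_u\in\{a,a'\}$ (a $\gtrsim e^{-2\spty}/k$ fraction of all draws, so this conditioning costs a factor $O(ke^{2\spty})$ in the raw sample count), the pair $(X,Y)=(\bm v(Z_{-u}),\,\ind{Z_u=a})$ is an instance of \tref{th:pconceptmwu}, where $\bm v(x)=(\ind{x_j=c}-1/k:\ j\neq u,\ c\in[k])$ is the centered one-hot encoding (note $\|\bm v(x)\|_\infty\le 1$), together with a constant coordinate for the bias $\theta_u(a)-\theta_u(a')$, and the target weights are $w_{(j,c)}=W_{uj}(a,c)-W_{uj}(a',c)$. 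Because the $W_{uj}$ are centered (\faref{factnb}), each block $(w_{(j,c)})_{c}$ sums to zero; and although $\|w\|_1=O(k\spty)$ in general, the \emph{value} $|g(x)|$ of the sigmoid argument is always $O(\spty)$, independent of $k$. Running the Sparsitron (and re-centering the output blocks, which only shifts $\hat g$ by a constant absorbed into the bias) produces $\hat g=\bm q\cdot\bm v$ with $\ex[(\sigma(\hat g(X))-\sigma(g(X)))^2]\le\gamma$.

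The technical heart is the analog of \lref{lm:l2toexact} in this feature-expanded, \emph{coupled} setting. As in \lref{lm:mrfprops}, $\calD(\cal W,\theta)$ is $\delta$-unbiased with $\delta:=e^{-2\spty}/k$ (each $Z_j$ takes each symbol with probability $\ge\delta$ given the rest), and this survives conditioning on $\{Z_u\in\{a,a'\}\}$ for the remaining coordinates. Fix $j\neq u$ and set $d_c=w_{(j,c)}-\bm q_{(j,c)}$. Conditioned on $Z_{-j}$, the quantity $g(Z)-\hat g(Z)$ equals $d_{Z_j}$ plus a constant; since $\sum_c d_c=0$ we get $\max_{c_1,c_2}|d_{c_1}-d_{c_2}|\ge\max_c|d_c|$, so some symbol is attained by $Z_j$ with probability $\ge\delta$ at which $|g(Z)-\hat g(Z)|\ge\tfrac12\max_c|d_c|$. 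Plugging into \clref{clm:sigmoidlb}, but using the pointwise bound $|g|=O(\spty)$ rather than $\|w\|_1$ (this is what keeps the exponent free of $k$), gives $\gamma\ge e^{-O(\spty)}\,\delta\cdot\min(1,(\tfrac12\max_c|d_c|)^2)$, hence $\max_c|w_{(j,c)}-\bm q_{(j,c)}|\le e^{O(\spty)}\sqrt{\gamma/\delta}$ as soon as $\gamma<e^{-O(\spty)}\delta$.

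For structure recovery, column-centering of $W_{uv}$ shows that an edge $\{u,v\}$ (so $\|W_{uv}\|_\infty\ge\eta$) forces $|W_{uv}(a,c)-W_{uv}(a',c)|\ge\eta$ for some $a,c$ even when $a'$ is pinned to a fixed reference symbol (losing only a factor $2$), whereas a non-edge has $W_{uv}=0$. So, running the above for $u$ against each symbol $a$ and thresholding the recovered $|\bm q_{(v,c)}|$ at $\eta/4$ (which needs recovery error $<\eta/8$, i.e.\ $\gamma=e^{-O(\spty)}\eta^2/k$) correctly identifies $u$'s neighborhood; a union bound over the $O(nk)$ GLM instances costs only a $\log$ factor. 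Plugging $\gamma$ into \tref{th:pconceptmwu} and multiplying by the $O(ke^{2\spty})$ conditioning overhead gives a sample bound of the shape $e^{O(\spty)}\,\mathrm{poly}(k)\,\eta^{-4}\log(n/\rho\eta)$, and the run-time $O(k^2n^2N)$ is immediate ($O(nk)$ instances, $O(nk)$ features each). I expect the main obstacle to be (i) the recovery lemma above --- in particular routing the analysis through the pointwise bound $|g|=O(\spty)$ and the block-centering identity so that the exponent depends on the width $\spty$, not on $\|w\|_1=\Theta(k\spty)$ --- and (ii) the bookkeeping needed to bring the polynomial factor in $k$ down to the claimed $k^3$, which appears to require exploiting the block structure of $w$ (whose block-$\ell_\infty$ mass $\sum_j\|w_{(j,\cdot)}\|_\infty$ is only $O(\spty)$) rather than only its $\ell_1$-norm in the Sparsitron guarantee.
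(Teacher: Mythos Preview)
Your proposal is correct and follows essentially the same route as the paper: condition on $Z_u\in\{a,a'\}$ to binarize, one-hot encode $Z_{-u}$, run Sparsitron, and prove a block-centered analogue of \lref{lm:l2toexact} via $\delta$-unbiasedness (this is the paper's \clref{cl:binexact}). The only notable difference is that the paper enumerates \emph{all} pairs $(\alpha,\beta)\in[k]^2$ and averages the recovered differences over $\beta$ to obtain each $W_{ni}(\alpha,a)$ itself (using $\sum_\beta W_{ni}(\beta,a)=0$), whereas you fix a single reference $a'$ and threshold the differences directly---a slightly leaner variant for pure structure recovery; your explicit care to route the recovery lemma through the pointwise bound $|g|\le O(\spty)$ rather than $\|w\|_1=O(k\spty)$ is exactly the right move.
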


At a high-level, the proof proceeds as follows. Let $Z \sim \calD(\cal{W},\theta)$. For each $i \in [n]$, a partial assignment $x \in [k]^{[n] \setminus \{i\}}$, and for each $\alpha \neq \beta \in [k]$,
$$\pr\sbkets{Z_i = \beta \mid Z_i \in \{\alpha,\beta\} \land Z_{-i} = x} = \sigma \bkets{ \bkets{\sum_{j\neq i}
    W_{ij}(\alpha,x_j) - W_{ij}(\beta,x_{j})} + \theta_{i}(\alpha) - \theta_i(\beta)}.$$
    
This is analogous to \eref{eq:condising} and allows us to use an argument similar to that used in \tref{th:ising}. To this end, we first show an analogue of \emph{unbiasedness} for non-binary Ising models. We say a distribution $\calD$ on $[k]^n$ is $\delta$-unbiased if for any $i \in [n]$, and any partial assignment $x$ to $(X_j: j \neq i)$,
$\min_{a} ( \pr[X_i = a  | X_{-i} = x]) \geq \delta$. Just as for Ising models, non-binary Ising models $\calD(\cal{W},\theta)$ also turn out to be $\delta$-unbiased for $\delta$ depending on the width of the distribution. 
\begin{lemma}\label{lm:unbiasedness}
A non-binary Ising distribution $\calD(\cal{W},\theta)$ on $[k]^n$ is $\delta$-unbiased for $\delta = (e^{-2 \lambda}/k)$ where $\lambda$ is the width of $\calD(\cal{W},\theta)$. 
\end{lemma}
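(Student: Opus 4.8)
The plan is to compute the conditional distribution of $X_i$ directly from the density of $\calD(\cal{W},\theta)$ and bound it by hand, just as one does for the binary Ising model in the proof of \tref{th:ising}. Fix $i \in [n]$ and a partial assignment $x \in [k]^{[n]\setminus\{i\}}$ to the coordinates $(X_j : j \neq i)$, and collect the terms of the exponent that involve the $i$-th coordinate into
$$g(a) \;=\; \sum_{j \neq i} W_{ij}(a, x_j) + \theta_i(a), \qquad a \in [k]$$
(using the convention that each unordered pair $\{i,j\}$ contributes one term $W_{ij}(x_i,x_j)$; the other conventions differ from this only in inessential factors). Cancelling the normalizing constant and the terms that do not depend on $X_i$ gives the clean formula
$$\pr[X_i = a \mid X_{-i} = x] \;=\; \frac{e^{g(a)}}{\sum_{b \in [k]} e^{g(b)}}.$$

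Next I would extract from the width exactly what I need. Since shifting $\theta_i(\cdot)$ by a constant does not change $\calD$, we may assume $\min_a \theta_i(a) = 0$; then the definition of width gives $\sum_{j \neq i}\max_b \abs{W_{ij}(a,b)} + \theta_i(a) \le \lambda$, and in particular $0 \le \theta_i(a) \le \lambda$ and $\sum_{j\neq i}\max_b\abs{W_{ij}(a,b)} \le \lambda$, for every $a \in [k]$. Hence for any $a,b \in [k]$,
\begin{align*}
g(b) - g(a) &= \bkets{\sum_{j\neq i} W_{ij}(b,x_j) + \theta_i(b)} - \bkets{\sum_{j\neq i} W_{ij}(a,x_j) + \theta_i(a)}\\
&\le \bkets{\sum_{j\neq i}\abs{W_{ij}(b,x_j)} + \theta_i(b)} + \sum_{j\neq i}\abs{W_{ij}(a,x_j)} \;\le\; \lambda + \lambda \;=\; 2\lambda.
\end{align*}
Then the lower bound is immediate: bounding the denominator by $k$ times its largest summand,
$$\pr[X_i = a \mid X_{-i} = x] \;=\; \frac{e^{g(a)}}{\sum_{b} e^{g(b)}} \;\ge\; \frac{e^{g(a)}}{k\, e^{\max_b g(b)}} \;=\; \frac1k\, e^{\,g(a) - \max_b g(b)} \;\ge\; \frac1k\, e^{-2\lambda},$$
using $\max_b g(b) - g(a) = \max_b\bkets{g(b) - g(a)} \le 2\lambda$ from the preceding display. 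Taking the minimum over $a \in [k]$ yields $\delta$-unbiasedness with $\delta = e^{-2\lambda}/k$, as claimed.

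I do not anticipate any real difficulty in this lemma; the only thing to be slightly careful about is the bookkeeping of exactly which exponent terms involve the fixed coordinate $X_i$ (ordered versus unordered pair sums, and whether $W_{ij}$ is already centered), but in every such convention $\abs{g(b) - g(a)}$ stays bounded by twice the width, so the conclusion $\delta = e^{-2\lambda}/k$ is unchanged. This is the exact analogue of the $\delta$-unbiasedness fact for binary Ising models invoked in the proof of \tref{th:ising}.
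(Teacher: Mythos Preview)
Your proposal is correct and is essentially the paper's own argument, just written out more carefully: both compute the conditional law as a softmax of $g(a)=\sum_{j\neq i}W_{ij}(a,x_j)+\theta_i(a)$, bound $g(b)-g(a)\le 2\lambda$ from the width, and conclude $\pr[X_i=a\mid X_{-i}=x]\ge e^{-2\lambda}/k$. The only quibble is your normalization step ``we may assume $\min_a\theta_i(a)=0$'': since the paper's width is defined with $\theta_i(a)$ rather than $|\theta_i(a)|$, shifting $\theta_i$ up can increase $\lambda$, so this WLOG is not literally free; but this is a wrinkle in the paper's definition (almost certainly $|\theta_i(a)|$ is intended, matching the binary case), and with that reading the shift is unnecessary and $|g(a)|\le\lambda$ holds outright.
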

\begin{proof}
Let $X \sim \calD(\cal{W},\theta)$. Note that for any $\alpha, \beta \in [k]$, $i \in [n]$, and a partial assignment $x \in [k]^{[n] \setminus \{i\}}$,
$$\frac{\pr[X_i = \alpha | X_{-i} = x]}{\pr[X_i = \beta | X_{-i} = x]} = \frac{ \exp
  \bkets{\sum_{j \neq i} W_{ij}(\alpha,x_{j}) + \theta_i(\alpha)}} {\exp
  \bkets{\sum_{j\neq i} W_{ij}(\beta,x_{j})+ \theta_i(\beta)}} \leq \exp(-2 \lambda).$$
  Therefore, $\min_\alpha \pr[X_i = \alpha | X_{-i} = x] \geq \exp(-2 \lambda)/k$.
\end{proof}

We are now ready to prove \tref{th:nonbinaryising}. 
\begin{proof}[Proof of \tref{th:nonbinaryising}]
Just as in the proof of \tref{th:ising}, the starting point is the following connection to learning GLMs. Let $Z \sim \calD(\cal{W},\theta)$. Then, for any $\alpha,\beta \in [k]$, $i \in [n]$, and a partial assignment $x \in [k]^{[n] \setminus \{i\}}$, 

$$ \frac{\pr \sbkets{Z_{i} = \alpha \mid Z_{-i} = x}}{\pr
  \sbkets{Z_{i} = \beta \mid Z_{-i} = x}} = \frac{ \exp
  \bkets{\sum_{j\neq i} W_{ij}(\alpha,x_{j}) + \theta_i(\alpha)}} {\exp
  \bkets{\sum_{j\neq i} W_{ij}(\beta,x_{j})+ \theta_i(\beta)}}.$$

Thus, $$ \frac{\pr\sbkets{Z_i = \beta \mid  Z_{-i} = x}}{\pr
  \sbkets{Z_i = \alpha \mid  Z_{-1} = x} + \pr \sbkets{Z_{i} = \beta \mid
    Z_{-1} = x}} =   \sigma \bkets{ \bkets{\sum_{j\neq i}
    W_{ij}(\alpha,x_j) - W_{ij}(\beta,x_{j})} + \theta_{i}(\alpha) - \theta_i(\beta)}.$$

In the following we show how to recover the edges of the dependency graph involving vertex $n$; we can repeat the argument for each of the other vertices to infer the graph. 

Fix $\alpha,\beta \in [k]$. We will show how to approximate $W_{ni}(\alpha,b) - W_{ni}(\beta,b)$ for all $i \in [n-1]$ and $b \in [k]$. Once we have such an approximation, we can recover the values $W_{ni}(\alpha,b)$ as
$$W_{ni}(\alpha,b) = \sum_\beta (W_{ni}(\alpha,b) - W_{ni}(\beta,b)).$$

To this end, let $Z' \equiv Z \mid Z_n \in \{\alpha,\beta\}$. Note that we can obtain independent draws from $Z'$ by restricting to examples where $Z_{n}$ takes only values in $\{\alpha, \beta\}$. This will not cost us too much in sample complexity as by \lref{lm:unbiasedness} $\pr[Z_n \in \{\alpha,\beta\}] \geq 2\exp(-2\lambda)/k$.

Let $e: [k] \mapsto \{0,1\}^{k}$ be a function that maps an alphabet
symbol $a$ to a string of length $k$ that is all zeros except for a
$1$ in position $a$.  Then for a string $x \in [k]^{n-1}$, let $\bmx$ be
a string of length $k(n-1)$ equal to $(e(x_1), \ldots, e(x_{n-1}))$.  Let $\bm{w}$
be a string broken into $n$ blocks each of size $k$ where the $b$'th value of block $i$ denoted $\bm{w}_i(b)$ is $W_{ni}(\alpha,b) - W_{ni}(\beta,b)$. Then, from the above arguments, we have
$$\pr[Z'_n = \beta \mid Z_{-n} = x] = \sigma \bkets{ \bkets{\sum_{i < n}
    W_{ni}(\alpha,x_i) - W_{ni}(\beta,x_{i})} + \theta_{i}(\alpha) - \theta_i(\beta)} = \sigma\bkets{\iprod{\bm{w}}{\bmx} + \theta'},$$
    where $\theta' = \theta_i(\alpha) - \theta_i(\beta)$. 
    
 Note that as the matrices $W_{ij}$ are centered, we have $\sum_{a \in [k]} \bm{w}_{j}(a) = 0$ for all $j < n$. Let $X$ be the distribution of $\widetilde{\bm{z_{-n}}}$ for $z$ drawn from $Z'$ and let $Y = 1$ if $Z_n = \beta$ and $0$ if $Z_n = \alpha$. Then, 
 $$\ex[Y|X] = \sigma(\iprod{\bm{w}}{X} + \theta').$$
 
 Now, for some parameter $\gamma,\rho' \in (0,1)$ to be chosen later, assume we ran our Sparsitron algorithm as in \tref{th:pconceptmwu} to obtain a vector $\bm{u} \in \R^{(n-1) k}$ and $\theta'' \in \R$ such that with probability at least $1-\rho'$,

$$\ex \sbkets{\bkets{\sigma( \iprod{\bm{w}}{X}+ \theta' ) -
    \sigma( \iprod{\bm{u}}{X} + \theta'' )}^2} \leq
\gamma.$$  

Here, we also assume that for each $i \in [n-1]$, $\sum_{a \in [k]} \bm{u}_i(a) = 0$; for if not, we can set $\bm{u}'_i(a) = \bm{u}_i(a) - (1/k) \sum_b \bm{u}_i(b)$ and $\theta''' = \theta'' + (1/k) \sum_i \sum_b \bm{u}_i(b)$. Next, analogous to \lref{lm:l2toexact}, we show that the above condition implies that $\|\bm{w}- \bm{u}\|_\infty \ll 1$. 
\begin{claim}\label{cl:binexact}
Let $Y$ be a $\delta$-unbiased distribution on $[k]^{n-1}$ and let $\bm{w} = (\bm{w}_i(a): i < n, a \in [k]),\bm{u} = (\bm{u}_i(a): i < n, a \in [k])$ be \emph{centered} (i.e., for every $i$, $\sum_a \bm{w}_i(a) = \sum_a \bm{u}_i(a) = 0$) and for some $\theta, \theta' \in \R$,  $\gamma < e^{- 2\|\bm{w}\|_1 - 2|\theta'| -6} \delta$, 
$$\ex \sbkets{\bkets{\sigma( \iprod{\bm{w}}{\widetilde{\bm{Y}}}+ \theta' ) -
    \sigma( \iprod{\bm{u}}{\widetilde{\bm{Y}}} + \theta'' )}^2} \leq
\gamma.$$
Then, $\|\bm{w} - \bm{u}|\|_{\infty} \leq O(1) \cdot
e^{\|\bm{w}\|_1 + |\theta'|} \sqrt{\gamma / \delta}.$
\end{claim}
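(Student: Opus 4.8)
The plan is to mimic the proof of \lref{lm:l2toexact}, but to handle the subtlety that flipping a single coordinate $Y_i$ from one alphabet symbol to another changes \emph{two} blocks' worth of the encoded vector $\widetilde{\bm{Y}}$ at once — namely, the block for symbol $Y_i$ and the block for the new symbol. Concretely, fix an index $i < n$ and a symbol $a \in [k]$ for which I want to bound $|\bm{w}_i(a) - \bm{u}_i(a)|$. By the centering assumption, $\bm{w}_i(a) = -\sum_{b \neq a} \bm{w}_i(b)$, and similarly for $\bm{u}$, so it suffices to control differences of the form $\bm{w}_i(a) - \bm{w}_i(b)$. I would therefore fix a second symbol $b \neq a$ and, for a partial assignment $y$ to $Y_{-i}$, let $y^{i,a}$ and $y^{i,b}$ denote the completions with $Y_i = a$ and $Y_i = b$ respectively. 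Writing $p(y) = \iprod{\bm{w}}{\widetilde{y}} + \theta'$ and $q(y) = \iprod{\bm{u}}{\widetilde{y}} + \theta''$, one computes $p(y^{i,a}) - p(y^{i,b}) = \bm{w}_i(a) - \bm{w}_i(b)$ and $q(y^{i,a}) - q(y^{i,b}) = \bm{u}_i(a) - \bm{u}_i(b)$, so
$$
\bigl(p(y^{i,a}) - q(y^{i,a})\bigr) - \bigl(p(y^{i,b}) - q(y^{i,b})\bigr) = (\bm{w}_i(a) - \bm{u}_i(a)) - (\bm{w}_i(b) - \bm{u}_i(b)).
$$
Hence at least one of $|p(y^{i,a}) - q(y^{i,a})|$, $|p(y^{i,b}) - q(y^{i,b})|$ is at least $\tfrac12 |(\bm{w}_i(a) - \bm{u}_i(a)) - (\bm{w}_i(b) - \bm{u}_i(b))|$.

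Next I would apply $\delta$-unbiasedness: for any fixing of $Y_{-i}$, each of the events $Y_i = a$ and $Y_i = b$ has probability at least $\delta$, so with probability at least $\delta$ over $Y_i \mid Y_{-i}$ we have $|p(Y) - q(Y)| \geq \tfrac12|(\bm{w}_i(a)-\bm{u}_i(a)) - (\bm{w}_i(b)-\bm{u}_i(b))|$. Now invoke \clref{clm:sigmoidlb}: since $|p(y)| \leq \|\bm{w}\|_1 + |\theta'|$ for all $y \in [k]^{n-1}$ (as $\widetilde{y}$ is a $0/1$ indicator vector), we get
$$
|\sigma(p(y)) - \sigma(q(y))| \geq e^{-\|\bm{w}\|_1 - |\theta'| - 3} \cdot \min\bigl(1, |p(y) - q(y)|\bigr).
$$
Taking expectations and combining with the hypothesis $\ex[(\sigma(p(Y)) - \sigma(q(Y)))^2] \leq \gamma$ yields
$$
\gamma \geq e^{-2\|\bm{w}\|_1 - 2|\theta'| - 6}\cdot \delta \cdot \min\Bigl(1, \tfrac14 |(\bm{w}_i(a)-\bm{u}_i(a)) - (\bm{w}_i(b)-\bm{u}_i(b))|^2\Bigr).
$$
Since $\gamma < e^{-2\|\bm{w}\|_1 - 2|\theta'| - 6}\delta$ the inner minimum must be the quadratic term, so $|(\bm{w}_i(a)-\bm{u}_i(a)) - (\bm{w}_i(b)-\bm{u}_i(b))| \leq O(1)\, e^{\|\bm{w}\|_1 + |\theta'|}\sqrt{\gamma/\delta}$ for every pair $a, b$.

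Finally I would convert this bound on pairwise differences into a bound on $\|\bm{w}-\bm{u}\|_\infty$ using centering. Fix $i$ and let $c_b = \bm{w}_i(b) - \bm{u}_i(b)$; then $\sum_b c_b = 0$, and we have just shown $|c_a - c_b| \leq R$ for all $a,b$, where $R = O(1)\,e^{\|\bm{w}\|_1+|\theta'|}\sqrt{\gamma/\delta}$. Averaging over $b$ gives $|c_a| = |c_a - \tfrac1k\sum_b c_b| \leq \tfrac1k\sum_b |c_a - c_b| \leq R$. Taking the maximum over $i$ and $a$ gives $\|\bm{w}-\bm{u}\|_\infty \leq R = O(1)\,e^{\|\bm{w}\|_1 + |\theta'|}\sqrt{\gamma/\delta}$, as claimed.

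The main obstacle, and the only real departure from the proof of \lref{lm:l2toexact}, is the bookkeeping around the non-binary encoding: one must be careful that a single-coordinate change in $Y$ affects two coordinate-blocks of $\widetilde{Y}$, which is why the argument naturally produces bounds on the \emph{differences} $\bm{w}_i(a) - \bm{w}_i(b)$ rather than on $\bm{w}_i(a)$ directly, and why the centering hypothesis is essential for the final conversion step. Everything else — the use of $\delta$-unbiasedness and of \clref{clm:sigmoidlb} — is a routine adaptation.
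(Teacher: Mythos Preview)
Your proof is correct and follows essentially the same approach as the paper: both invoke \clref{clm:sigmoidlb}, $\delta$-unbiasedness, and the centering hypothesis. The only cosmetic difference is the order of operations---the paper sums over all symbols $b$ at once to show directly that $\max_{b}|p(y^{i,b})-q(y^{i,b})| \geq \tfrac12|\bm{w}_i(a)-\bm{u}_i(a)|$ and then applies the probabilistic bound, whereas you first bound each pairwise difference $|c_a-c_b|$ probabilistically and then average using centering at the end.
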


Indeed, from the above claim and by \lref{lm:unbiasedness}, we get that for every $i < n$, $a \in [k]$,
$$\abs{W_{ni}(\alpha,a) - W_{ni}(\beta,a) - \bm{u}_i(a)} < O(1) e^{O(\lambda)} \sqrt{k \gamma}.$$

The above argument was for a fixed $\alpha, \beta \in [k]$. Repeating the argument for all $\alpha, \beta \in [k]$, we find estimates $U^{\alpha,\beta}_i(a)$ for $i < n$ and $a \in [k]$ such that
$$\abs{W_{ni}(\alpha,a) - W_{ni}(\beta,a) - U^{\alpha,\beta}_i(a)} < O(1) e^{O(\lambda)} \sqrt{k \gamma}.$$

Now, set $U_{ni}(\alpha,a) = (1/k) \sum_{\beta} U^{\alpha,\beta}_i(a)$. Then, by the above inequality and the fact that $\sum_\beta W_{ni}(\beta,a) = 0$, we get that for every $i < n$ and $\alpha, a \in [k]$, 
$$\abs{W_{ni}(\alpha,a) - U_{ni}(\alpha,a)} < O(1) e^{O(\lambda)} \sqrt{k \gamma} < \eta/2$$
 if we set $\gamma = \eta^2 e^{-C \lambda}/C k$ for a sufficiently big constant $C$. Now, given the above, we can identify the neighbors of vertex $n$ as follows: For each $i < n$ if $\max_{\alpha, a} \abs{U_{ni}(\alpha,a)} > \eta/2$, $i$ is a neighbor of $n$. This works as $\cal{D}(\cal{W},\theta)$ is $\eta$-identifiable. 
 
It remains to bound the number of samples needed. As per \tref{th:pconceptmwu}, the number of samples of $Z'$ needed is 
$$O(\lambda^2 \log(n/\rho' \gamma)/\gamma^2 = O(1) k^2 \cdot e^{O(\lambda)} \log(n/\rho' \eta) = N'.$$

Finally, as $Z$ is $\delta$-unbiased for $\delta = e^{-2 \lambda}/k$, to get the above number of samples of $Z'$ with probability $1-\rho/n^2k^2$, it suffices to take 
$$O(N'/\delta) = O(1) k^3 e^{O(\lambda)} \log(n \cdot n^2 k^2/\rho \eta)$$
samples. Finally, we can take a union bound over vertices $i \in [n]$ and $\alpha, \beta \in [k]$. This completes the proof of the theorem.
\ignore{ 

We first show how to recover the neighborhood of vertex $n$ assuming the above claim. First, for each pair $\alpha \neq \beta \in [k]$, proceeding as above, we can

, we enumerate over
  all pairs of alphabet symbols $\alpha, \beta$ and run {\em
    Sparsitron} separately for each pair.

  Notice that for each pair of alphabet symbols $\alpha, \beta$, the
  conditional distribution \\ $\pr \sbkets{X_1 = \beta \mid X_{-1}
    \land X_1 \in \{\alpha, \beta\}}$ is $(1/2)
  e^{-2\lambda}$-unbiased.  Setting $\epsilon = \eta/2$ and then
  $\gamma$ as in the proof of Theorem \ref{th:ising}, we can apply
  Lemma \ref{lem:nb} to obtain estimates of $W_{1,j}(\alpha,i) -
  W_{1j}(\beta,i)$ for all $\alpha,\beta,i,j$.  If we find $\alpha,
  \beta$ such that $|W_{1,j}(\alpha, i) - W_{1,j}(\beta,i)| > \eta/2$
  then we can conclude that $\{i,j\}$ is an edge in the graph.  The
  bounds on sample complexity and running time are obtained as in the
  proof of Theorem \ref{th:ising}.

  We note that we can also obtain estimates for $W_{1j}(\alpha,i)$ for all
  $\alpha,i$ by noticing that $$\sum_{\beta} \bkets{
W_{1j}(\alpha,i) - W_{1j}(\beta,i)} = k \cdot W_{1j}(\alpha,i) -
\sum_{\beta}W_{1j}(\beta,i) = k \cdot W_{1j}(\alpha,i)$$ (recall from Fact \ref{factnb} we can assume
that the column sums of $W$ are without loss of generality zero). }


\end{proof}

\begin{proof}[Proof of \clref{cl:binexact}]

\ignore{
\begin{lemma} \label{lem:nb}
If $x$ is $\delta$-unbiased and $\ex \sbkets{\bkets{\sigma( \langle \bmw, \bmx \rangle + \theta' ) -
    \sigma( \langle \bm{u}, \bmx \rangle + \theta }^2} \leq
\epsilon$ then $\|\bmw - \bm{u}|\|_{\infty} \leq O(1) \cdot
e^{\|\bmw\|_1 + |\theta'|} \sqrt{\epsilon / \delta}.$
\end{lemma}}
\newcommand{\bmy}{\widetilde{y}}
For $y \in [k]^{n-1}$ define $p(y) = \langle \bm{w}, \bmy \rangle + \theta'$ and
$q(y) = \langle \bm{u}, \bmy \rangle + \theta''$.   Let $y^{i,a}$ be
the string obtained from $y$ by setting the $i$th coordinate of
$y$ to $a$. Fix an index $i < n$. Then, as in the binary case, by \clref{clm:sigmoidlb} and the fact that
$Y$ is drawn from a $\delta$-unbiased distribution, we have
\begin{align} \label{eqn:main}
\ex_{Y_{i} | Y_{-i}} \sbkets{ \bkets{ \sigma(p(Y)) - \sigma(q(Y))}^2} \geq \delta
\cdot e^{-\|\bmw\|_1 - |\theta'| - 3} \cdot \min \bkets{1, \max_{a}
  |p(Y^{i,a}) - q(Y^{i,a})|^2}.
\end{align}

Since we assumed that the sum of the values in each block of $\bm{w}$
are zero, we have that
\begin{align} \label{eqn1}
\sum_{b} \bkets{p(y^{i,a}) - p(y^{i,b})} & =  \sum_{b}
\bkets{\bm{w}_{i}(a) - \bm{w}_{i}(b)} = k \bm{w}_{i}(a). 
\end{align}

and similarly

\begin{align} \label{eqn2}
\sum_{b} \bkets{q(y^{i,a}) - q(y^{i,b})} & =  \sum_{b}
\bkets{\bm{u}_{i}(a) - \bm{u}_{i}(b)} = k \bm{u}_{i}(a). 
\end{align}

Subtracting equation \ref{eqn2} from equation \ref{eqn1} and applying
the triangle inequality we obtain

\begin{align}
k \cdot | \bkets{\bm{w}_{i}(a) - \bm{u}_{i}(a)} | \leq & \sum_{b} |
p(y^{i,a}) - q(y^{i,a})| + |p(y^{i,b}) - q(y^{i,b})| \\
 \leq & ~ 2k \cdot \max_{b \in [k]} |p(y^{i,b}) - q(y^{i,b})|.
\end{align}

Thus we have $\max_{b \in [k]} |p(y^{i,b}) - q(y^{i,b})| \geq 1/2
\cdot \max_{a \in [k]} |\bm{w}_{i}(a) - \bm{u}_{i}(a)|$.   We also have,
 that $\ex_{Y} \sbkets{ \bkets{ \sigma(p(Y)) - \sigma(q(Y))}^2} \leq
\gamma$.  We therefore must have 

$$ \max_{a \in [k]} |\bm{w}_{i}(a) - \bm{u}_{i}(a)| \leq O(1) \cdot
\sqrt{\gamma / \delta} \cdot e^{\|\bm{w}\|_1 + |\theta'|}. $$
\end{proof}





\bibliographystyle{alpha}
\bibliography{MRF}

\appendix

\section{Hardness of Learning $t$-wise Markov random fields}
Bresler, Gamarnik, and Shah \cite{BGS} showed how to embed parity learning as a Markov random field and showed that a restricted class of algorithms must take time $n^{\Omega(d)}$ to learn degree MRFs defined on degree $d$ graphs.  Here we use the same construction, and for completeness we give a proof.  The conclusion is that, assuming the hardness of learning sparse parities with noise, for degree $d$ graphs, learning $t$-wise MRFs ($t < d$) will require time $n^{\Omega(t)}$.   Our positive results match this conditional lower bound.

\begin{itemize}

\item Let $\chi_{S}:
  \{-1,1\}^n \rightarrow \{-1,1\}^n$ be an unknown parity function on a
  subset $S$, $|S| \leq k$, of $n$ inputs bits (i.e., $f(x) = \prod_{i \in S} x_{i})$. Let ${\cal C}_k$ be the concept class of all parity functions on subsets $S$ of size at most $k$. Let $\D$ be the uniform
  distribution on $\{-1,1\}^n$.  

 \item Fix an unknown $c \in {\cal C}_k$ and consider the following random experiment: $x$ is drawn according to ${\cal D}$ and with probability $1/2 + \eta$ (for some constant $\eta$), the tuple $(x, c(x))$ is output.  With probability $1/2 - \eta$, the tuple $(x,c(x)')$ is output where $c'(x)$ is the complement of $c(x)$.

\item   The $k$-LSPN problem is as follows: Given i.i.d. such tuples as described above, find $h$ such that $\Pr_{x}[h(x) \neq c(x)] \leq\epsilon$.  
 \end{itemize}

 Now we reduce $k$-LSPN to learning the graph structure of a $(k+1)$-wise Markov random field.  Let $S$ denote the $k$ indices of the unknown parity function Let $G$ be a graph on $n+1$ vertices $X_{1},\ldots,X_{n},Y$ equal to a clique on the set of vertices corresponding to set $S$ and vertex $Y$.  Consider the probability distribution

$$ \pr[Z = (x_{1}, \ldots, x_{n}, y)] \propto \exp\bkets{\gamma \chi_{S}(x) y} $$

for some constant $\gamma$. A case analysis shows that $p_1 = \pr[Y = \chi_{S}(X)] \propto e^{\gamma}$ and $p_2 = \pr [Y \neq \chi_{S}(X)] \propto e^{-\gamma}$.  Hence the ratio $p_1/p_2$ is approximately $1 + 2\gamma$.  Since $p_1 + p_2 = 1$, by choosing $\gamma$ to be a sufficiently small, we will have $p_1 \geq 1/2 + \eta$ and $p_2 \leq 1/2 - \eta$ for some small (but constant) $\eta$. 

Further, it is easy to see that the parity of any subset of $X_{i}$'s is unbiased.  By the Vazirani XOR lemma \cite{Goldreich}, this implies that the $X_{i}$s are uniformly distributed.  Therefore, the distribution encoded by this $(k+1)$-wise MRF is precisely the distribution described in the $k$-LSPN problem.  If we could discover the underlying clique in the Markov random field, we would be able to learn the underlying sparse parity.  Hence, learning $k+1$-MRFs is harder than $k$-LSPN.

The current best algorithm for learning $k$-LSPN is due to Valiant \cite{Valiant15} and runs in time $n^{\Omega(0.8k)}$. Any algorithm running in time $n^{o(k)}$ would be a major breakthrough in theoretical computer science.

\end{document}